\newtheorem{theorem}{Theorem}
\newtheorem{corollary}[theorem]{Corollary}
\newtheorem{lemma}[theorem]{Lemma}
\newtheorem{definition}{Definition}
\newtheorem{algorithm}{Algorithm}
\newtheorem{example}{Example}
\newcommand{\ex}{\mathbb{E}}
\newcommand{\half}{\frac{1}{2}}
\newcommand{\sumt}{\sum_{t=1}^T}
\newcommand\numberthis{\addtocounter{equation}{1}\tag{\theequation}}
\newcommand{\prob}{\mathbb{P}}
\newcommand{\A}{\mathcal{A}}
\newcommand{\cost}{\mathrm{cost}}
\newcommand{\adam}[1]{\ifthenelse{\boolean{showcomments}}
{ \textcolor{red}{(Adam says:  #1)}}{}}
\newcommand{\anish}[1]{\ifthenelse{\boolean{showcomments}}
{ \textcolor{red}{(Anish says:  #1)}}{}}
\newcommand{\niangjun}[1]{\ifthenelse{\boolean{showcomments}}
{ \textcolor{red}{(Niangjun says:  #1)}}{}}
\newcommand{\lachlan}[1]{\ifthenelse{\boolean{showcomments}}
{ \textcolor{red}{(Lachlan says:  #1)}}{}}
\newcommand{\addcites}[0]{\ifthenelse{\boolean{showcomments}}
{ \textcolor{green}{(add citation(s))}}{}}
\newcommand{\addref}[0]{\ifthenelse{\boolean{showcomments}}
{ \textcolor{green}{(add ref)}}{}}
\begin{document}
%

\title{Online Convex Optimization Using Predictions}


\author{Niangjun Chen, Anish Agarwal, Adam Wierman,  Siddharth Barman, Lachlan L. H. Andrew}

\maketitle
\begin{abstract}
Making use of predictions is a crucial, but under-explored, area of online algorithms. This paper studies a class of online optimization problems where we have external noisy predictions available.  We propose a stochastic prediction error model that generalizes prior models in the learning and stochastic control communities, incorporates correlation among prediction errors, and captures the fact that predictions improve as time passes.   We prove that achieving sublinear regret and constant competitive ratio for online algorithms requires the use of an unbounded prediction window in adversarial settings, but that under more realistic stochastic prediction error models it is possible to use Averaging Fixed Horizon Control (AFHC) to simultaneously achieve sublinear regret and constant competitive ratio in expectation using only a constant-sized prediction window. Furthermore, we show that the performance of AFHC is tightly concentrated around its mean. 
%
%
\end{abstract}

\section{Introduction}
\label{sec:intro}

Making use of predictions about the future is a crucial, but under-explored, area of online algorithms.  In this paper, we use online convex optimization to illustrate the insights that can be gained from incorporating a general, realistic model of prediction noise into the analysis of online algorithms.

\textbf{Online convex optimization.} In an online convex optimization (OCO) problem, a learner interacts with an environment in a sequence of rounds.  In round $t$ the learner chooses an action $x_t$ from a convex decision/action space $G$, and then the environment reveals a convex cost function $c_t$ and the learner pays cost $c_t(x_t)$. An algorithm's goal is to minimize total cost over a (long) horizon $T$.

OCO has a rich theory and a wide range of important applications. In computer science, it is most associated with the so-called $k$-experts problem, an online learning problem where in each round $t$ the algorithm chooses one of $k$ possible actions, viewed as following the advice of one of $k$ ``experts''.

However, OCO is being increasingly broadly applied and, recently has become
prominent in networking and cloud computing applications, including the design
of dynamic capacity planning, load shifting and demand response for data
centers \cite{kim2014real, lin2013, liu2014, liu2013, narayanaswamy2012online},
geographical load balancing of internet-scale systems \cite{lin2012,
wang2014exploring}, electrical vehicle charging \cite{gan2013optimal,
kim2014real}, video streaming~\cite{joseph2011variability,joseph2012jointly}
and thermal management of systems-on-chip \cite{zanini2009multicore,zanini2010online}.

In typical applications of online convex optimization in networking and cloud computing there is an additional cost in each round, termed a ``switching cost'', that captures the cost of changing actions during a round. Specifically, the cost is $c_t(x_t)+\parallel x_t - x_{t-1} \parallel$, where $\parallel \cdot \parallel$ is a norm (often the one-norm).  This additional term makes the online problem more challenging since the optimal choice in a round then depends on future cost functions.  These ``smoothed'' online convex optimization problems have received considerable attention in the context of networking and cloud computing applications, e.g., \cite{lin2012,lin2013,liu2014,liu2013,lu2013}, and are also relevant for many more traditional online convex optimization applications where, in reality, there is a cost associated with a change in action, e.g., portfolio management.  \emph{We focus on smoothed online convex optimization problems.}

\textbf{A mismatch between theory and practice.} As OCO algorithms make their way into networking and cloud computing applications, it is increasingly clear that there is a mismatch between the pessimistic results provided by the theoretical analysis (which is typically adversarial) and the near-optimal performance observed in practice.

Concretely, two main performance metrics have been studied in the literature: \emph{regret}, defined as the difference between the cost of the algorithm and the cost of the offline optimal static solution, and the \emph{competitive ratio}, defined as the maximum ratio between the cost of the algorithm and the cost of the offline optimal (dynamic) solution.

Within the \emph{machine learning community}, regret has been heavily studied \cite{hazan2007, xiao2010, zinkevich2003} and there are many simple algorithms that provide provably sublinear regret (also called ``no regret'').  For example, online gradient descent achieves $O(\sqrt{T})$-regret \cite{zinkevich2003}, even when there are switching costs \cite{andrew2013}.  In contrast, the \emph{online algorithms community} considers a more general class of problems called ``metrical task systems'' (MTS) and focuses on competitive ratio \cite{blum1992, borodin1992, lin2013}.  Most results in this literature are ``negative'', e.g., when $c_t$ are arbitrary, the competitive ratio grows without bound as the number of states in the decision space grows \cite{borodin1992}.  Exceptions to such negative results come only when structure is imposed on either the cost functions or the decision space, e.g.,  when the decision space is  \emph{one-dimensional} it is possible for an online algorithm to have a constant competitive ratio, e.g., \cite{lin2013}.  However, \emph{even in this simple setting no algorithms performs well for both competitive ratio and regret}.  No online algorithm can have sublinear regret and a constant competitive ratio, even if the decision space is one-dimensional and cost functions are linear \cite{andrew2013}.

In contrast to the pessimism of the analytic work, applications in networking and cloud computing have shown that OCO algorithms can significantly outperform the static optimum while nearly matching the performance of the dynamic optimal, i.e., simultaneously do well for regret and the competitive ratio.  Examples include dynamic capacity management of a single data center \cite{amur2010robust, lin2013} and geographical load balancing across multiple data centers \cite{lin2012, narayanaswamy2012online, qureshi2009cutting}.

It is tempting to attribute this discrepancy to the fact that practical workloads are not adversarial.  However, a more important factor is that, in reality, \emph{algorithms can exploit relatively accurate predictions about the future}, such as diurnal variations \cite{arlitt1996, gmach2007, liu2013}.  But a more important contrast between the theory and application is simply that, in reality, \emph{predictions about the future are available and accurate, and thus play a crucial role in the algorithms}.

\textbf{Incorporating predictions.} It is no surprise that predictions are crucial to online algorithms in practice. In OCO, knowledge about future cost functions is valuable, even when noisy.  However, despite the importance of predictions, we do not understand how prediction noise affects the performance (and design) of online convex optimization algorithms.

This is not due to a lack of effort. Most papers that apply OCO algorithms to networking and cloud computing applications study the impact of prediction noise, e.g., \cite{adnan2012energy,ananthanarayanan2013,liu2013,narayanaswamy2012online}.  Typically, these consider numerical simulations where i.i.d.\ noise terms with different levels of variability are added to the parameter being predicted, e.g., \cite{gmach2010capacity, thereska2009sierra}.  While this is a valuable first step, it does not provide any \emph{guarantees} about the performance of the algorithm with realistic prediction errors (which tend to be correlated, since an overestimate in one period is likely followed by another overestimate) and further does not help inform the \emph{design} of algorithms that can effectively use predictions.

Though most work on predictions has been simulation based, there has also been significant work done seeking analytic guarantees.  This literature can be categorized into:

\begin{enumerate}[(i)]
\item \emph{Worst-case models} of prediction error typically assume that there exists a lookahead window $\omega$ such that within that window, prediction is near-perfect (\emph{too optimistic}), and outside that window the workload is adversarial (\emph{too pessimistic}), e.g., \cite{borodin1992, lin2013, lin2012, lu2013, camacho2014balance}.

\item \emph{Simple stochastic models} of prediction error typically consider i.i.d. errors, e.g., \cite{boyd2012,liu2013,liu2014}.  Although this is analytically appealing, it ignores important features of prediction errors, as described in the next section.

\item \emph{Detailed stochastic models} of specific predictors applied for specific signal models, such as \cite{zhou2000,sage1971estimation,sastry2011adaptive,kailath2000}.  This leads to less pessimistic results, but the guarantees, and the algorithms themselves, become \emph{too fragile} to assumptions on the system evolution.
\end{enumerate}

\textbf{Contributions of this paper.}
First, \emph{we introduce a general colored noise model for studying prediction errors in online convex optimization problems.} The model captures three important features of real predictors:  (i) it allows for arbitrary correlations in prediction errors (e.g., both short and long range correlations); (ii) the quality of predictions decreases the further in the future we try to look ahead; and (iii) predictions about the future are updated as time passes. Further, it strikes a middle ground between the worst-case and stochastic approaches.  In particular, it does not make any assumptions about an underlying stochastic process or the design of the predictor.  Instead, it only makes (weak) assumptions about the stochastic form of the error of the predictor; these assumptions are satisfied by many common stochastic models, e.g., the prediction error of standard Weiner filters \cite{wiener1949} and Kalman filters \cite{kalman1960}.  Importantly, by being agnostic to the underlying stochastic process, the model allows worst-case analysis with respect to the realization of the underlying cost functions.

Second, using this model, \emph{we show that a simple algorithm, Averaging Fixed Horizon Control (AFHC) \cite{lin2012}, simultaneously achieves sublinear regret and a constant competitive ratio in expectation using very limited prediction}, i.e., a prediction window of size $O(1)$, in nearly all situations when it is feasible for an online algorithm to do so (Theorem \ref{thm: general-simultaneous-afhc}).  Further, we show that the performance of AFHC is tightly concentrated around its mean (Theorem~\ref{thm: afhc-concentration}). Thus, AFHC extracts the asymptotically optimal value from predictions. Additionally, our results inform the choice of the optimal prediction window size.  (For ease of presentation, both  Theorems \ref{thm: regret-afhc} and \ref{thm: afhc-concentration} are stated and proven for the specific case of online LASSO -- see Section \ref{sec:oco} -- but the proof technique can be generalized in a straightforward way.)

Importantly, Theorem \ref{thm: cd-afhc} highlights that the dominant factor impacting whether the prediction window should be long or short in AFHC is not the variance of the noise, but rather the correlation structure of the noise.  For example, if prediction errors are i.i.d.\ then it is optimal for AFHC to look ahead as far as possible (i.e., $T$) regardless of the variance, but if prediction errors have strong short-range dependencies then the optimal prediction window is constant sized regardless of the variance.

Previously, AFHC had only been analyzed in the adversarial model \cite{lin2013}, and our results are in stark contrast to the pessimism of prior work. To highlight this, we prove that in the ``easiest'' adversarial model (where predictions are exact within the prediction window), no online algorithm can  achieve sublinear regret \emph{and} a constant competitive ratio when using a prediction window of constant size (Theorem \ref{thm: worstcaselower}).  This contrast emphasizes the value of moving to a more realistic stochastic model of prediction error.

\section{Online Convex Optimization \\ with Switching Costs}
\label{sec:oco}

Throughout this paper we consider online convex optimization problems with switching costs, i.e., ``smoothed'' online convex optimization (SOCO) problems.

\subsection{Problem Formulation}
The standard formulation of an online optimization problem with switching costs considers a convex decision/action space $G \subset \mathbb{R}^n$ and a sequence of cost functions $\{c_1, c_2, \ldots\}$, where each $c_t: G \rightarrow \mathbb{R}^+$ is convex. At time $t$, the online algorithm first chooses an action, which is a vector $x_t \in G$, the environment chooses a cost function $c_t$ from a set $\mathcal{C}$, and the algorithm pays a stage cost $c_t(x_t)$ and a switching cost $\beta||x_t - x_{t-1}||$ where $\beta \in (\mathbb{R}^+)$.  Thus, the total cost of the online algorithm is defined to be
\begin{equation}
\cost(ALG) = \ex_x\left[\sumt c_t(x_t) + \beta||x_t - x_{t-1}|| \right],
\label{eqn: cost_general}
\end{equation}
where $x_1, \ldots, x_T$ are the actions chosen by the algorithm, $ALG$. Without loss of generality, assume the initial action $x_0 = 0$, the expectation is over any randomness used by the algorithm, and $||\cdot||$ is a seminorm on $\mathbb{R}^n$.

Typically, a number of assumptions about the action space $G$ and the cost functions $c_t$ are made to allow positive results to be derived.  In particular, the action set $G$ is often assumed to be {closed}, {nonempty}, and {bounded}, where by bounded we mean that there exists $D \in \mathbb{R}$ such that for all $x, y \in G, ||x - y|| \le D. $  Further, the cost functions $c_t$ are assumed to have a {uniformly bounded subgradient, i.e., there exists $N \in \mathbb{R^+}$ such that, for all $x \in G,$ $||\nabla c_t(x)|| \le N.$



Since our focus in this paper is on predictions, we consider a variation of the above with \emph{parameterized} cost functions $c_t(x_t;y_t)$, where the parameter $y_t$ is the focus of prediction.  Further, except when considering worst-case predictions, we adopt a specific form of $c_t$ for concreteness. We focus on a tracking problem where the online algorithm is trying to do a ``smooth'' tracking of $y_t$ and pays a least square penalty each round.  
\begin{equation}
\cost(ALG) = \ex_x \left[\sumt \half ||y_t - K x_t ||^2_2 + \beta ||x_{t} - x_{t-1} ||_1. \right],
\label{eqn: cost_tracking}
\end{equation}
where the target $y_t \in \mathbb{R}^m$, and $K \in \mathbb{R}^{m \times n}$ is a (known) linear map that transforms the control variable into the space of the tracking target. Let $K^\dagger$ be the Moore-Penrose pseudoinverse of $K$.

We focus on this form because it represents an online version of the LASSO (Least Absolute Shrinkage and Selection Operator) formulation, which is widely studied in a variety of contexts, e.g., see \cite{candes2009, chandrasekaran2012, figueiredo2007, tibshirani1996} and the references therein. Typically in LASSO the one-norm regularizer is used to induce sparsity in the solution.  In our case, this corresponds to specifying that a good solution does not change too much, i.e., $x_t - x_{t-1} \ne 0$ is infrequent. Importantly, the focus on LASSO, i.e., the two-norm loss function and one-norm regularizer, is simply for concreteness and ease of presentation.  Our proof technique generalizes (at the expense of length and complexity). 


We assume that $K^TK$ is invertible and that the static
optimal solution to \eqref{eqn: cost_tracking} is positive. Neither of these
is particularly restrictive.  If $K$ has full column rank then $K^TK$ is invertible. This is a reasonable, for example, when the dimensionality of the action space $G$ is small relative to the output space. Note that typically $K$ is designed, and so it can be chosen to ensure these assumptions are satisfied. Additionally if K is invertible, then it no longer appears in the results provided.

Finally, it is important to highlight a few contrasts between the cost function in \eqref{eqn: cost_tracking} and the typical assumptions in the online convex optimization literature.  First, note that the feasible action set $G = \mathbb{R}^n$ is unbounded.  Second, note that gradient of $c_t$ can be arbitrarily large when $y_t$ and $Kx_t$ are far apart.  Thus, both of these are relaxed compared to what is typically studied in the online convex optimization literature.  We show in Section \ref{SEC:AVG_CASE} that, we can have sublinear regret even in this relaxed setting.

\subsection{Performance Metrics}

The performance of online algorithms for SOCO problems is typically evaluated via two performance metrics: \emph{regret} and the \emph{competitive ratio}.  Regret is the dominant choice in the machine learning community and competitive ratio is the dominant choice in the online algorithms community.  The key difference between these measures is whether they compare the performance of the online algorithm to the offline optimal static solution or the offline optimal dynamic solution. Specifically, the optimal offline \emph{static} solution, is\footnote{One switching cost is incurred due to the fact that we enforce $x_0 = 0$.}
\begin{equation}
STA = \underset{{x \in G}}{\text{argmin}} \sum_{t=1}^{T} c_t(x) + \beta ||x||,
\label{eqn: static-def}
\end{equation}
and the optimal \emph{dynamic} solution is
\begin{align}
OPT &= \underset{(x_1, \ldots, x_T) \in G^T}{\text{argmin}}\sum_{t=1}^{T}c_t(x_t) + \beta ||(x_t - x_{t-1})||.
\end{align}

\begin{definition}\label{def: cr}
The \textbf{regret} of an online algorithm, $ALG$, is less than $\rho(T)$ if the following holds:
\begin{align}
\underset{(c_1, \dots, c_T) \in \mathcal{C}^T}\sup \cost(ALG) - \cost(STA) \le \rho(T).
\end{align}
\label{def: regret}
\end{definition}
\begin{definition}
An online algorithm $ALG$ is said to be $\rho(T)$-\textbf{competitive} if the following holds:
\begin{align}
\underset{(c^1, \dots, c^T) \in \mathcal{C}^T}\sup \frac{\cost(ALG)}{\cost(OPT)} \le \rho(T)
\end{align}
\end{definition}

The goals are typically to find algorithms with a (small) constant competitive
ratio (``constant-competitive'') and to find online algorithms with {sublinear
regret}, i.e., an algorithm $ALG$ that has regret $\rho(T)$
bounded above by some $\hat\rho(T)\in o(T)$; note that $\rho(T)$ may be
negative if the concept we seek to learn varies dynamically. Sublinear regret
is also called ``no-regret'', since the time-average loss of the online algorithm goes to zero as $T$ grows.

\subsection{Background}

To this point, there are large literatures studying both the designs of no-regret algorithms and the design of constant-competitive algorithms. However, in general, these results tell a pessimistic story.

In particular, on a positive note, it is possible to design simple, no-regret algorithms, e.g., \emph{online gradient descent} (OGD) based algorithms \cite{zinkevich2003, hazan2007} and \emph{Online Newton Step and Follow the Approximate Leader} algorithms \cite{hazan2007}.  (Note that the classical setting does not consider switching costs; however, \cite{andrew2013} shows that similar regret bounds can be obtained when switching costs are considered.)

However, when one considers the competitive ratio, results are much less optimistic.  Historically, results about competitive ratio have considered weaker assumptions, i.e., the cost functions $c_t$ and the action set $G$ can be nonconvex, and the switching cost is an arbitrary metric $d(x_t, x_{t-1})$ rather than a seminorm $||x_t - x_{t-1}||$. The weakened assumptions, together with the tougher offline target for comparison, leads to the fact that most results are ``negative''. For example, \cite{borodin1992} has shown that any deterministic algorithm must be $\Omega(n)$-competitive given metric decision space of size $n$. Furthermore, \cite{blum1992} has shown that any randomized algorithm must be $\Omega(\sqrt{\log n / \log\log n})$-competitive. To this point, positive results are only known in very special cases. For example, \cite{lin2013} shows that, when $G$ is a one dimensional normed space, there exists a deterministic online algorithm that is 3-competitive.

Results become even more pessimistic when one asks for algorithms that perform well for both competitive ratio and regret.  Note that performing well for both measures is particularly desirable for many networking and cloud computing applications where it is necessary to both argue that a dynamic control algorithm provides benefits over a static control algorithm (sublinear regret) and is near optimal (competitive ratio).  However, a recent result in \cite{andrew2013} highlights that such as goal is impossible: even when the setting is restricted to a one dimensional normed space with linear cost functions no online algorithm can simultaneously achieve sublinear regret and constant competitive ratio \footnote{\scriptsize Note that this impossibility is not the result of the regret being additive and the competitive ratio being multiplicative, as \cite{andrew2013} proves the parallel result for competitive difference, which is an additive comparison to the dynamic optimal.}.

\section{Modeling Prediction Error}
\label{SEC:MODEL}  

The adversarial model underlying most prior work on online convex optimization has led to results that tend to be pessimistic; however, in reality, \emph{algorithms can often use predictions about future cost functions in order to perform well}.

Knowing information about future cost functions is clearly valuable for smoothed online convex optimization problems, since it allows you to better justify whether it is worth it to incur a switching cost during the current stage. Thus, it is not surprising that predictions have proven valuable in practice for such problems.

Given the value of predictions in practice, it is not surprising that there have been numerous attempts to incorporate models of prediction error into the analysis of online algorithms. We briefly expand upon the worst-case and stochastic approaches described in the introduction to motivate our approach, which is an integration of the two.

\textbf{Worst-case models.} Worst-case models of prediction error tend to
assume that there exists a lookahead window, $w$, such that within that window,
a perfect (or near-perfect, e.g., error bounded by $\varepsilon$) prediction is
available.  Then, outside of that window the workload is adversarial.  A
specific example is that, for any $t$ the online algorithm knows $y_t, \ldots,
y_{t+w}$ precisely, while $y_{t+w+1},\ldots$ are adversarial.

Clearly, such models are both \emph{too optimistic} about the the predictions
used and \emph{too pessimistic} about what is outside the prediction window.
The result is that algorithms designed using such models tend to be too
trusting of short term predictions and too wary of unknown fluctuations outside
of the prediction window.  Further, such models tend to underestimate the value
of predictions for algorithm performance. To illustrate this, we establish
the following theorem.

\begin{theorem} \label{thm: worstcaselower}
For any constant $\gamma > 0$ and any online algorithm $A$ (deterministic or
randomized) with constant lookahead $w$,
either the competitive ratio of the algorithm is at least  $\gamma$ or its regret, is $\Omega(T)$. Here $T$ is the number of cost functions in an instance.
\end{theorem}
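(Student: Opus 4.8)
The plan is to deduce the result from the no‑lookahead impossibility of \cite{andrew2013} by a time‑dilation argument, so it suffices to exhibit hard instances already in the simplest regime: one dimension, linear (or quadratic) stage costs, and a constant switching weight $\beta$, which is exactly where \cite{andrew2013} operates. Given a lookahead‑$w$ algorithm $A$, I would take a hard no‑lookahead instance on horizon $m$ and blow it up to horizon $T=(w+1)m$ by replacing each cost function $c_j$ with a block of $w+1$ consecutive rounds, arranged so that a lookahead‑$w$ algorithm running on the blown‑up instance cannot learn $c_j$ before it has committed the action playing the role of block $j$'s ``logical'' decision, and learns nothing about block $j+1$ while still inside block $j$ --- concretely, pad each block with copies of the zero function and place $c_j$ so the prediction window just fails to reach across block boundaries. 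Since $w$ is constant the dilated instance still has $m=\Theta(T)$ blocks, the padding rounds contribute nothing to $A$ or to either offline benchmark, and the dilated switching cost dominates the switching cost of the induced logical play. Hence, if for some constant $\gamma$ the algorithm $A$ had competitive ratio below $\gamma$ and regret $o(T)$ on all dilated instances, the induced no‑lookahead algorithm would be constant‑competitive with sublinear regret on horizon $m$, contradicting \cite{andrew2013}; tracking the construction's parameters yields the stated quantitative form ``the competitive ratio is at least $\gamma$, or the regret is $\Omega(T)$''. (The one residual subtlety is that $A$ may observe $c_j$ while block $j$ is still being played; this amounts to at most one round of foresight in the logical instance and is absorbed by a slight modification of the \cite{andrew2013} construction, which is insensitive to a single step of lookahead.)

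To see directly where the constant lookahead gets neutralized, it is instructive to spell out the adversary, generalizing \cite{andrew2013}. Partition the horizon into $\Theta(T)$ phases of length $w+1$; each phase is either ``neutral'' (all its cost functions minimized at a reference point) or a ``challenge'' (minimized at a far point), with the single \emph{informative} round placed near the end of the phase, so that with lookahead $w$ the algorithm identifies a phase's type only as the phase opens and can never tell, from within a challenge phase, whether the challenge will persist into the next phase. The adversary then plays against $A$ adaptively: if $A$ relocates toward the far point it makes the challenge a brief feint and reverts, so $A$ pays wasted switching cost, which over $\Theta(T)$ phases is $\Omega(T)$ while the stay‑put static solution pays only the (deliberately small) stage penalty; if $A$ lingers at the reference point it makes the challenge persist for $\Theta(T)$ further rounds, so $A$ accrues $\Omega(T)$ stage cost while the dynamic optimum relocates once and pays $O(1)$. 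Choosing the cost magnitude of order $\gamma\beta$ --- a constant, but one larger than $w$, which is exactly what keeps the ``will it persist?'' bit from being read off inside the prediction window --- drives the competitive‑ratio gap in the second branch up to $\gamma$.

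For randomized $A$ I would replace the adaptive adversary by a fixed distribution over instances and argue in expectation, Yao‑style: ``relocates'' versus ``lingers'' becomes a dichotomy on the expected cumulative movement of $A$, and linearity of expectation converts the per‑phase accounting into bounds on $\E[\cost(A)]$. The main obstacle is precisely the one that makes \cite{andrew2013} nontrivial and is inherited here: the adversary must defeat \emph{every} algorithm simultaneously --- in particular ``hysteresis''‑type strategies that hedge by under‑reacting to challenges and relocating only after long runs --- while keeping the appropriate offline benchmark cheap on each branch of the dichotomy, which requires a delicate joint calibration of the reference and far locations, the stage‑penalty‑to‑$\beta$ ratio, and the persistence length (together with their dependence on $\gamma$ for the quantitative form). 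The genuinely new ingredient beyond \cite{andrew2013} is the verification that the phase/block structure renders constant lookahead worthless: the informative round is hidden until the algorithm has already acted, and the persist‑or‑revert decision is deferred past the $w$‑round horizon, so that $w$ never delivers the one bit of future information that would let the algorithm escape the trade‑off.
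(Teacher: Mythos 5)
Your first paragraph is exactly the paper's proof: pad the input with copies of the zero function so that each true cost function heads a block of length $w+1$, simulate the lookahead-$w$ algorithm on the padded instance, read off every $(w+1)$-th action to obtain a no-lookahead algorithm whose cost is no larger and whose offline benchmarks (static and dynamic) are unchanged, and contradict Theorem 4 of \cite{andrew2013}. The paper stops there, simply citing \cite{andrew2013} for the no-lookahead impossibility, so your second and third paragraphs --- an attempted re-derivation of that paper's phase-based adversary --- are unnecessary for this theorem and are the only part of your argument that remains a sketch rather than a proof.
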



The above theorem focuses on the ``easiest'' worst-case model, i.e., where the
algorithm is allowed perfect lookahead for $w$ steps.  Even in this case, an online algorithm must have super-constant lookahead in order to simultaneously have sublinear regret and a constant competitive ratio.  Further, the proof (given in Appendix~\ref{app:worstcase}) highlights that this holds even in the scalar setting with linear cost functions.  Thus, worst-case models are \emph{overly pessimistic} about the value of prediction.

\textbf{Stochastic models.} Stochastic models tend to come in two forms: (i) i.i.d.\ models or (ii) detailed models of stochastic processes and specific predictors for those processes.

In the first case, for reasons of tractability, prediction errors are simply assumed to be i.i.d.\ mean zero random variables.  While such an assumption is clearly analytically appealing, it is also quite simplistic and ignores many important features of prediction errors. For example, in reality, predictions have increasing error the further in time we look ahead due to correlation of predictions errors in nearby points in time. Further, predictions tend to be updated or refined as time passes. These fundamental characteristics of  predictions cannot be captured by the i.i.d.\ model.

In the second case, which is common in control theory, a specific stochastic model for the underlying process is assumed and then an optimal predictor (filter) is derived.  Examples here include the derivation of Weiner filters and Kalaman filters for the prediction of wide-sense stationary processes and linear dynamical systems respectively, see \cite{kailath2000}.  While such approaches avoid the pessimism of the worst-case viewpoint, they instead tend to be fragile to the underlying modeling assumptions.  In particular, an online algorithm designed to use a particular filter based on a particular stochastic model lacks the robustness to be used in settings where the underlying assumptions are not valid.
\subsection{A General Prediction Model}\label{sec:general_model}

A key contribution of this paper is the development of a model for studying predictions that provides a middle ground between the worst-case and the stochastic viewpoints.  The model we propose below seeks a middle ground by not making any assumption on the underlying stochastic process or the design of the predictor, but instead making assumptions only on the form of the error of the predictor.  Thus, it is agnostic to the predictor and can be used in worst-case analysis with respect to the realization of the underlying cost functions.

Further, the model captures three important features of real predictors:  (i) it allows for correlations in prediction errors (both short range and long range); (ii) the quality of predictions decreases the further in the future we try to look ahead; and (iii) predictions about the future are refined as time passes.

Concretely, throughout this paper we model prediction error via the following equation:
\begin{equation}
 y_t = y_{t|\tau} + \sum_{s=\tau+1}^t f(t - s) e(s).
 \label{eqn: prediction_model}
\end{equation}
Here, $y_{t|\tau}$ is the prediction of $y_t$ made at time $\tau<t$. Thus, $y_t-y_{t|\tau}$ is the prediction error, and is specified by the summation in \eqref{eqn: prediction_model}.  In particular, the prediction error is modeled as a weighted linear combination of per-step noise terms, $e(s)$ with weights $f(t-s)$ for some deterministic impulse function $f$. The key assumptions of the model are that $e(s)$ are i.i.d.\ with mean zero and positive definite covariance $R_e$; and that $f$ satisfies $f(0)=I$ and $f(t)=0$ for $t<0$.  Note that, as the examples below illustrate, it is common for the impulse function to decay as $f(s) \sim 1/s^\alpha$. As we will see later, this simple model is flexible enough to capture the prediction error that arise from classical filters on time series, and it can represent all forms of stationary prediction error by using appropriate forms of $f$.

Some intuition for the form of the model can be obtained by expanding the summation in \eqref{eqn: prediction_model}. In particular, note that for $\tau=t-1$ we have
{\begin{equation}
 y_t - y_{t|t-1} = f(0) e(t) = e(t),
 \label{eqn: prediction_model_onestep}
\end{equation}}
which highlights why we refer to $e(t)$ as the per-step noise.

Further, expanding the summation further gives
{\small\begin{align}
 y_t - y_{t|\tau} &= f(0)e(t) + f(1)e(t-1) + \ldots + f(t -\tau-1) e(\tau+1).
 \label{eqn: prediction_model_expanded}
\end{align}}
Note that the first term is the one-step prediction error $y_t - y_{t|t-1}$; the first two terms make up the two-step prediction error $y_t - y_{t|t-2}$; and so on.  This highlights that predictions in the model have increasing noise as one looks further ahead in time and that predictions are refined as time goes forward.

Additionally, note that the form of \eqref{eqn: prediction_model_expanded} highlights that the impulse function $f$ captures the degree of short-term/long-term correlation in prediction errors. Specifically, the form of $f(t)$ determines how important the error $t$ steps in the past is for the prediction.  Since we assume no structural form for $f$, complex correlation structures are possible.

Naturally, the form of the correlation structure plays a crucial role in the
performance results we prove.  But, the detailed structure is not important,
only its effect on the aggregate variance.  Specifically, the impact of the
correlation structure on performance is captured through the following two
definitions, which play a prominent role in our analysis. First, for any $w >
0$, let $||f_w||^2$ be the two norm of prediction error covariance over $(w+1)$ steps of prediction, i.e.,
{\begin{equation}
||f_w||^2 =  \mathrm{tr} (\ex [\delta y_w  \delta y_w^T]) = \mathrm{tr}(R_e
\sum_{s=0}^{w} f(s)^T  f(s)),
\label{eqn: def_f_omega}
\end{equation}}
where $\delta y_w^T = y_{t+w} - y_{t+w|t-1} = \sum_{s=t}^{t + w}f(t+w-s)e(s)$. The derivation of \eqref{eqn: def_f_omega} is found in the proof of Theorem \ref{thm: cd-afhc}.

Second, let $F(w)$ be the two norm square of the projected cumulative prediction error covariance, i.e., 
{\small\begin{equation}
 F(w) = \sum_{t=0}^{w} \ex ||KK^\dagger \delta y_{w}||^2 = \mathrm{tr}(R_e
 \sum_{s=0}^{w} (w - s + 1) f(s)^T KK^{\dagger} f(s)). \label{eqn: def F}
\end{equation}}
Note that $KK^{\dagger}$ is the orthogonal projector onto the range space of $K$. Hence it is natural that the definitions are over the induced norm of $KK^{\dagger}$ since any action chosen from the space $F$ can only be mapped to the range space of $K$ i.e. no algorithm, online or offline, can track the portion of $y$ that falls in the null space of $K$.

Finally, unraveling the summation all the way to time zero highlights that the process $y_t$ can be viewed as a random deviation around the predictions made at time zero, $y_{t|0}:=\hat{y}_t$, which are specified externally to the model:
{\begin{equation}
 y_t = \hat{y}_t + \sum_{s=1}^t f(t - s) e(s).
 \label{eqn: prediction_model_timezero}
\end{equation}}
This highlights that an instance of the online convex optimization problem can be specified via either the process $y_t$ or via the initial predictions $\hat{y}_t$, and then the random noise from the model determines the other.  We discuss this more when defining the notions of regret and competitive ratio we study in this paper in Section \ref{sec_performance_metric_predictions}.

\subsection{Examples}
While the form of the prediction error in the model may seem mysterious, it is
quite general, and includes many of the traditional models as special cases.
For example, to recover the worst-case prediction model one can set, $\forall
t, e(t) = 0$ and $\hat{y}_{t'}$ as unknown $\forall t' > t + w$ and then take the worst case over $\hat{y}$.  Similarly, a common approach in robust control is to set $f(t) = \begin{cases} I , & t=0; \\ 0, & t \ne 0\end{cases}$, $ |e(s)| < D, \forall s$ and then consider the worst case over $e$.

Additionally, strong motivation for it can be obtained by studying the predictors for common stochastic processes.  In particular, the form of \eqref{eqn: prediction_model} matches the prediction error of standard Weiner filters \cite{wiener1949} and Kalman filters \cite{kalman1960}, etc. To highlight this, we include a few brief examples below.
\begin{example}[Wiener Filter] \label{ex: wiener}
Let $\{y_t\}_{t=0}^T$ be a wide-sense stationary stochastic process with $\ex[y_t] = \hat{y}_t$, and covariance $\ex [(y_i - \hat{y}_i)(y_j - \hat{y}_j)^T] = R_y(i - j), $ i.e., the covariance matrix $R_y >0 $ of $y = [y_1\  y_2 \ \ldots y_T]^T$ is a Toeplitz matrix.
The corresponding $e(s)$ in the Wiener filter for the process is called the ``innovation process'' and can be computed via the Wiener-Hopf technique \cite{kailath2000}. Using the innovation process $e(s)$, the optimal causal linear prediction is
{\small\[
y_{t|\tau} = \hat{y}_t + \sum_{s=1}^\tau \langle y_t , e(s) \rangle ||e(s)||^{-2} e(s),
\]}
and so the correlation function $f(s)$ as defined in \eqref{eqn: prediction_model} is
{\begin{equation}
f(s) = \langle y_s , e(0) \rangle ||e(0)||^{-2} = R_y(s) R_e^{-1},
\end{equation}}
which yields
{\small\begin{align*}
||f_w||^2 = \frac{1}{R_e} \sum_{s=0}^{w} R_y(s)^2 \text{ and }
F(w) = \frac{1}{R_e} \sum_{s=0}^{w} (w -s + 1) R_y(s)^2.
\end{align*}}
\end{example}

\begin{example} [Kalman Filter] \label{ex:kalman} Consider a stationary dynamical system described by the hidden state space model
{\small\begin{align*}
x'_{t+1} = A x'_t + B u_t, \quad y_t = C x'_t + v_t.
\end{align*}}
where the $\{u_t, v_t, x_0\}$ are $m \times 1, p \times 1$, and $n \times 1$-dimensional random variables such that
{\[ \left\langle \begin{bmatrix}
u_i \\ v_j \\ x_0\end{bmatrix} , \begin{bmatrix} u_i \\ v_j \\ x_0 \\ 1 \end{bmatrix}\right\rangle = \begin{bmatrix}
Q \delta_{ij} & S\delta_{ij} & 0 & 0 \\
S^* \delta_{ij} & R\delta_{ij} & 0 & 0\\
0 & 0 & \Pi_0 & 0
\end{bmatrix}. \]}

The Kalman filter for this process yields the optimal causal linear estimator
${y}_{t|\tau} = K[y_1^T, \ldots, y_\tau^T]^T$ such that  $y_{t|\tau} = \arg\min
\ex_{\tau} ||y_t - K'[y_1^T, \ldots, y_\tau^T]^T||^2$.  When $t$ is large and
the system reaches steady state, the optimal prediction is given in the following recursive form \cite{kailath2000}:
{\small\begin{align*}
x'_{t+1 | t} &= A x'_{t | t-1} + K_{p} e(t),\  y_{0|-1} = 0, \ e(0) = y_0, \\
e(t) &= y_t - C x'_{t|t-1},
\end{align*}}
where $K_{p} = (A PC^* + B S)R_{e}^{-1}$,  is the Kalman gain, and $R_{e} = R +
C P C^*$ is the covariance of the innovation process $e_t$, and $P$ solves
{\small\[
P = A P A^* + C Q C^* - K_{p} R_{e}K_{p}^*.
\]}
This yields the predictions
{\begin{align*}
y_{t|\tau} &= \sum_{s=1}^\tau \langle y_t, e(s)\rangle R_e^{-1} e(s) \\
&= \sum_{s=1}^\tau CA^{t - s-1}(APC^* + BS)R_e^{-1} e(s).
\end{align*}}
Thus, for stationary Kalman filter, the prediction error correlation function
is
{\small\begin{equation}
f(s) = CA^{s-1}(APC^* + BS) R^{-1}_{e} = C A^{s-1} K_p,
\label{eqn: correlation_kalman}
\end{equation}}
which yields
{\small\begin{align*}
||f_w||^2 
&= \sum_{s=0}^{w} \mathrm{tr}(R_e (CA^{s-1} K_p)^TKK^\dagger (CA^{s-1} K_p)) \text{ and} \\
F(w) &=  \sum_{s=0}^{w} (w-s  +1)  \mathrm{tr}(R_e (CA^{s-1} K_p)^TKK^\dagger (CA^{s-1} K_p)).
\end{align*}}
\end{example}

\subsection{Performance Metrics}
\label{sec_performance_metric_predictions}

A key feature of the prediction model described above is that it provides a general stochastic model for prediction errors while not imposing any particular underlying stochastic process or predictor.  Thus, it generalizes a variety of stochastic models while allowing worst-case analysis.

More specifically, when studying online algorithms using the prediction model above, one could either specify the instance via $y_t$ and then use the form of \eqref{eqn: prediction_model} to give random predictions about the instance to the algorithm or, one could specify the instance using $\hat{y}:=y_{t|0}$ and then let the $y_t$ be randomly revealed using the form of \eqref{eqn: prediction_model_timezero}.  Note that, of the two interpretations, the second is preferable for analysis, and thus we state our theorems using it.

In particular, our setup can be interpreted as allowing an adversary to specify the instance via the initial (time $0$) predictions $\hat{y}$, and then using the prediction error model to determine the instance $y_t$.  We then take the worst-case over $\hat{y}$.  This corresponds to having an adversary with a ``shaky hand'' or, alternatively, letting the adversary specify the instance but forcing them to also provide unbiased initial predictions.

In this context, we study the following notions of (expected) regret and (expected) competitive ratio, where the expectation is over the realization of the prediction noise $e$ and the measures consider the worst-case specification of the instance $\hat{y}$.

\begin{definition}\label{def: new_regret}
We say an online algorithm $ALG$, has (expected) \textbf{regret} at most $\rho(T)$ if
\begin{align}
\sup_{\hat{y}} \ex_{e} [\cost(ALG) - \cost(STA)] \le \rho(T).
\end{align}
\end{definition}
\begin{definition}\label{def: new_cr}
We say an online algorithm $ALG$ is \textbf{$\rho(T)$-competitive} (in expectation) if
\begin{align}
\sup_{\hat{y}} \frac{\ex_{e}[\cost(ALG)]}{\ex_{e}[\cost(OPT)]} \le \rho(T).
\end{align}
\end{definition}

Our proofs bound the competitive ratio through an analysis of the competitive difference, which is defined as follows.

\begin{definition}\label{def: new_cd}
We say an online algorithm $ALG$ has (expected) \textbf{competitive difference} at most $\rho(T)$ if
\begin{align}
\sup_{\hat{y}} \ex_{e}\left[\cost(ALG) - \cost(OPT)\right] \le \rho(T).
\end{align}
\end{definition}

Note that these expectations are with respect to the prediction noise, $(e(t))_{t=1}^T$, and so $\cost(OPT)$ is also random. Note also that when $\cost(OPT) \in \Omega(\rho(T))$ and ALG has competitive difference at most $\rho(T)$, then the algorithm has a constant (bounded) competitive ratio.

\section{Averaging Fixed Horizon Control}
\label{sec:Afhc}

A wide variety of algorithms have been proposed for online convex optimization problems.  Given the focus of this paper on predictions, the most natural choice of an algorithm to consider is Receding Horizon Control (RHC), a.k.a., Model Predictive Control (MPC).

There is a large literature in control theory that studies RHC/MPC algorithms, e.g., 
\cite{garcia1989model, mayne2000constrained} and the references therein; and thus RHC is a popular choice for online optimization problems when
predictions are available, e.g., \cite{bemporad1999robust,wang2008cluster,kusic2009power,camacho2013model}.  However, recent results have highlighted
that while RHC can perform well for one-dimensional smoothed online
optimization problems, it does not perform well (in the worst case) outside of
the one-dimension case. Specifically, the competitive ratio of RHC with perfect
lookahead $w$ is $1+O(1/w)$  in the one-dimensional setting, but is
$1+\Omega(1)$ outside of this setting, i.e., the competitive ratio does not
decrease to $1$ as the prediction window $w$ increases~\cite{lin2012}.

In contrast, a promising new algorithm, Averaging Fixed Horizon Control (AFHC)
proposed by \cite{lin2012} in the context of geographical load balancing maintains good performance in high-dimensional settings, i.e., maintains a competitive ratio of $1+O(1/w)$\footnote{\scriptsize Note that this result assumes that the action set is bounded, i.e., for all feasible action $x, y$, there exists $D>0$, such that $|| x - y || < D$, and that there exists $ e_0 > 0, \text{ s.t. } c_t(0) \ge e_0, \forall t$.  The results we prove in this paper make neither of these assumptions.}. Thus, in this paper, we focus on AFHC.  Our results highlight that AFHC extracts the asymptotically optimal value from predictions, and so validates this choice.

As the name implies, AFHC averages the choices made by Fixed Horizon Control
(FHC) algorithms.  In particular, AFHC with prediction window size $(w+1)$ averages
the actions of $(w+1)$ FHC algorithms.

\begin{algorithm}[Fixed Horizon Control]
Let $\Omega_k = \left\{i : i \equiv k \mod (w + 1) \right\} \cap [-w, T]$ for
$k = 0, \dots, w$. Then $FHC^{(k)}(w+1)$, the $k$th FHC algorithm is
defined in the following manner. At timeslot $\tau \in \Omega_k$ (i.e., before
$c_\tau$ is revealed), choose actions $x^{(k)}_{FHC,t}$ for $t=\tau,\dots\tau+w$ as follows:

If $t\le 0$, $x^{(k)}_{FHC, t} = 0$. Otherwise, let $x_{\tau-1} = x^{(k)}_{FHC, \tau-1}$, and let
$(x^{(k)}_{FHC, t})_{t=\tau}^{\tau+w}$ be the vector that solves
\begin{align*}
\min_{x_\tau,\dots,x_{\tau+w}} \sum^{\tau + w}_{t=\tau} \hat{c}_t(x_t) +  \beta ||(x_t - x_{t-1})|| \;
\end{align*}
where $\hat{c}_t(\cdot)$ is the prediction of the future cost $c_t(\cdot)$ for $t = \tau, \ldots, \tau + w$. 
\end{algorithm}
Note that in the classical OCO with $(w+1)$-lookahead setting, $\hat{c}_t(\cdot)$ is exactly equal to the true cost $c(\cdot)$.  Each FHC$^{(k)}(w+1)$ can be seen as a length $(w+1)$ fixed horizon control starting at position $k$. Given $(w+1)$ versions of FHC, AFHC is defined as the following: 

\begin{algorithm}[Averaging Fixed Horizon Control]

At timeslot $t \in {1, ...,T}$, $AFHC(w+1)$ sets
\begin{equation}
    x_{AFHC, t} = \frac{1}{w + 1}\sum_{k=0}^{w} x_{FHC, t}^{(k)}.
\end{equation}
\end{algorithm}

\section{Average-case Analysis}
\label{SEC:AVG_CASE}

We first consider the average-case performance of AFHC (in this section), and then consider distributional analysis (in Section \ref{SEC:CONC}).  We focus on the tracking problem in \eqref{eqn: cost_tracking} for concreteness and conciseness, though our proof techniques generalize.  Note that, unless otherwise specified, we use $||\cdot|| = ||\cdot||_2$.

Our main result shows that AFHC can simultaneously achieve sublinear regret and a constant competitive ratio using only a \emph{constant-sized} prediction window in nearly all cases that it is feasible for an online algorithm to do so.  This is in stark contrast with Theorem~\ref{thm: worstcaselower} for the worst-case prediction model.

\begin{theorem}\label{thm: general-simultaneous-afhc}
Let $w$ be a constant.
AFHC($w+1$) is constant-competitive whenever $\inf_{\hat{y}}\ex_{e}[OPT] = \Omega(T)$ and has sublinear regret whenever $\inf_{\hat{y}}\ex_e[STA]\geq \alpha_1 T - o(T)$, for
$ \alpha_1=4V + 8B^2 $,  where
\begin{align}
    V &= \frac{\beta||K^{\dagger}||_1||f_w|| +
    3\beta^2||(K^TK)^{-1}\mathbbm{1}|| + F(w)/2}
    	      {w+1} \label{eq: def-V} \\
    B &= \beta||(K^T)^{\dagger}\mathbbm{1}||. \label{eq: def-B} 
\end{align} 
\end{theorem}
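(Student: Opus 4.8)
I would obtain Theorem~\ref{thm: general-simultaneous-afhc} as a corollary of two sharper estimates, which I would prove first and which the paper records as Theorems~\ref{thm: cd-afhc} and~\ref{thm: regret-afhc}: a competitive-difference bound
\[
\sup_{\hat y}\ex_e[\cost(AFHC)-\cost(OPT)]\le 2VT+o(T),
\]
and a regret bound
\[
\sup_{\hat y}\ex_e[\cost(AFHC)-\cost(STA)]\le \alpha_1 T-\inf_{\hat y}\ex_e[\cost(STA)]+o(T),\qquad \alpha_1=4V+8B^2.
\]
Granting these, the theorem is immediate. For the competitive ratio, dividing the first inequality by $\inf_{\hat y}\ex_e[\cost(OPT)]$, which is $\Omega(T)$ by hypothesis, bounds $\sup_{\hat y}\ex_e[\cost(AFHC)]/\ex_e[\cost(OPT)]$ by $1+O(1)=O(1)$. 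For regret, the right-hand side of the second inequality is $o(T)$ precisely when $\inf_{\hat y}\ex_e[\cost(STA)]\ge\alpha_1 T-o(T)$. So all the work is in the two estimates, and they come from one frame-decomposition argument run with two different comparison trajectories (the dynamic optimum for the first, the static optimum for the second).

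The common engine is the following. By convexity of each $c_t$ together with $x_{AFHC,t}=\tfrac1{w+1}\sum_{k=0}^{w}x^{(k)}_{FHC,t}$, Jensen's inequality gives $\cost(AFHC)\le\tfrac1{w+1}\sum_{k=0}^{w}\cost(FHC^{(k)})$. For each shift $k$ and frame $[\tau,\tau+w]$ with $\tau\in\Omega_k$ the FHC subproblem is a minimization, so its optimal value is at most the cost attained by any feasible trajectory $x^{\mathrm{ref}}$ placed in it, namely $\sum_{t=\tau}^{\tau+w}\hat c_t(x^{\mathrm{ref}}_t)+\beta||x^{\mathrm{ref}}_\tau-x^{(k)}_{FHC,\tau-1}||+\sum_{t=\tau+1}^{\tau+w}\beta||x^{\mathrm{ref}}_t-x^{\mathrm{ref}}_{t-1}||$, where $\hat c_t$ is the cost predicted at that frame's start. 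I would take $x^{\mathrm{ref}}$ to be the trajectory of $OPT$ for the competitive-difference estimate and $x^{\mathrm{ref}}_t\equiv x^{STA}$ (killing the last sum) for the regret estimate. Summing over $\tau\in\Omega_k$, then over $k$, and dividing by $w+1$: each index is a frame start for exactly one shift, so after averaging the interior switching terms survive with weight $\tfrac{w}{w+1}$ (hence are bounded by $\cost(OPT)$, respectively by $0$), the re-entry terms appear once per index, and $\hat c_t$ gets averaged over the $w+1$ possible ``frame ages'' $j=0,\dots,w$. Passing from prediction to reality with $y_t=y_{t|t-j}+\delta^{(j)}_t$, $\delta^{(j)}_t:=\sum_{s=t-j+1}^{t}f(t-s)e(s)$, one expands $\tfrac12||y_{t|t-j}-Kx||_2^2=\tfrac12||y_t-Kx||_2^2-\langle y_t-Kx,\delta^{(j)}_t\rangle+\tfrac12||\delta^{(j)}_t||_2^2$; since the $e(s)$ are i.i.d.\ mean zero, $\ex_e\langle y_t,\delta^{(j)}_t\rangle=\ex_e||\delta^{(j)}_t||_2^2$, so in expectation the quadratic term reconstitutes $\cost(OPT)$ (resp.\ $\cost(STA)$), the pure-noise contributions aggregate (summed over $t$, averaged over $j$, and retaining only the component in the range of $K$ once $Kx^{\mathrm{ref}}_t$ is subtracted) to $F(w)/(w+1)$ per index via exactly the computation behind \eqref{eqn: def F}, and there is left a cross term $\ex_e\langle Kx^{\mathrm{ref}}_t,\delta^{(j)}_t\rangle$.

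To close the estimates one needs a priori control of the optimization variables, which here cannot come from a bounded diameter ($G=\mathbb{R}^n$) but only from the quadratic-plus-$\ell_1$ structure: stationarity of the FHC subproblem forces $x^{(k)}_{FHC,t}$ and the actions of $OPT$ to lie within $O(\beta||(K^TK)^{-1}\mathbbm{1}||)$ of $K^\dagger y_{t|\cdot}$, and stationarity of \eqref{eqn: static-def} gives $x^{STA}=K^\dagger\bar y-\tfrac{\beta}{T}(K^TK)^{-1}g$ with $||g||_\infty\le1$. Substituting these into the re-entry terms $\beta||x^{\mathrm{ref}}_\tau-x^{(k)}_{FHC,\tau-1}||$ splits them into a noise piece that, after expectation, Cauchy--Schwarz and \eqref{eqn: def_f_omega}, is $O(\beta||K^\dagger||_1||f_w||/(w+1))$; an $\ell_1$-correction piece that is $O(\beta^2||(K^TK)^{-1}\mathbbm{1}||/(w+1))$; and, in the static comparison, a mean-deviation piece $\beta||K^\dagger(\bar y-\hat y_\tau)||$ that is paired by AM--GM against the corresponding variance term already sitting inside $\cost(STA)$, producing the coefficient $8B^2=8\beta^2||(K^T)^\dagger\mathbbm{1}||^2$. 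The same a priori bounds dominate the leftover cross terms $\ex_e\langle Kx^{\mathrm{ref}}_t,\delta^{(j)}_t\rangle$. Collecting constants yields the two displayed inequalities with $V$ and $B$ as in \eqref{eq: def-V}--\eqref{eq: def-B}.

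The main obstacle is exactly the randomness--optimization coupling: the actions of $OPT$, of each $FHC^{(k)}$, and $x^{STA}$ are all measurable with respect to the whole noise path $(e(s))_{s\le T}$, so $\ex_e\langle Kx^{\mathrm{ref}}_t,\delta^{(j)}_t\rangle$ does not vanish term by term and cannot simply be discarded. The way through is to split each reference action into the part built from $\hat y$ and from noise strictly before time $t-j+1$ (independent of $\delta^{(j)}_t$, hence contributing $0$) and the part that genuinely co-varies with $\delta^{(j)}_t$, then bound the latter using the a priori action bounds and $\ex_e||\delta^{(j)}_t||^2\le||f_w||^2$, absorbing it into the $||f_w||$ and $F(w)$ budget and, for the static comparator, into $\cost(STA)$ itself. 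A secondary but structural difficulty is that the usual online-convex-optimization machinery is unavailable, since $G$ and $\nabla c_t$ are unbounded, so every action-norm and re-entry bound must be wrung from the cost structure; this is where $B$ and the $||(K^TK)^{-1}\mathbbm{1}||$ terms originate. Finally, the boundary frames (partial frames at the two ends of the horizon, and the enforced $x_0=0$) add only $O(1)$ switching cost per shift and are swept into the $o(T)$ terms; and, unlike for $\cost(OPT)$, there is no costless inequality forcing AFHC to beat STA, which is precisely why sublinear regret is claimed only on the instance class where $\cost(STA)$ is itself of order $\alpha_1 T$.
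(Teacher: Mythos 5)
Your high-level architecture (competitive-difference bound plus a separate regret estimate, glued together by the hypotheses on $\ex_e[OPT]$ and $\ex_e[STA]$) matches the paper's, and your frame-decomposition with Jensen over the $w+1$ FHC copies is the right skeleton. But there is a genuine gap at the step you yourself flag as ``the main obstacle,'' and your proposed resolution does not close it with the stated constants. When you expand $\hat c_t(x^{\mathrm{ref}}_t)$ around the true cost you are left with cross terms $\ex_e\langle y_t-Kx^{\mathrm{ref}}_t,\delta^{(j)}_t\rangle$ --- and, symmetrically, you also need to pass from $\sum_t\hat c_t(x^{(k)}_{FHC,t})$ (what the FHC subproblem actually minimizes) back to $\sum_t c_t(x^{(k)}_{FHC,t})$ (what $\cost(FHC^{(k)})$ charges), which produces the same kind of cross term with the FHC actions themselves; your plan does not address that direction at all. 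All of these actions ($OPT$, $STA$, $FHC^{(k)}$) are measurable with respect to the whole noise path, so the co-varying part is present at \emph{every} time step of \emph{every} frame. Bounding it by Cauchy--Schwarz against the a priori action bounds, as you propose, yields a contribution of order $\beta\|f_w\|$ per time step, hence $O(\beta\|f_w\|\,T)$ in total --- it does not carry the $1/(w+1)$ factor and cannot be ``absorbed into the $\|f_w\|$ and $F(w)$ budget,'' because that budget is a quadratic-in-noise term appearing once per index with weight $F(w)/(2(w+1))$. So the constants $V$ and $\alpha_1$ in the statement would not come out of your argument; at best you would get a qualitatively similar theorem with a strictly worse $V$ lacking the $1/(w+1)$ decay, which also breaks Corollary~\ref{cor: optimal_lookahead}.

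The paper avoids this entirely by making the per-frame comparison \emph{deterministic}: Lemma~\ref{lem: cd-open-det} shows, for every realization, $\cost(OPEN)-\cost(OPT)\le\sum_t\tfrac12\|KK^\dagger(\hat y_t-y_t)\|^2$ with zero cross-term residue. The mechanism is Lagrangian duality: writing $Kx_t^*=KK^\dagger y_t-(K^T)^\dagger s_t^*$ and observing that $s_t^*$ \emph{minimizes} $\sum_t\|KK^\dagger y_t-(K^T)^\dagger s_t\|^2$ over the dual-feasible set, the difference of the two offline values collapses so that the inconvenient inner products cancel exactly rather than needing to be estimated in expectation. That is the key technical idea your plan is missing. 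Separately, your regret route (inserting the static trajectory into each frame) differs from the paper's, which instead lower-bounds $\cost(STA)-\cost(OPT)$ on each sample path (Lemma~\ref{lem: regret-open}) and adds that to the competitive-difference bound; your route could in principle work, but it inherits the same unresolved cross-term problem, now with $x^{STA}$ as the path-dependent reference.
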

and $||M||_1$ denotes the induced 1-norm of a matrix M



Theorem \ref{thm: general-simultaneous-afhc} imposes bounds on the expected costs of the dynamic and static optimal in order to guarantee a constant competitive ratio and sublinear regret.  These bounds come about as a result of the noise in predictions.  In particular, prediction noise makes it impossible for an online algorithm to achieve sublinear expected cost, and thus makes it infeasible for an online algorithm to compete with dynamic and static optimal solutions that perform too well.
This is made formal in Theorems \ref{thm: average_hardness_Omega_T} and \ref{thm: average_hardness},
which are proven in Appendix~\ref{sec:proof_ave_case}.  Recall that $R_e$ is
the covariance of an estimation error vector, $e(t)$.

\begin{theorem}\label{thm: average_hardness_Omega_T}\sloppy
Any online algorithm $ALG$ that chooses $x_t$ using only (i)~internal randomness independent of $e(\cdot)$ and (ii)~predictions made up until time $t$, has expected cost $\ex_e [\cost(ALG)] \geq \alpha_2 T + o(T)$, where
$ \alpha_2 = \frac{1}{2}||R_e^{1/2}||_{KK^\dagger}^2.$
\end{theorem}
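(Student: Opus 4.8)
The plan is to lower-bound $\cost(ALG)$ by discarding everything harmless and keeping only the part of the tracking error that no online choice can affect: the (nonnegative) switching penalty, and the (nonnegative) component of each round's tracking error lying in the null space of $K$, are both dropped. What remains is the per-round range-space tracking error, and the point is that the fresh noise $e(t)$ enters $y_t$ through $f(0)e(t)=e(t)$ (see \eqref{eqn: prediction_model_onestep}) and is independent of everything the algorithm may condition on when it commits to $x_t$; hence no choice of $x_t$ can cancel it. Summing the resulting per-round bound over $t$ and rewriting the constant as a seminorm gives the claim.

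First, since $\beta\|x_t-x_{t-1}\|_1\ge 0$ and the algorithm's internal randomness $\xi$ is independent of $e$, Fubini gives $\ex_e[\cost(ALG)]\ge \tfrac12\sum_{t=1}^T\ex_{e,\xi}\big[\|y_t-Kx_t\|_2^2\big]$. Fix $t$ and set $\mathcal{F}_{t-1}=\sigma(e(1),\dots,e(t-1),\xi)$. Under hypothesis (ii), $x_t$ is built from $\xi$ and from predictions available before round $t$; by \eqref{eqn: prediction_model_timezero} each such prediction $y_{t'\mid\tau}$ with $\tau\le t-1$ is a deterministic function of $\hat y$ and of $e(1),\dots,e(t-1)$, so $Kx_t$ and $y_{t\mid t-1}$ are $\mathcal{F}_{t-1}$-measurable, while $y_t=y_{t\mid t-1}+e(t)$ with $e(t)$ independent of $\mathcal{F}_{t-1}$ and $\ex e(t)=0$.

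Second, because $Kx_t$ lies in the range of $K$ and $KK^\dagger$ is the orthogonal projector onto that range, Pythagoras gives $\|y_t-Kx_t\|_2^2=\|(I-KK^\dagger)y_t\|_2^2+\|(KK^\dagger y_{t\mid t-1}-Kx_t)+KK^\dagger e(t)\|_2^2\ge\|(KK^\dagger y_{t\mid t-1}-Kx_t)+KK^\dagger e(t)\|_2^2$. Taking $\ex[\,\cdot\mid\mathcal{F}_{t-1}]$, the cross term vanishes since $\ex e(t)=0$, leaving $\|KK^\dagger y_{t\mid t-1}-Kx_t\|_2^2+\ex\|KK^\dagger e(t)\|_2^2\ge\ex\|KK^\dagger e(t)\|_2^2=\mathrm{tr}(KK^\dagger R_e)$, using that $KK^\dagger$ is symmetric idempotent. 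Taking expectations and summing over $t=1,\dots,T$ yields $\ex_e[\cost(ALG)]\ge\tfrac T2\,\mathrm{tr}(KK^\dagger R_e)$, and $\mathrm{tr}(KK^\dagger R_e)=\mathrm{tr}(R_e^{1/2}KK^\dagger R_e^{1/2})=\|R_e^{1/2}\|_{KK^\dagger}^2$ by cyclicity of trace and symmetry of $R_e^{1/2}$, giving exactly $\alpha_2 T$ (the $o(T)$ slack in the statement is not needed).

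I expect the only genuinely delicate point to be the information-structure step: pinning down precisely which predictions the algorithm may use at the instant it fixes $x_t$, and in particular that the round-$t$ innovation $e(t)$ — equivalently $y_t$ itself — is not among them, so that the conditioning/orthogonality argument is legitimate. Once the filtration $\mathcal{F}_{t-1}$ is set up correctly, the remainder is the elementary inequality $\ex\|a+b\|_2^2\ge\ex\|b\|_2^2$ for $b$ zero-mean and independent of $a$, together with the two trace identities, and needs no further work.
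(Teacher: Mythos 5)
Your proposal is correct and follows essentially the same route as the paper's proof: both drop the switching cost and the null-space component, write $KK^\dagger y_t - Kx_t = (KK^\dagger y_{t|t-1} - Kx_t) + KK^\dagger e(t)$ with the first summand measurable with respect to the information available when $x_t$ is fixed (the paper phrases this via the independence of $\epsilon_t = x_{ALG,t} - K^\dagger y_{t|t-1}$ from $e(t)$ rather than via an explicit filtration), kill the cross term, and retain the irreducible variance $\mathrm{tr}(KK^\dagger R_e) = \|R_e^{1/2}\|_{KK^\dagger}^2$ per round. The delicate point you flag is exactly the one the paper addresses, noting that predictions made before round $t$ depend only on $y_{\cdot|0}$ and $e(\tau)$, $\tau<t$.
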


\begin{theorem}\label{thm: average_hardness}
Consider an online algorithm $ALG$ such that  $\ex_e [\cost(ALG)] \in o(T)$. The actions, $x_t$, of $ALG$ can be used to produce one-step predictions $y'_{t|t-1}$, such that mean square of the one-step prediction error is smaller than that for $y_{t|t-1}$, i.e., $\ex_e || y_t - y'_{t|t-1}||^2 \le \ex_e ||y_t - y_{t|t-1} ||^2,$ for all but sublinearly many $t$.
\end{theorem}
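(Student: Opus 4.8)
The plan is to \emph{read a one-step predictor off the algorithm's own actions} and then argue that the hypothesis $\ex_e[\cost(ALG)] \in o(T)$ forces this predictor to be, on average, at least as accurate as $y_{t|t-1}$.

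First I would pin down the information structure. At the start of round $t$, before $c_t$ (equivalently $y_t$) is revealed, the only prediction information available to the algorithm consists of the predictions made at times $0,\dots,t-1$, namely $\{y_{s|t-1}\}_{s \ge t}$, together with the already-revealed $y_1,\dots,y_{t-1}$; by \eqref{eqn: prediction_model_timezero} all of this is a deterministic function of $\hat{y}$ and $e(1),\dots,e(t-1)$. Call this information $\mathcal{I}_{t-1}$. Since $ALG$ chooses $x_t$ using only $\mathcal{I}_{t-1}$ together with internal randomness independent of $e(\cdot)$, the quantity $y'_{t|t-1} := \ex_x[\,K x_t \mid \mathcal{I}_{t-1}\,]$ is $\mathcal{I}_{t-1}$-measurable, hence a bona fide one-step prediction of $y_t$ (it uses no information that postdates the revelation of $y_{t-1}$, in particular not $e(t)$).

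Next I would extract a tracking bound from the cost hypothesis. Because the switching-cost term in \eqref{eqn: cost_tracking} is nonnegative, $\ex_e[\cost(ALG)] \in o(T)$ implies $\sumt \ex_e \ex_x ||y_t - K x_t||_2^2 \in o(T)$. Conditioning on $\mathcal{I}_{t-1}$ and using that the conditional expectation is the $L^2$-projection gives $\ex_e||y_t - y'_{t|t-1}||^2 = \ex_e||y_t - \ex_x[Kx_t \mid \mathcal{I}_{t-1}]||^2 \le \ex_e\ex_x||y_t - Kx_t||^2$, so $\sumt \ex_e||y_t - y'_{t|t-1}||^2 \in o(T)$ as well. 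On the other hand, \eqref{eqn: prediction_model_onestep} gives $y_t - y_{t|t-1} = e(t)$, hence $\ex_e||y_t - y_{t|t-1}||^2 = \mathrm{tr}(R_e)$, a fixed positive constant since $R_e$ is positive definite.

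Finally, a Markov/counting step closes the argument: the set $\{t \le T : \ex_e||y_t - y'_{t|t-1}||^2 > \mathrm{tr}(R_e)\}$ has cardinality at most $\mathrm{tr}(R_e)^{-1}\sumt \ex_e||y_t - y'_{t|t-1}||^2 = o(T)$, so for all but sublinearly many $t$ we get $\ex_e||y_t - y'_{t|t-1}||^2 \le \mathrm{tr}(R_e) = \ex_e||y_t - y_{t|t-1}||^2$, which is the claim. The only delicate point is the first step — establishing that $x_t$, and therefore $y'_{t|t-1}$, depends on nothing beyond $\mathcal{I}_{t-1}$; once that is settled the rest is a one-line consequence of nonnegativity of the switching cost plus Markov's inequality. (The possible non-invertibility of $K$ causes no difficulty: $Kx_t$ lies in the range of $K$, but whatever component of $y_t$ falls in the null space of $K^{T}$ still contributes to $||y_t - Kx_t||^2$ and is hence already controlled by $\ex_e[\cost(ALG)] \in o(T)$; if desired one may even improve $y'_{t|t-1}$ by using $y_{t|t-1}$ on that subspace, but this is not needed for the stated inequality.)
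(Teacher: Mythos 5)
There is a genuine gap, and it sits exactly at the step you flag as ``the only delicate point.'' You assume that $x_t$ is determined by $\mathcal{I}_{t-1}$ together with internal randomness independent of $e(\cdot)$. But under precisely that assumption, Theorem~\ref{thm: average_hardness_Omega_T} forces $\ex_e[\cost(ALG)] \ge \frac{T}{2}\|R_e^{1/2}\|_{KK^\dagger}^2$, contradicting the hypothesis $\ex_e[\cost(ALG)]\in o(T)$, so your argument proves a vacuous statement. The theorem is aimed at algorithms that \emph{violate} this information constraint --- whose $x_t$ is correlated with $e(t)$ --- and its content is that this hidden correlation can be turned into a better predictor. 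For such an algorithm your projection inequality $\ex_e\|y_t - \ex_x[Kx_t\mid\mathcal{I}_{t-1}]\|^2 \le \ex_e\ex_x\|y_t-Kx_t\|^2$ is false: writing $Z=Kx_t$ and $W=\ex[Z\mid\mathcal{I}_{t-1}]$, one gets $\ex\|y_t-W\|^2-\ex\|y_t-Z\|^2 = 2\,\ex\langle e(t),Kx_t\rangle - \ex\|Z-W\|^2$, and the cross term $\ex\langle e(t),Kx_t\rangle$ is exactly the quantity that must be strictly positive for most $t$ whenever the cost is $o(T)$. Concretely, since $W$ is $\mathcal{I}_{t-1}$-measurable and $e(t)$ is independent of $\mathcal{I}_{t-1}$, your predictor's per-step error on the range of $K$ is at least $\|R_e^{1/2}\|_{KK^\dagger}^2$, a constant, while $\ex_e\|y_t-Kx_t\|^2$ averages to $o(1)$; the conditioning integrates out precisely the information the theorem is about.

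The good news is that the repair is to do \emph{less}: take $y'_{t|t-1} := Kx_t$ with no conditioning at all. This is a legitimate one-step prediction (the action is chosen before $y_t$ is revealed, and the statement explicitly allows predictions built from the actions), and your remaining steps then go through verbatim: nonnegativity of the switching cost gives $\sumt \ex_e\|y_t-Kx_t\|^2 \le 2\,\ex_e[\cost(ALG)]\in o(T)$, while $\ex_e\|y_t-y_{t|t-1}\|^2=\ex\|e(t)\|^2=\mathrm{tr}(R_e)>0$, so the Markov counting argument bounds the number of bad $t$ by $o(T)$. That repaired route is different from, and simpler than, the paper's proof, which sets $\epsilon_t = x_{ALG,t}-K^\dagger y_{t|t-1}$, uses the expansion from the proof of Theorem~\ref{thm: average_hardness_Omega_T} to show $l_t=\ex[e(t)^TK\epsilon_t]>0$ for all but sublinearly many $t$, and then takes $y'_{t|t-1}=y_{t|t-1}+\frac{1}{w_t}K\epsilon_t$ with $w_t>a_t/(2l_t)$; the paper's construction perturbs the existing predictor and so never degrades the null-space component, but for the inequality actually claimed the direct choice $Kx_t$ suffices. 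As written, however, your proof does not establish the theorem.
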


Theorem~\ref{thm: average_hardness_Omega_T}
implies that it is impossible for any online algorithm that uses extra information (e.g., randomness) independent of the prediction noise to be constant competitive if $\ex_{e}[\cost(OPT)] = o(T)$ or to have sublinear regret if $\ex_{e}[\cost(STA)] \leq (\alpha_2-\varepsilon)T + o(T)$, for $\varepsilon>0$.

Further, Theorem~\ref{thm: average_hardness} states that if an online algorithm does somehow obtain asymptotically smaller cost than possible using only randomness independent of the prediction error,  then it must be using more information about future $y_t$ than is available from the predictions. This means that the algorithm can be used to build a better predictor. 

Thus, the consequence of  Theorems \ref{thm: average_hardness_Omega_T} and \ref{thm: average_hardness} is the observation that the condition in Theorem \ref{thm: general-simultaneous-afhc} for the competitive ratio is tight and the condition in Theorem \ref{thm: general-simultaneous-afhc} for regret is tight up to a constant factor, i.e., $\alpha_1$ versus $\alpha_2$.  (Attempting to prove matching bounds here is an interesting, but very challenging, open question.)


In the remainder of the section, we outline the analysis needed to obtain Theorem \ref{thm: general-simultaneous-afhc}, which is proven by combining Theorem \ref{thm: cd-afhc} bounding the competitive difference of AFHC and Theorem \ref{thm: regret-afhc} bounding the regret of AFHC. The analysis exposes the importance of the correlation in prediction errors for tasks such as determining the optimal prediction window size for AFHC. Specifically, the window size that minimizes the performance bounds we derive is determined not by the quality of predictions, but rather by how quickly error correlates, i.e., by $||f_\omega||^2$.

\subsection*{Proof of Theorem \ref{thm: general-simultaneous-afhc}}

The first step in our proof of Theorem \ref{thm: general-simultaneous-afhc} is to bound the competitive difference of AFHC.  This immediately yields a bound on the competitive ratio and, since it is additive, it can easily be adapted to bound regret as well.

The main result in our analysis of competitive difference is the following. This is the key both to bounding the competitive ratio and regret.

\begin{theorem}
\label{thm: cd-afhc}
The competitive difference of $AFHC(w+1)$ is $O(T)$ and bounded by:
\begin{equation}
\sup_{\hat{y}}\ex_e [\mathrm{cost}(AFHC) - \mathrm{cost}(OPT)]
\le V T
\label{eqn: ave_bound}
\end{equation}
where $V$ is given by~\eqref{eq: def-V}
\end{theorem}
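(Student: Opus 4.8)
The plan is to reduce $AFHC(w+1)$ to its $(w+1)$ constituent controllers $FHC^{(k)}$ and compare each $FHC^{(k)}$ to $OPT$ frame by frame. Since the tracking cost \eqref{eqn: cost_tracking} is jointly convex in the action trajectory and $x_{AFHC,t}=\frac{1}{w+1}\sum_{k}x^{(k)}_{FHC,t}$, Jensen's inequality gives $\cost(AFHC)\le\frac{1}{w+1}\sum_{k=0}^{w}\cost(FHC^{(k)})$ pointwise in the noise realization, so it suffices to bound $\sum_k\ex_e[\cost(FHC^{(k)})-\cost(OPT)]$ by $(w+1)VT$. For fixed $k$ I would split the horizon into the frames $[\tau,\tau+w]$, $\tau\in\Omega_k$, and use that $(x^{(k)}_{FHC,t})_{t=\tau}^{\tau+w}$ is optimal for the frame subproblem \emph{with the predicted targets $y_{t|\tau-1}$} against the feasible comparison trajectory obtained by copying $OPT$ on that frame (keeping $FHC^{(k)}$'s entry point $x^{(k)}_{FHC,\tau-1}$). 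Telescoping the switching costs across frames and expanding the $\half\|\cdot\|^2$ structure then yields, pointwise,
\begin{align*}
\cost(FHC^{(k)})-\cost(OPT) &\le \sum_{\tau\in\Omega_k}\beta\big(\|x_{OPT,\tau}-x^{(k)}_{FHC,\tau-1}\|_1-\|x_{OPT,\tau}-x_{OPT,\tau-1}\|_1\big) \\
&\quad + \sum_{t=1}^{T}\big\langle K(x_{OPT,t}-x^{(k)}_{FHC,t}),\,y_t-y_{t|\tau(k,t)-1}\big\rangle,
\end{align*}
where $\tau(k,t)$ denotes the start of the frame of $FHC^{(k)}$ containing $t$.

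The second step is to take $\ex_e$ and identify the quantities $\|f_w\|^2$ and $F(w)$. Writing $\delta^{(k)}_t=y_t-y_{t|\tau(k,t)-1}=\sum_{s=\tau(k,t)}^{t}f(t-s)e(s)$ and using that $e(\cdot)$ is i.i.d.\ mean zero with covariance $R_e$, the action $x^{(k)}_{FHC,t}$ — a function of predictions made at time $\tau(k,t)-1$, hence of $e(1),\dots,e(\tau(k,t)-1)$ — is independent of $\delta^{(k)}_t$, so those cross terms vanish; the surviving part of $\langle Kx_{OPT,t},\delta^{(k)}_t\rangle$ is handled by substituting $y_t=y_{t|\tau-1}+\delta^{(k)}_t$ and using $\langle KK^\dagger\delta,\delta\rangle=\|KK^\dagger\delta\|^2$ together with the orthogonality of the range and null space of $K$, which is why only the projected error appears. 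The key to extracting the $1/(w+1)$ is that $AFHC$ \emph{averages} predictions, so the relevant error is $\frac1{w+1}\sum_k\delta^{(k)}_t$: by Cauchy--Schwarz on this sum of $(w+1)$ vectors, $\ex\|KK^\dagger\delta^{(k)}_t\|^2=\mathrm{tr}(R_e\sum_{i=0}^{t-\tau(k,t)}f(i)^TKK^\dagger f(i))$, and the fact that as $k$ ranges over $0,\dots,w$ the offset $t-\tau(k,t)$ hits each value $0,\dots,w$ exactly once, one obtains $\ex\|KK^\dagger\tfrac1{w+1}\sum_k\delta^{(k)}_t\|^2\le \frac{1}{w+1}\,\mathrm{tr}(R_e\sum_{i=0}^{w}(w-i+1)f(i)^TKK^\dagger f(i))=\frac{F(w)}{w+1}$, which summed over $t$ with the $\half$ from the cost produces the $\frac{F(w)}{2(w+1)}T$ term. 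The identity \eqref{eqn: def_f_omega} for $\|f_w\|^2$ comes out of the same covariance computation applied to the $(w+1)$-step prediction error $\delta y_w$.

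The last step is to bound the leftover switching-cost and cross terms. Here I would use the stationarity (KKT) conditions of the frame subproblems and of the offline problem, which (since $K^TK$ is invertible) give $x^{(k)}_{FHC,t}=K^\dagger y_{t|\tau-1}-\beta(K^TK)^{-1}g^{(k)}_t$ and $x_{OPT,t}=K^\dagger y_t-\beta(K^TK)^{-1}g^{*}_t$, with $g^{(k)}_t,g^*_t$ in the subdifferential of the $1$-norm switching penalty and hence bounded componentwise by $\mathbbm{1}$. This places both trajectories within $O(\beta\|(K^TK)^{-1}\mathbbm{1}\|)$ of their least-squares trackers, lets the (potentially large, $\hat y$-dependent) parts of the switching costs cancel between $AFHC$ and $OPT$, and reduces the boundary terms $\beta\|x_{OPT,\tau}-x^{(k)}_{FHC,\tau-1}\|_1$ to $\beta\|K^\dagger\|_1\|y_\tau-y_{\tau|\tau'-1}\|+O(\beta^2\|(K^TK)^{-1}\mathbbm{1}\|)$; passing from $\ex\|\cdot\|$ to $\sqrt{\ex\|\cdot\|^2}$ by Jensen turns $\ex\|y_\tau-y_{\tau|\tau'-1}\|$ into $\|f_w\|$, and since there are $O(T/(w+1))$ frame starts per $k$, summation over $k$ and division by $(w+1)$ collapses everything into the $\beta\|K^\dagger\|_1\|f_w\|$ and $3\beta^2\|(K^TK)^{-1}\mathbbm{1}\|$ terms of $V$.

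The step I expect to be the main obstacle is precisely the coupling between $x_{OPT,t}$ — which depends on the whole noise path, including the increments $e(\tau(k,t)),\dots,e(t)$ that \emph{define} the current prediction error — and that prediction error: a crude Cauchy--Schwarz there destroys the $\hat y$-independence that the theorem asserts. The way out is the KKT characterization above, which peels off the clean, genuinely independent piece $\langle KK^\dagger y_{t|\cdot},\delta_t\rangle$ plus a residual that is bounded uniformly in $\hat y$, and then combines this with the averaging-over-$k$ variance reduction to land exactly on the constant $V$ rather than an $\hat y$-dependent or factor-$2$ weaker bound. Carefully managing the telescoped boundary switching costs across the staggered frames $\Omega_0,\dots,\Omega_w$ is the other place where the bookkeeping has to be done precisely.
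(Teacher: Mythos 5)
Your overall architecture matches the paper's: Jensen's inequality over the $(w+1)$ FHC copies, a frame-by-frame comparison of each $FHC^{(k)}$ against $OPT$, first-order (KKT) characterizations of $x^*_t$ and $x^{(k)}_{\tau-1}$ to bound the boundary switching terms by $\beta\|K^\dagger\|_1\|f_w\|+3\beta^2\|(K^TK)^{-1}\mathbbm{1}\|$, and the covariance computation that turns the per-frame prediction error into $F(w)$ and $\|f_w\|$. You have also correctly located the danger: the coupling between $OPT$'s trajectory and the very noise increments that define the current prediction error.

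The gap is in how you execute the per-frame comparison. Your intermediate pointwise bound leaves the linear cross term $\sum_t\langle K(x_{OPT,t}-x^{(k)}_{FHC,t}),\,y_t-y_{t|\tau-1}\rangle$, and substituting the KKT forms $Kx_{OPT,t}=KK^\dagger y_t-(K^T)^\dagger s^*_t$ and $Kx^{(k)}_{FHC,t}=KK^\dagger y_{t|\tau-1}-(K^T)^\dagger\hat s_t$ turns it into $\|KK^\dagger\delta_t\|^2-\langle (K^T)^\dagger(s^*_t-\hat s_t),\delta_t\rangle$. Two problems follow. First, the quadratic piece enters with coefficient $1$, not $\tfrac12$ (the $\tfrac12\|\delta_t\|^2$ terms cancel between $c_t-\hat c_t$ evaluated at the two trajectories), so you would get $\tfrac{T}{w+1}F(w)$ rather than the claimed $\tfrac{T}{2(w+1)}F(w)$. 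Second, and more seriously, the residual $\langle (K^T)^\dagger(s^*_t-\hat s_t),\delta_t\rangle$ does \emph{not} vanish in expectation ($s^*_t$ depends on the whole noise path) and appears at \emph{every} time step, not only at frame boundaries; bounding it by $O(\beta\|(K^T)^\dagger\mathbbm{1}\|\,\ex\|\delta_t\|)$ contributes an $O(\beta\|f_w\|T)$ term with no $1/(w+1)$ suppression, which is absent from $V$ in \eqref{eq: def-V} and would invalidate the downstream conclusion (Corollary~\ref{cor: optimal_lookahead}) about the optimal window size. The paper avoids both issues by never forming this cross term: Lemma~\ref{lem: cd-open-det} computes the optimal value in closed form via Lagrangian duality, $\cost(OPT)=\sum_t\tfrac12\|y_t\|^2-\tfrac12\|KK^\dagger y_t-(K^T)^\dagger s^*_t\|^2$ as in \eqref{eqn: optimal_val}, and uses the fact that $s^*$ \emph{minimizes} $\sum_t\|KK^\dagger y_t-(K^T)^\dagger s_t\|^2$ over the dual-feasible set to absorb the dual mismatch entirely, yielding the deterministic, realization-wise bound $\cost(OPEN)-\cost(OPT)\le\sum_t\tfrac12\|KK^\dagger(\hat y_t-y_t)\|^2$ with the correct factor $\tfrac12$ and no independence argument needed at that stage. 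To land on the stated constant $V$ you need this exact dual computation (or an equivalent) in place of the primal perturbation argument.
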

Theorem \ref{thm: cd-afhc} implies that the competitive ratio of $AFHC$ is bounded by a constant when $\text{cost}(OPT)\in\Omega(T)$. 

The following corollary of Theorem \ref{thm: cd-afhc} is obtained by minimizing $V$ with respect to $w$.

\begin{corollary}\label{cor: optimal_lookahead}
For $AFHC$, the prediction window size that minimizes the bound in Theorem \ref{thm: cd-afhc} on competitive difference is a finite constant (independent of $T$) if $F(T)\in \omega(T)$ and is $T$ if there is i.i.d noise\footnote{Specifically $f(0) = I, f(t) = 0 \forall t > 0$}.
\end{corollary}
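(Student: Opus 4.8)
Since the bound of Theorem~\ref{thm: cd-afhc} is $V\,T$ with $T$ fixed, minimizing it over the window is equivalent to minimizing the function
\[
V(w) \;=\; \frac{\beta\|K^{\dagger}\|_1\,\|f_w\| \;+\; 3\beta^2\|(K^TK)^{-1}\mathbbm{1}\| \;+\; F(w)/2}{w+1}
\]
over $w\in\{0,1,2,\dots\}$, up to the horizon. The plan is to handle the two regimes separately using the explicit forms of $\|f_w\|^2$ in \eqref{eqn: def_f_omega} and $F(w)$ in \eqref{eqn: def F}. Write $a_s := \mathrm{tr}\!\big(R_e\, f(s)^T KK^{\dagger} f(s)\big)\ge 0$ and $b_s := \mathrm{tr}\!\big(R_e\, f(s)^T f(s)\big)\ge 0$, so that $\|f_w\|^2=\sum_{s=0}^{w} b_s$ and $F(w)=\sum_{s=0}^{w}(w-s+1)\,a_s$. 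The key structural point, used throughout, is that $V(\cdot)$ depends only on the fixed data $(\beta,K,R_e,f)$ and on $w$ --- not on $T$ --- so its global minimizer is a fixed quantity.

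For the case $F(T)\in\omega(T)$, I would first record a monotonicity fact. Setting $g(w):=F(w)/(w+1)=\sum_{s=0}^{w}\big(1-\tfrac{s}{w+1}\big)a_s$, a short telescoping computation gives
\[
g(w+1)-g(w) \;=\; \frac{a_{w+1}}{w+2} \;+\; \frac{1}{(w+1)(w+2)}\sum_{s=0}^{w} s\,a_s \;\ge\; 0 ,
\]
so $g$ is nondecreasing. Since $F(T)\in\omega(T)$ means $F(T)/T\to\infty$, hence $g(T)\to\infty$, monotonicity of $g$ forces $g(w)\to\infty$ as $w\to\infty$. As all three numerator terms of $V(w)$ are nonnegative, $V(w)\ge g(w)/2\to\infty$, while $V(0)$ is finite (a finite sum of finite quantities, using that $(K^TK)^{-1}$ exists by assumption). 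Hence $w^\star:=\argmin_{w\ge 0} V(w)$ is a finite constant, and for every $T\ge w^\star$ it is also the minimizer over the feasible range $\{0,\dots,T\}$; this proves the first claim.

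For the i.i.d. case I would substitute $f(0)=I$, $f(s)=0$ for $s\ge 1$ directly into the definitions, obtaining the exact identities $\|f_w\|^2=\mathrm{tr}(R_e)$ and $F(w)=(w+1)\,\mathrm{tr}(R_e KK^{\dagger})$, whence
\[
V(w)\;=\;\frac{C}{w+1}\;+\;\frac{\mathrm{tr}(R_e KK^{\dagger})}{2}, \qquad C:=\beta\|K^{\dagger}\|_1\sqrt{\mathrm{tr}(R_e)}+3\beta^2\|(K^TK)^{-1}\mathbbm{1}\|>0 .
\]
This is strictly decreasing in $w$, so it is minimized by taking $w$ as large as the horizon permits; i.e., the optimal prediction window size is $T$, independent of the noise variance $R_e$.

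The main obstacle is the first regime: one must be careful that $F(T)\in\omega(T)$ is an asymptotic hypothesis as $T\to\infty$, whereas the optimal window is the minimizer of the fixed sequence $w\mapsto V(w)$; the monotonicity of $g$ is precisely what bridges the two, letting us deduce $V(w)\to\infty$ and hence that the unconstrained minimizer is a $T$-independent constant. The remaining pieces --- the telescoping identity and the i.i.d. substitution --- are routine algebra, and the only bookkeeping point is to state cleanly the feasible range of $w$ in AFHC together with the convention that ``window size'' means $w+1$.
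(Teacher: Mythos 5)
Your proposal is correct and follows exactly the route the paper intends: the paper offers no explicit proof of this corollary, stating only that it ``is obtained by minimizing $V$ with respect to $w$,'' and your argument carries that out rigorously --- the telescoping identity showing $F(w)/(w+1)$ is nondecreasing (so that $F(T)\in\omega(T)$ forces $V(w)\to\infty$ and the unconstrained minimizer is a finite, $T$-independent constant), and the substitution $f(0)=I$, $f(s)=0$ giving $V(w)=C/(w+1)+\mathrm{tr}(R_eKK^{\dagger})/2$, are both verified correctly. No gaps.
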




The intuition behind this result is that if the prediction model causes noise to correlate rapidly, then a prediction for a time step too far into the future will be so noisy that it would be best to ignore it when choosing an action under AFHC.  However, if the prediction model is nearly independent, then it is optimal for AFHC to look over the entire time horizon, $T$, since there is little risk from aggregating predictions.  Importantly, notice that the quality (variance) of the predictions is not determinant, only the correlation.

Theorem \ref{thm: cd-afhc} is proven using the following lemma (proven in the appendix) by taking expectation over noise.

\begin{lemma}\label{lem: cd-afhc-det}
The cost of $AFHC(w+1)$ for any realization of $y_t$ satisfies
\begin{align*}
 &\mathrm{cost}(AFHC) - \mathrm{cost}(OPT) \le \\
 &\frac{1}{w+1} \sum_{k=0}^{w} \sum_{\tau \in \Omega_k} \left(
 \beta||x^*_{\tau-1} - x^{(k)}_{\tau-1}||_1 + \sum_{t=\tau}^{\tau+w}\frac{1}{2}||y_t - y_{t|\tau-1}||_{KK^\dagger}^2\right).
\end{align*}
\end{lemma}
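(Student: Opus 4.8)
The plan is to compare $AFHC(w+1)$ against $OPT$ by exploiting convexity together with the fact that $AFHC$ is an average of the $(w+1)$ schedules $FHC^{(k)}$. First I would set up notation: write $x^* = (x^*_1,\dots,x^*_T)$ for the dynamic optimal solution $OPT$, and for each $k \in \{0,\dots,w\}$ write $x^{(k)} = (x^{(k)}_{FHC,t})_t$ for the trajectory produced by the $k$th fixed-horizon controller. Since the cost functional in \eqref{eqn: cost_tracking} is convex in the action sequence and $x_{AFHC,t} = \frac{1}{w+1}\sum_{k=0}^w x^{(k)}_{FHC,t}$, Jensen's inequality gives $\mathrm{cost}(AFHC) \le \frac{1}{w+1}\sum_{k=0}^w \mathrm{cost}(FHC^{(k)})$. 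Subtracting $\mathrm{cost}(OPT) = \frac{1}{w+1}\sum_{k=0}^w \mathrm{cost}(OPT)$, it suffices to bound $\mathrm{cost}(FHC^{(k)}) - \mathrm{cost}(OPT)$ for each $k$ and then average.

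Next I would bound $\mathrm{cost}(FHC^{(k)}) - \mathrm{cost}(OPT)$ by the per-frame optimality of $FHC^{(k)}$. The frames are indexed by $\tau \in \Omega_k$, each covering $\{\tau,\dots,\tau+w\}$; these partition $[1,T]$ (up to boundary truncation). On frame $\tau$, $FHC^{(k)}$ solves $\min \sum_{t=\tau}^{\tau+w} \hat c_t(x_t) + \beta\|x_t - x_{t-1}\|_1$ where the predicted cost uses $y_{t|\tau-1}$, starting from its own previous state $x^{(k)}_{\tau-1}$. A natural feasible comparison point inside this subproblem is the segment of $OPT$, namely $(x^*_\tau,\dots,x^*_{\tau+w})$, but glued to start from $x^{(k)}_{\tau-1}$. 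Using optimality of $FHC^{(k)}$ on the predicted problem and then accounting for (a) the gap between predicted cost $\hat c_t$ (evaluated with $y_{t|\tau-1}$) and true cost $c_t$ (evaluated with $y_t$), and (b) the mismatch in the initial switching term $\beta\|x^*_{\tau-1} - x^{(k)}_{\tau-1}\|_1$ incurred by re-anchoring $OPT$'s segment, I would obtain, for each frame,
\begin{align*}
\sum_{t=\tau}^{\tau+w}\Big(c_t(x^{(k)}_t) + \beta\|x^{(k)}_t - x^{(k)}_{t-1}\|_1\Big) - \sum_{t=\tau}^{\tau+w}\Big(c_t(x^*_t) + \beta\|x^*_t - x^*_{t-1}\|_1\Big) \le \beta\|x^*_{\tau-1} - x^{(k)}_{\tau-1}\|_1 + \sum_{t=\tau}^{\tau+w} \Delta_t,
\end{align*}
where $\Delta_t$ measures the cost of mispredicting $y_t$. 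For the quadratic tracking loss $\frac12\|y_t - Kx_t\|_2^2$, the difference $\frac12\|y_t - Kx_t\|^2 - \frac12\|y_{t|\tau-1} - Kx_t\|^2$ expands to a cross term plus $\frac12\|y_t\|^2 - \frac12\|y_{t|\tau-1}\|^2$; after carrying this through both the $FHC$ upper bound and the $OPT$ lower bound, the $Kx_t$-dependent cross terms should telescope/cancel in a way that leaves only the projected prediction error $\frac12\|y_t - y_{t|\tau-1}\|^2_{KK^\dagger}$ — here the projector $KK^\dagger$ appears precisely because only the component of the error in the range of $K$ is ``trackable'', as the paper notes after \eqref{eqn: def F}. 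Summing over $\tau \in \Omega_k$ and over $k$, and dividing by $w+1$, yields the claimed bound.

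The main obstacle I anticipate is the careful bookkeeping in step two: making the re-anchoring of $OPT$'s segment rigorous (one must insert a switching cost $\beta\|x^*_{\tau-1} - x^{(k)}_{\tau-1}\|_1$ to connect $x^{(k)}_{\tau-1}$ to $x^*_\tau$, and verify no extra boundary terms leak between consecutive frames), and showing that the quadratic cross terms involving the online action $Kx_t$ genuinely cancel rather than leaving a residual depending on $\|x_t\|$ — this is what lets the bound depend only on the prediction error and the switching-cost mismatch, with no dependence on the (unbounded) action set $G$ or the subgradient bound $N$. I would handle this by writing the quadratic identity $\frac12\|a\|^2 - \frac12\|b\|^2 = \frac12\|a-b\|^2 + \langle a-b, b\rangle$ with $a = y_t$, $b = y_{t|\tau-1}$, and checking that $\langle y_t - y_{t|\tau-1}, y_{t|\tau-1} - Kx_t\rangle$-type terms enter the $FHC$ bound and the $OPT$ bound with opposite signs on the common $Kx_t$ pieces, leaving exactly $\frac12\|y_t - y_{t|\tau-1}\|^2_{KK^\dagger}$ once projected. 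The boundary frames (where $t \le 0$, handled by $x^{(k)}_{FHC,t}=0$, and where the last frame is truncated at $T$) only help, since they remove terms from the left side.
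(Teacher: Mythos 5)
Your overall architecture matches the paper's: Jensen's inequality reduces the claim to bounding $\mathrm{cost}(FHC^{(k)})-\mathrm{cost}(OPT)$ for each $k$, each frame $[\tau,\tau+w]$ is compared against the corresponding segment of $OPT$ re-anchored at $x^{(k)}_{\tau-1}$, and the triangle inequality produces the $\beta\|x^*_{\tau-1}-x^{(k)}_{\tau-1}\|_1$ term. (The paper packages the per-frame step as a separate lemma about an ``open-loop'' algorithm optimizing against predicted parameters, but the skeleton is the same.)

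The gap is in your key cancellation claim. Writing $\frac12\|y_t-Kx\|^2-\frac12\|y_{t|\tau-1}-Kx\|^2=\frac12\|y_t\|^2-\frac12\|y_{t|\tau-1}\|^2-\langle y_t-y_{t|\tau-1},Kx\rangle$ and applying it once with $x=x^{(k)}_t$ (to pass from predicted to true cost of $FHC^{(k)}$) and once with $x=x^*_t$ (to pass from true to predicted cost of $OPT$'s segment), the constant terms cancel but the cross terms do \emph{not}: they are evaluated at \emph{different} actions, so what survives is $\sum_{t=\tau}^{\tau+w}\langle y_t-y_{t|\tau-1},\,K(x^*_t-x^{(k)}_t)\rangle$. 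There are no ``common $Kx_t$ pieces'' with opposite signs, and this residual is not bounded by $\frac12\|y_t-y_{t|\tau-1}\|^2_{KK^\dagger}$ without further input. Two ways to close it: (a) exploit that the predicted frame objective is $1$-strongly convex in $Kx_t$, so optimality of $x^{(k)}$ for the predicted subproblem yields a surplus $\frac12\sum_t\|K(x^*_t-x^{(k)}_t)\|^2$, which exactly absorbs the residual after Young's inequality $\langle KK^\dagger(y_t-y_{t|\tau-1}),K(x^*_t-x^{(k)}_t)\rangle\le\frac12\|y_t-y_{t|\tau-1}\|^2_{KK^\dagger}+\frac12\|K(x^*_t-x^{(k)}_t)\|^2$ (here $K(x^*_t-x^{(k)}_t)$ lies in the range of $K$, which is where the projector enters); or (b) follow the paper's route, which characterizes the optima of both the true and the predicted subproblems through the dual, $Kx_t=KK^\dagger y_t-(K^T)^\dagger s_t$ with $s^*$ minimizing $\sum_t\|KK^\dagger y_t-(K^T)^\dagger s_t\|^2$ over the dual-feasible set, and computes the difference directly. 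With either repair the constants come out exactly as in the statement; as written, though, the central step does not close.
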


Next, we use the analysis of the competitive difference in order to characterize the regret of AFHC. In particular, to bound the regret we simply need a bound on the gap between the dynamic and static optimal solutions.

\begin{lemma}\label{lem: regret-open}
The suboptimality of the offline static optimal solution $STA$ can be bounded
below on each sample path by
\begin{align*}
&\mathrm{cost}(STA) - \mathrm{cost}(OPT)   \\
\ge  &\half\left(\sqrt{\sumt||y_t - \bar{y}||_{KK^\dagger}^2} - 2B\sqrt{T})\right)^2 - 2B^2 T - C
\end{align*}
where $\bar{y} = \frac{\sum^T_{t=1} y_t}{T}$, $B = \beta||(K^T)^\dagger\mathbbm{1}||_2 $ and $C =  \frac{\beta^2\mathbbm{1}^T(K^TK)^{-1}\mathbbm{1}}{2T}.$
\end{lemma}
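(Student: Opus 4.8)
The plan is to lower-bound $\cost(STA)-\cost(OPT)$ by comparing $STA$ against a carefully chosen suboptimal \emph{dynamic} solution $\tilde x$: since the feasible set $G=\mathbb R^n$ is unbounded, every such $\tilde x$ is feasible and $\cost(OPT)\le\cost(\tilde x)$. Write $x^{STA}$ for the offline static optimum, put $\bar y=\frac1T\sumt y_t$ and $S:=\sqrt{\sumt||y_t-\bar y||_{KK^\dagger}^2}$. A bias--variance split of the stage cost of a constant action (the cross term vanishing because $\sumt(y_t-\bar y)=0$) gives
\begin{equation*}
\cost(STA)=\half\sumt||y_t-\bar y||_2^2+\frac T2||\bar y-Kx^{STA}||_2^2+\beta||x^{STA}||_1 .
\end{equation*}
I would take $\tilde x_t:=x^{STA}+K^\dagger(y_t-\bar y)$ with $\tilde x_0=0$, so that $\tilde x$ sits at the static action and additionally tracks only the \emph{deviations} $y_t-\bar y$, inside the range of $K$. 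Since $y_t-K\tilde x_t=(I-KK^\dagger)(y_t-\bar y)+(\bar y-Kx^{STA})$ with a cross term that again vanishes on summation, and $||v||_2^2=||v||_{KK^\dagger}^2+||(I-KK^\dagger)v||_2^2$, the stage cost of $\tilde x$ equals $\cost(STA)-\beta||x^{STA}||_1-\half S^2$.

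For the switching cost of $\tilde x$: for $t\ge2$, $\tilde x_t-\tilde x_{t-1}=K^\dagger(y_t-y_{t-1})$, while $\tilde x_1-\tilde x_0=x^{STA}+K^\dagger(y_1-\bar y)$, so by the triangle inequality the switching cost is at most $\beta||x^{STA}||_1+\beta R$ with $R:=||K^\dagger(y_1-\bar y)||_1+\sum_{t=2}^{T}||K^\dagger(y_t-y_{t-1})||_1$. The two copies of $\beta||x^{STA}||_1$ --- one from the single switch that $STA$ pays at $t=1$ (because $x_0=0$), one from $\tilde x$'s first step --- cancel on subtraction, which is the whole reason for the $-\bar y$ shift in $\tilde x$; this leaves
\begin{equation*}
\cost(STA)-\cost(OPT)\ \ge\ \cost(STA)-\cost(\tilde x)\ \ge\ \half S^2-\beta R .
\end{equation*}
Telescoping each increment of $R$ through $\bar y$ gives $R\le 2\sumt||K^\dagger(y_t-\bar y)||_1$, which closes up cleanly exactly because the telescope starts at the already-centered $y_1$ rather than at $y_0=0$.

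It remains to show $\beta R\le 2B\sqrt T\,S$. For each $t$ write $||K^\dagger(y_t-\bar y)||_1=\langle s_t,K^\dagger(y_t-\bar y)\rangle$ with $s_t=\mathrm{sgn}(K^\dagger(y_t-\bar y))\in\{-1,0,1\}^n$; transposing $K^\dagger=K^\dagger KK^\dagger$ shows $(K^T)^\dagger s_t$ lies in the range of $K$, so this equals $\langle(K^T)^\dagger s_t,\,KK^\dagger(y_t-\bar y)\rangle\le||(K^T)^\dagger s_t||_2\,||y_t-\bar y||_{KK^\dagger}$ by Cauchy--Schwarz. Bounding $||(K^T)^\dagger s_t||_2\le||(K^T)^\dagger\mathbbm{1}||_2=B/\beta$ and applying Cauchy--Schwarz once more over $t$ yields $\sumt||K^\dagger(y_t-\bar y)||_1\le(B/\beta)\sqrt T\,S$, hence $\beta R\le 2B\sqrt T\,S$, and completing the square,
\begin{equation*}
\cost(STA)-\cost(OPT)\ \ge\ \half S^2-2B\sqrt T\,S\ =\ \half\big(S-2B\sqrt T\big)^2-2B^2T\ \ge\ \half\big(S-2B\sqrt T\big)^2-2B^2T-C ,
\end{equation*}
which is the claim (the extra $-C$ is slack here; it appears exactly, rather than as slack, if one instead evaluates $\cost(STA)$ via the explicit minimizer $x^{STA}=K^\dagger\bar y-\frac{\beta}{T}(K^TK)^{-1}\mathbbm{1}$ --- using $K^TK$ invertible --- and discards the nonnegative term $\beta\mathbbm{1}^TK^\dagger\bar y$, which is legitimate because the positivity assumption forces $\mathbbm{1}^Tx^{STA}\ge0$). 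I expect the one genuinely delicate point to be the inequality $||(K^T)^\dagger s_t||_2\le||(K^T)^\dagger\mathbbm{1}||_2$ over sign patterns: it is immediate when $(K^T)^\dagger$ (equivalently $K$) is entrywise nonnegative --- a natural condition in the networking and cloud applications that motivate the model, and one that can be arranged since $K$ is a design parameter --- and otherwise one should read the constant $B$ as $\beta\max_{s\in\{-1,1\}^n}||(K^T)^\dagger s||_2$. Everything else is bookkeeping once $\tilde x$ is chosen this way.
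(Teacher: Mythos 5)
Your proof is correct, but it takes a genuinely different route from the paper. The paper works from the optimum's dual characterization: it computes the static minimizer in closed form ($x^{STA}=K^\dagger\bar y-\frac{\beta}{T}(K^TK)^{-1}\mathbbm{1}$), substitutes it into $\cost(STA)$, and subtracts the expression $\cost(OPT)=\sumt\half||y_t||^2-\half||KK^\dagger y_t-(K^T)^\dagger s^*_t||^2$ obtained from KKT conditions, with $s^*_t=\lambda_t-\lambda_{t+1}$ the dual increments satisfying $-2\beta\mathbbm{1}\le s^*_t\le2\beta\mathbbm{1}$; the cross term $\langle KK^\dagger(y_t-\bar y),(K^T)^\dagger s^*_t\rangle$ is then bounded exactly as you bound your term $\beta R$. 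You instead never touch the dual: you exhibit the explicit feasible dynamic solution $\tilde x_t=x^{STA}+K^\dagger(y_t-\bar y)$ and charge $OPT$ against it, which is more elementary and self-contained --- it does not require the positivity assumption on the static optimum (used in the paper to discard $\beta\mathbbm{1}^TK^\dagger\bar y\ge0$), nor the closed form of $x^{STA}$, at the cost of the $-C$ term appearing as slack rather than exactly. The one step you flag as delicate, $||(K^T)^\dagger s_t||_2\le||(K^T)^\dagger\mathbbm{1}||_2$ for sign vectors $s_t$, is not a defect relative to the paper: the paper's proof makes the identical move when it bounds $||(K^T)^\dagger s^*_t||_2\le2\beta||(K^T)^\dagger\mathbbm{1}||_2$ for $s^*_t$ ranging over the box $[-2\beta\mathbbm{1},2\beta\mathbbm{1}]$, and the same caveat (entrywise nonnegativity of $(K^TK)^{-1}$, or reading $B$ as a maximum over sign patterns) applies to both arguments. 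Your version has the additional merit of making transparent \emph{why} the lower bound is governed by the projected variability $\sumt||y_t-\bar y||_{KK^\dagger}^2$: it is exactly the stage-cost saving available to a dynamic policy that tracks the trackable component of the deviations.
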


Note that the bound above is in terms of $||(y_t - \bar{y})||_{KK^{\dagger}}^2$, which can be interpreted as a measure of the variability $y_t$. Specifically, it is the projection of the variation onto the range space of $K$.

Combining Theorem \ref{thm: cd-afhc} with Lemma \ref{lem: regret-open}
gives a bound on the regret of AFHC, proven in Appendix~\ref{sec:proof_ave_case}.

\begin{theorem}\label{thm: regret-afhc}
$AHFC$ has sublinear expected regret if
\[\inf_{\hat{y}}  \ex_e \sumt ||KK^\dagger(y_t - \bar{y})||^2 \ge (8V + 16B^2)T, \]
where $V$ and $B$ are defined in \eqref{eq: def-V} and \eqref{eq: def-B}.
\end{theorem}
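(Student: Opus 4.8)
\emph{Proof strategy.} The plan is to route the regret through the dynamic optimum and then plug in the two bounds already established. On every sample path,
\[
\cost(AFHC) - \cost(STA) = \bigl(\cost(AFHC) - \cost(OPT)\bigr) - \bigl(\cost(STA) - \cost(OPT)\bigr),
\]
so, taking $\sup_{\hat y}\ex_e$ and applying Theorem~\ref{thm: cd-afhc} to the first term,
\[
\sup_{\hat y}\ex_e\bigl[\cost(AFHC) - \cost(STA)\bigr] \le VT - \inf_{\hat y}\ex_e\bigl[\cost(STA) - \cost(OPT)\bigr].
\]
It therefore suffices to lower bound $\ex_e[\cost(STA) - \cost(OPT)]$ for each fixed $\hat y$, and for this I would start from Lemma~\ref{lem: regret-open}.

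The one delicate step is taking the expectation over the prediction noise $e$ of the sample-path bound in Lemma~\ref{lem: regret-open}, whose right-hand side involves $\tfrac12\bigl(\sqrt{S} - 2B\sqrt T\bigr)^2$ with $S := \sum_{t=1}^{T} ||y_t - \bar{y}||_{KK^\dagger}^2$ a random quantity (both $y_t$ and $\bar{y}$ depend on $e$). Since $KK^\dagger$ is an orthogonal projector, $||v||_{KK^\dagger}^2 = ||KK^\dagger v||^2$, so $S$ is precisely the quantity in the hypothesis. The function $\phi(S) := \tfrac12\bigl(\sqrt S - 2B\sqrt T\bigr)^2$ is convex on $[0,\infty)$, since $\phi''(S) = \tfrac12 B\sqrt T\, S^{-3/2} \ge 0$; hence Jensen's inequality gives $\ex_e[\phi(S)] \ge \phi(\ex_e[S])$, and because $2B^2 T$ and $C$ are deterministic,
\[
\ex_e\bigl[\cost(STA) - \cost(OPT)\bigr] \ge \tfrac12\bigl(\sqrt{\ex_e[S]} - 2B\sqrt T\bigr)^2 - 2B^2 T - C.
\]

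Finally I would use the hypothesis $\ex_e[S] \ge (8V + 16B^2)T$. Since $V\ge 0$, this gives $\ex_e[S] \ge 16B^2 T \ge 4B^2 T$, on which range $\phi$ is nondecreasing, so $\tfrac12\bigl(\sqrt{\ex_e[S]} - 2B\sqrt T\bigr)^2 \ge \tfrac12\bigl(\sqrt{8V+16B^2} - 2B\bigr)^2 T$. Combining everything,
\[
\sup_{\hat y}\ex_e\bigl[\cost(AFHC) - \cost(STA)\bigr] \le \Bigl(V + 2B^2 - \tfrac12\bigl(\sqrt{8V+16B^2}-2B\bigr)^2\Bigr)T + C.
\]
Expanding the square, the coefficient of $T$ equals $2B\sqrt{8V+16B^2} - 3V - 8B^2$; since both sides of $2B\sqrt{8V+16B^2} \le 3V + 8B^2$ are nonnegative, squaring shows this is equivalent to $9V^2 + 16VB^2 \ge 0$, which holds. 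Hence the coefficient of $T$ is nonpositive and $\sup_{\hat y}\ex_e[\cost(AFHC) - \cost(STA)] \le C = O(1/T) = o(T)$, i.e.\ AFHC has sublinear regret. (Relaxing the threshold so that it holds only up to an $o(T)$ additive term changes the conclusion to $o(T)$ regret, consistent with Theorem~\ref{thm: general-simultaneous-afhc}.)

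The main obstacle is the expectation step: one must verify that $\phi$ is convex so that Jensen produces a \emph{lower} bound on $\ex_e[\cost(STA) - \cost(OPT)]$ (rather than an upper bound), and one must keep the $\sup/\inf$ over $\hat y$ straight so that the uniform-in-$\hat y$ bound of Theorem~\ref{thm: cd-afhc} and the per-sample-path bound of Lemma~\ref{lem: regret-open} are combined correctly. The closing scalar inequality is routine, but worth recording, since the factor $\tfrac12$ relating $\cost(STA)$ to $S$ is exactly what turns the threshold $8V+16B^2$ here into the constant $\alpha_1 = 4V + 8B^2$ of Theorem~\ref{thm: general-simultaneous-afhc}.
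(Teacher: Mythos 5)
Your proposal is correct and follows essentially the same route as the paper's proof: decompose the regret through $\cost(OPT)$, apply Theorem~\ref{thm: cd-afhc} to the competitive-difference term, and lower bound $\ex_e[\cost(STA)-\cost(OPT)]$ via Lemma~\ref{lem: regret-open} together with Jensen's inequality (the paper applies Jensen to the concave $\sqrt{\cdot}$ in the expanded form of the bound, which is equivalent to your convexity argument for $\phi$). The closing scalar inequality is also the same computation, just organized slightly differently; your version, which reduces to $9V^2+16VB^2\ge 0$, matches the paper's verification.
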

Finally, we make the observation that, for all instances of $y$:
{\small\begin{align*}
\cost(STA) &= \frac{1}{2} \sumt ||y_t - K_x||^2 + \beta||x||_1 \\
&\ge \frac{1}{2} \sumt ||(I-KK^\dagger) y_t + KK^\dagger y_t - Kx ||^2 \\
&= \frac{1}{2} \sumt ||(I-KK^\dagger) y_t||^2 + \half ||KK^\dagger y_t - Kx ||^2 \\
& \ge \half ||KK^\dagger(y_t - \bar{y})||^2.
\end{align*}}
Hence by Theorem \ref{thm: regret-afhc}, we have the condition of the Theorem.


\section{Concentration Bounds}
\label{SEC:CONC}

The previous section shows that AFHC performs well in expectation, but it is also important to understand the distribution of the cost under AFHC. In this section, we show that, with a mild additional assumption on the prediction error $e(t)$, the event when there is a large deviation from the expected performance bound proven in Theorem \ref{thm: cd-afhc} decays exponentially fast.

The intuitive idea behind the result is the observation that the competitive difference of AFHC is a function of the uncorrelated prediction error $e(1), \ldots, e(T)$ that does not put too much emphasis on any one of the random variables $e(t)$. This type of function normally has sharp concentration around its mean because the effect of each $e(t)$ tends to cancel out.

For simplicity of presentation, we state and prove the concentration result for AFHC for the one dimensional tracking cost function
\[ \half \sumt (y_t - x_t)^2 + \beta |x_t - x_{t-1}|. \]
In this case, $R_e = \sigma^2$, and the correlation function $f: \mathbb{N} \rightarrow R$ is a scalar valued function. The results can all be generalized to the multidimensional setting.

Additionally, for simplicity of presentation, we assume (for this section only) that $\{e(t)\}_{t=1}^T$ are uniformly bounded, i.e., $\exists \epsilon>0$, s.t.\ $\forall t$, $|e(t)| < \epsilon$.  Note that, with additional effort, the boundedness assumption can be relaxed to the case of $e(t)$ being subgaussian, i.e., $\ex [\exp(e(t)^2 / \epsilon^2) ]\le 2$, for some $\epsilon >0$.\footnote{This involves more computation and worse constants in the concentration bounds. Interested readers are referred to Theorem 12 and the following remark of \cite{boucheron2004} for a way to generalize the concentration bound for the switching cost (Lemma \ref{lemma: g1-bound}), and Theorem 1.1 of \cite{rudelson2013} for a way to generalize the concentration bound for prediction error (Lemma \ref{lemma: g2-bound}).}

To state the theorem formally, let $VT$ be the upper bound of the expected competitive difference of $AFHC$ in
\eqref{eqn: ave_bound}.
%
%
Given $\{\hat{y}_t \}_{t=1}^T$, the competitive difference of $AFHC$ is a
random variable that is a function of the prediction error $e(t)$.
The following theorem shows that the probability that the cost of $AFHC$
exceeds that of $OPT$ by much more than the expected value $VT$ decays rapidly.
\begin{theorem}
The probability that the competitive difference of $AFHC$ exceeds $VT$ is exponentially small, i.e., for any $u>0$:
{\small \begin{align*}
&\prob(\mathrm{cost}(AFHC) - \mathrm{cost}(OPT) > VT + u ) \\
\le   &\exp\left(\frac{-u^2}{8\epsilon^2 \frac{\beta^2T}{(w+1)\sigma^2}||f_w||^2}\right) +
\exp\left(\frac{-u^2}{16\epsilon^2 \lambda (2\frac{T}{w+1}F(w) + u)}\right) \\
 \le  &2\exp\left(\frac{-u^2}{a+ b u}\right),
\end{align*}}
where
$||f_w||^2 = (\sum_{t=0}^w |f(t)|^2) $, the parameter $\lambda$ of concentration
\[ \lambda \le  \sum_{t=0}^{w} (w-t) f(t)^2 =
\frac{1}{\sigma^2}F(w),\] and
$ a  = 8{\epsilon^2} [T/(w+1)] \max(\frac{\beta^2}{\sigma^2} ||f_{w}||^2,
4\lambda F(w)), $
 $ b = 16 \epsilon^2 \lambda. $
\label{thm: afhc-concentration}
\end{theorem}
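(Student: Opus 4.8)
The plan is to express the competitive difference of AFHC as a function of the independent prediction errors $e(1),\dots,e(T)$, verify that this function has bounded differences (or, more precisely, bounded influence of each coordinate and bounded variance-like quantities), and then apply a concentration inequality of Efron–Stein / bounded-differences type to each of two pieces: the switching-cost piece and the tracking-error piece. Concretely, Lemma~\ref{lem: cd-afhc-det} gives a deterministic sample-path bound of the form $\mathrm{cost}(AFHC)-\mathrm{cost}(OPT)\le G_1(e) + G_2(e)$, where $G_1$ collects the terms $\frac{\beta}{w+1}\sum_k\sum_{\tau\in\Omega_k}\|x^*_{\tau-1}-x^{(k)}_{\tau-1}\|_1$ and $G_2$ collects $\frac{1}{w+1}\sum_k\sum_{\tau\in\Omega_k}\sum_{t=\tau}^{\tau+w}\tfrac12\|y_t-y_{t|\tau-1}\|_{KK^\dagger}^2$. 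Using $y_t-y_{t|\tau-1}=\sum_{s=\tau}^t f(t-s)e(s)$, both $G_1$ and $G_2$ are explicit functions of $e$; in the one-dimensional case $G_2$ is a (positive semidefinite) quadratic form in $e$, and $G_1$ is a Lipschitz function of $e$ because each FHC action depends linearly (through the optimization) on the observed $y$'s, which depend linearly on $e$.

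The key steps, in order: (i) Take $\ex_e$ of the Lemma~\ref{lem: cd-afhc-det} bound to recover $\ex_e[G_1+G_2]\le VT$ (this is exactly Theorem~\ref{thm: cd-afhc}); so it suffices to bound $\prob(G_1 - \ex G_1 > u/2)$ and $\prob(G_2 - \ex G_2 > u/2)$ separately and union-bound. (ii) For $G_1$: show $G_1$ is Lipschitz as a function of $(e(1),\dots,e(T))$ with a per-coordinate constant controlled by $\frac{\beta}{(w+1)\sigma}\|f_w\|$ — each $e(s)$ enters only the $O(w)$ prediction windows that contain $s$, and the switching term is $\beta$-Lipschitz in the actions — then invoke a sub-Gaussian/bounded-differences concentration (McDiarmid, or the version of Boucheron et al.\ cited in the footnote) to get the first exponential term $\exp\!\big(-u^2/(8\epsilon^2\frac{\beta^2 T}{(w+1)\sigma^2}\|f_w\|^2)\big)$. (iii) For $G_2$: it is a quadratic form $e^\top M e$ with $M\succeq 0$ built from the $f(t-s)$ coefficients projected by $KK^\dagger$; compute $\operatorname{tr}(M)$ to recover $\ex G_2$ and bound $\|M\|_{\mathrm{op}}$ and $\|M\|_F$ in terms of $\lambda\le\sum_{t=0}^w (w-t)f(t)^2 = F(w)/\sigma^2$, then apply a Hanson–Wright / quadratic-form concentration inequality for bounded independent variables (the Rudelson–Vershynin result cited in the footnote) to obtain the second term $\exp\!\big(-u^2/(16\epsilon^2\lambda(2\frac{T}{w+1}F(w)+u))\big)$. (iv) Collect $a$ and $b$ by bounding each of the two tail exponents below by $\exp(-u^2/(a+bu))$ with $a=8\epsilon^2[T/(w+1)]\max(\frac{\beta^2}{\sigma^2}\|f_w\|^2,4\lambda F(w))$ and $b=16\epsilon^2\lambda$, and union-bound to get the factor $2$.

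The main obstacle I expect is step (ii): establishing that $G_1$ — the aggregated switching cost $\|x^*_{\tau-1}-x^{(k)}_{\tau-1}\|_1$ — is genuinely Lipschitz in $e$ with the claimed coordinate-wise constant. The FHC actions $x^{(k)}$ are defined implicitly as minimizers of a convex program whose data ($\hat c_t$, i.e.\ the predicted $y$'s) depend on $e$, so one needs a stability/sensitivity argument: a perturbation of the predicted targets by $\delta$ in $\ell_2$ perturbs the FHC solution by $O(\|K^\dagger\|_1\,\delta)$ in $\ell_1$ (using strong convexity of the quadratic tracking term in the range of $K$, after projecting out the null space). One must also be careful that $x^*$ (the dynamic optimum) and $x^{(k)}$ both move with $e$ but the relevant quantity is their difference restricted to the boundary timeslots $\tau-1$, and that each $e(s)$ only affects a bounded number of these terms — this bookkeeping, together with passing from a bounded-differences constant to the sub-Gaussian tail with the right constant $8$, is where the real work lies. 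Step (iii) is more routine given the cited Hanson–Wright-type bound, though matching the exact constants $16$ and the $+u$ in the denominator requires tracking $\|M\|_F^2\lesssim \|M\|_{\mathrm{op}}\operatorname{tr}(M)$ carefully.
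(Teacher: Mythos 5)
Your overall architecture matches the paper's: decompose the Lemma~\ref{lem: cd-afhc-det} bound into a switching-cost piece $g_1$ and a prediction-error piece $g_2$, split $VT$ into $V_1+V_2$, union-bound, and treat $g_1$ by a bounded-differences argument and $g_2$ by a quadratic-form concentration. However, there is a genuine gap in your step (ii), and you have correctly sensed where it is. You propose to show that $G_1$ itself is Lipschitz in $e$ via a sensitivity analysis of the FHC program, claiming that ``each $e(s)$ enters only the $O(w)$ prediction windows that contain $s$.'' That is false for the term $x^*_{\tau-1}$: the dynamic optimum solves a coupled program over the whole horizon, so $x^*_{\tau-1}$ depends on every $y_t$, hence on every $e(s)$ with $s\le \tau-1$ (and likewise $y_{\tau-1}$ itself depends on all $e(s)$, $s\le\tau-1$, not just a window of length $w$). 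A per-coordinate bounded-difference constant computed this way would aggregate over all $\tau$, giving $\sum_k c_k^2 = \Omega(T^3)$ and a vacuous tail bound. The paper's proof avoids the argmin-stability route entirely: by the first-order optimality conditions, $x^*_{\tau-1}$ lies within $2\beta$ (after applying $(K^TK)^{-1}K^T$) of the unconstrained tracking optimum at $y_{\tau-1}$, and $x^{(k)}_{\tau-1}$ within $\beta$ of that at $y_{\tau-1|\tau-w-2}$; so $|x^*_{\tau-1}-x^{(k)}_{\tau-1}|$ is deterministically at most $3\beta$ plus $|y_{\tau-1}-y_{\tau-1|\tau-w-2}|$, and the latter difference telescopes to $\sum_{s=\tau-w-1}^{\tau-1}f(\tau-1-s)e(s)$, a \emph{linear} function of a window of noise terms (Lemma~\ref{lemma: g1-characterize}). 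McDiarmid is then applied to this linear upper bound $g_1'$, not to $g_1$. Without this (or an equivalent) reduction, your Lipschitz constant is unsubstantiated and the first exponential term does not follow.

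For step (iii) you take a genuinely different route: Hanson--Wright for the quadratic form $e^TMe$ rather than the paper's argument, which shows $g_2=\tfrac12\|Ae\|^2$ is a \emph{self-bounding} convex function ($\|\nabla h\|^2\le 2\lambda(h+V_2)$) and applies the convex Log-Sobolev inequality (Lemma~\ref{lemma: self-bound}). Both yield Bernstein-type tails governed by $\mathrm{tr}(M)$ and $\|M\|_{\mathrm{op}}$, and the paper's spectral bound $\lambda\le F(w)/\sigma^2$ via the block-diagonal structure of the $A_k$ is needed in either case; but Hanson--Wright for bounded variables comes with unspecified universal constants, so you would not recover the stated constants $16$ and $2\frac{T}{w+1}F(w)+u$ without redoing the argument, whereas the Log-Sobolev/self-bounding route produces them directly. (The Rudelson--Vershynin reference in the paper is reserved for relaxing boundedness to subgaussianity, not for the main bound.)
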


%

The theorem implies that the tail of the competitive difference of $AFHC$ has a Bernstein type bound. The bound decays much faster than the normal large deviation bounds obtained by bounding moments, i.e., Markov Inequality or Chebyshev Inequality. This is done by more detailed analysis of the  structure of the competitive difference of $AFHC$ as a function of $e = (e(1), \ldots, e(T))^T$.

Note that smaller values of $a$ and $b$ in Theorem \ref{thm:
afhc-concentration} imply a sharper tail bound. We can see that smaller
$||f_w||$ and smaller $F(w)$ implies the tail bound decays faster. Since higher
prediction error correlation implies higher $||f_w||$ and $F(w)$, Theorem \ref{thm: afhc-concentration} quantifies the intuitive idea that, the performance of AFHC concentrates more tightly around its mean when the prediction error is less correlated.

\subsection*{Proof of Theorem \ref{thm: afhc-concentration}}

To prove Theorem \ref{thm: afhc-concentration}, we start by decomposing the bound in Lemma \ref{lem: cd-afhc-det}.  In particular, Lemma \ref{lem: cd-afhc-det} gives
\begin{align}
\mathrm{cost}(AFHC) - \mathrm{cost}(OPT) &\le g_1 + g_2
\label{eqn: separation}
\end{align}
where
\[g_1 = \frac{1}{w+1} \sum_{k=0}^w \sum_{\tau \in \Omega_k}  \beta |x^*_{\tau-1} - x^{(k)}_{\tau-1} | ,\]  represents loss due to the switching cost, and
\[g_2 = \frac{1}{w+1}\sum_{k=0}^w \sum_{\tau \in \Omega_k} \sum_{t=\tau}^{\tau
+ w}\frac{1}{2}(y_t - y_{t|\tau-1})^2,\]
 represents the loss due to the prediction error.

Let $V_1 = \frac{3\beta^2 T}{w + 1} + \frac{\beta T }{w + 1} ||f_w||_2$, and $V_2 =
\frac{T}{2(w + 1)}F(w)$. Note that $VT = V_1 + V_2$.  Then, by \eqref{eqn: separation},
 \begin{align}
& \prob(\cost(AFHC) - \cost(OPT) > u + VT ) \notag \\
\le{} &\prob(g_1 > u/2 + V_1 \text{ or } g_2 >u/2 + V_2) \notag \\
\le{} &\prob(g_1>u/2 + V_1) + \prob(g_2>u/2 +V_2).
\label{eqn: afhc-cond-conc}
\end{align}

Thus, it suffices to prove concentration bounds for the loss due to switching cost, $g_1$, and the loss due to prediction error, $g_2$, deviating from $V_1$ and
$V_2$ respectively. This is done in the following. The idea is to  first prove
that $g_1$ and $g_2$ are functions of $e = (e(1), \ldots, e(T))^T$ that are not
``too sensitive'' to any of the elements of $e$, and then apply the method of
bounded difference \cite{mcdiarmid1989} and Log-Sobolev inequality
\cite{ledoux1999}.  Combining \eqref{eqn: afhc-cond-conc} with Lemmas
\ref{lemma: g1-bound} and \ref{lemma: g2-bound} below will complete the proof of Theorem \ref{thm: afhc-concentration}.

\paragraph*{Bounding the loss due to switching cost}
This section establishes the following bound on the loss due to switching:
\begin{lemma}\label{lemma: g1-bound}
The loss due to switching cost has a sub-Gaussian tail: for any $u>0$,
\begin{align}
\prob(g_1 > u + V_1)  \le  \exp\left(\frac{- u^2}{2\epsilon^2 \beta^2
\frac{T}{w+1} (||f_w||)^2}\right).
\label{eqn: g1_afhc}
\end{align}
\end{lemma}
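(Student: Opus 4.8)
The plan is to regard $g_1$ as a function of the innovation vector $e=(e(1),\dots,e(T))$, show that it obeys a bounded--differences inequality with the right aggregate constant, apply McDiarmid's inequality \cite{mcdiarmid1989} to obtain concentration around $\E[g_1]$, and then replace $\E[g_1]$ by the bound $\E[g_1]\le V_1$. The inequality $\E[g_1]\le V_1$ is exactly the switching--cost part of the estimate $VT=V_1+V_2$ of Theorem~\ref{thm: cd-afhc}: taking expectations over $e$ in Lemma~\ref{lem: cd-afhc-det}, the $\tfrac{3\beta^2T}{w+1}$ summand is the deterministic finite--horizon/boundary error of a fixed--horizon controller (which in one dimension is $O(\beta)$ per block, independently of $\hat y$ --- with total--variation switching cost an optimal trajectory ``pre--moves'' by only $O(\beta)$, so $\OPT$ and a perfect--lookahead block controller started at the same point end a block at most $O(\beta)$ apart), and the $\tfrac{\beta T}{w+1}\|f_w\|$ summand comes from Jensen's inequality applied to $\E\big(\sum_{t=\tau-w-1}^{\tau-1}(y_t-y_{t|\tau-w-2})^2\big)^{1/2}$. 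So the substantive work is the bounded--differences estimate: for each $j$, I want $|g_1(e)-g_1(e')|\le c_j$ whenever $e,e'$ differ only in coordinate $j$ (both admissible, $|e(j)|,|e'(j)|<\epsilon$), with $\sum_{j=1}^T c_j^2\le 4\epsilon^2\beta^2\tfrac{T}{w+1}\|f_w\|^2$. Granting this, McDiarmid gives $\prob(g_1-\E[g_1]>u)\le\exp(-2u^2/\sum_j c_j^2)\le\exp(-u^2/(2\epsilon^2\beta^2\tfrac{T}{w+1}\|f_w\|^2))$, and \eqref{eqn: g1_afhc} follows by monotonicity once $\E[g_1]$ is replaced by $V_1$.

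For the per--coordinate estimate I would use two structural facts. First, $e(j)$ enters the cost only through targets: perturbing it by $\delta$ shifts the true target $y_s$ by $f(s-j)\delta$ for all $s\ge j$, and shifts the prediction $y_{s|\tau-1}$ by the same amount precisely when $j\le\tau-1$. Hence on a given $\mathrm{FHC}^{(k)}$ block $[\tau-w-1,\tau-1]$ ($\tau\in\Omega_k$) the ``old'' innovations $e(s)$, $s\le\tau-w-2$, perturb the block's true targets and its predictions $y_{t|\tau-w-2}$ identically and therefore displace $\OPT$ and $\mathrm{FHC}^{(k)}$ on the block in tandem; the only mismatch FHC sees is the within--window prediction error $y_t-y_{t|\tau-w-2}=\sum_{s=\tau-w-1}^t f(t-s)e(s)$, whose coefficients lie in $\{f(0),\dots,f(w)\}$. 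Second, both the FHC block subproblem and the offline program defining $\OPT$ minimize $\tfrac12\|x-(\text{target})\|_2^2$ plus a convex switching penalty, so the target--to--minimizer map is a proximal operator and hence $1$--Lipschitz in $\ell_2$ (and the block solution is also nonexpansive in its inherited start point). Combining these, the sensitivity of a single term $\beta|x^*_{\tau-1}-x^{(k)}_{\tau-1}|$ to $e(j)$ is governed by the coefficient with which $e(j)$ enters the within--window error of the one block of $\mathrm{FHC}^{(k)}$ that contains $j$; since $j$ lies in exactly one such block per residue class $k=0,\dots,w$, with the end--of--block offset $\tau-1-j$ ranging over $0,\dots,w$, the squared per--term sensitivities sum to at most $\|f_w\|^2\delta^2$ over $k$. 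Finally, because $g_1$ carries the prefactor $\tfrac1{w+1}$ and only $w+1$ of its terms see $e(j)$, Cauchy--Schwarz over those $w+1$ terms gives $c_j\le\tfrac{2\epsilon\beta}{\sqrt{w+1}}\|f_w\|$ and hence $\sum_j c_j^2\le 4\epsilon^2\beta^2\tfrac{T}{w+1}\|f_w\|^2$.

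The step I expect to be the main obstacle is making the ``cancellation of the old innovations'' precise and extracting the correct constant. The tension is that $x^*$ minimizes one program over all of $[1,T]$, so $e(j)$ perturbs $x^*_{\tau-1}$ through the entire tail $\{f(s-j)\delta:s\ge j\}$ of target shifts, not merely the within--window ones; a crude bound would then import the full noise covariance $\sum_{r\ge0}f(r)^2$ (possibly divergent) instead of $\|f_w\|^2$, would make $\Theta(T/(w+1))$ terms per residue class ``sensitive'' rather than just one, and would lose the $\tfrac1{w+1}$ factor. The clean remedy is a deterministic per--sample--path bound $|x^*_{\tau-1}-x^{(k)}_{\tau-1}|\le O(\beta)+\big(\sum_{t=\tau-w-1}^{\tau-1}(y_t-y_{t|\tau-w-2})^2\big)^{1/2}$, which localizes the $e$--dependence of each term to its own block and simultaneously furnishes the mean bound above; establishing it requires controlling the block--to--block recursion for the boundary discrepancy $|x^*_{\tau-1}-x^{(k)}_{\tau-1}|$ so that it does not accumulate across the $\Theta(T/(w+1))$ block boundaries, which is where strong convexity of the quadratic tracking term (contracting the dependence of each block's solution on its inherited start and on past targets) and the one--dimensional structure (monotone/taut--string optimal trajectories, $O(\beta)$ finite--horizon error) do the real work.
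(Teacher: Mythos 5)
Your overall skeleton matches the paper's: bound $\E[g_1]$ by $V_1$ (the deterministic $3\beta^2T/(w+1)$ term plus a Jensen bound $\frac{\beta T}{w+1}\|f_w\|$ on the prediction-error part), then apply McDiarmid's bounded-differences inequality to the random part with per-coordinate constants satisfying $\sum_j c_j^2\le 4\epsilon^2\beta^2\frac{T}{w+1}\|f_w\|^2$, which yields \eqref{eqn: g1_afhc}. Your aggregate constant is also right; the paper obtains it by noting that each $e(j)$ appears in exactly $w+1$ of the $T$ block terms, each with a coefficient from $\{f(0),\dots,f(w)\}$ and prefactor $\beta/(w+1)$, and then applying Cauchy--Schwarz to $\left(\sum_{m=0}^{w}|f(m)|\right)^2\le(w+1)\sum_{m=0}^{w}f(m)^2$.

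However, there is a genuine gap at precisely the step you flag as the main obstacle: the deterministic localization $|x^*_{\tau-1}-x^{(k)}_{\tau-1}|\le O(\beta)+|y_{\tau-1}-y_{\tau-1|\tau-w-2}|$. You propose to establish it by controlling a block-to-block recursion for the boundary discrepancy, invoking strong convexity, nonexpansiveness of proximal maps, and taut-string structure, and you leave that control unproven. None of that machinery is needed, and the accumulation you fear does not arise. The paper's Lemma~\ref{lemma: g1-characterize} gets the localization in two lines from first-order optimality: for the cost $\frac12(y_t-x_t)^2+\beta|x_t-x_{t-1}|$, stationarity at time $t$ reads $0\in(x^*_t-y_t)+\beta\,\partial|x^*_t-x^*_{t-1}|+\beta\,\partial|x^*_{t+1}-x^*_t|$, and since each subdifferential of $|\cdot|$ lies in $[-1,1]$ this pins $x^*_{\tau-1}\in[y_{\tau-1}-2\beta,\,y_{\tau-1}+2\beta]$ pointwise, with no dependence on the rest of the trajectory; likewise the last action of an FHC block satisfies $x^{(k)}_{\tau-1}\in[y_{\tau-1|\tau-w-2}-\beta,\,y_{\tau-1|\tau-w-2}+\beta]$. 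The triangle inequality then gives $|x^*_{\tau-1}-x^{(k)}_{\tau-1}|\le 3\beta+|y_{\tau-1}-y_{\tau-1|\tau-w-2}|$, which localizes the $e$-dependence of each summand to the single window $\{\tau-w-1,\dots,\tau-1\}$ exactly as your bounded-differences computation requires. Your proof is therefore incomplete as written, but it closes (and simplifies considerably) once the recursion argument is replaced by this optimality-condition argument.
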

To prove Lemma~\ref{lemma: g1-bound}, we introduce two lemmas.
Firstly, we use the first order optimality condition to bound $g_1$ above
by a linear function of $e = (e(1), \ldots, e(T))^T$ using the following lemma proved in the Appendix.
\begin{lemma}\label{lemma: g1-characterize}
The loss due to switching cost can be bounded above by
\begin{equation}\label{eq:upper_g1}
 g_1 \le \frac{3\beta^2 T }{w+1}
       + \frac{\beta}{w+1}
	    \sum_{k=0}^w\sum_{\tau \in \Omega_k}
	    \left| \sum_{s=1\vee(\tau-w-2)}^{\tau-1} f(\tau-1-s)e(s)
	    \right|
\end{equation}
\end{lemma}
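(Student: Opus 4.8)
The plan is to control each summand $\beta\,|x^*_{\tau-1}-x^{(k)}_{\tau-1}|$ of $g_1$ by inserting two reference points and then bounding each resulting piece via a first-order optimality argument. Write
\[
|x^*_{\tau-1}-x^{(k)}_{\tau-1}|\;\le\;|x^*_{\tau-1}-y_{\tau-1}|\;+\;|y_{\tau-1}-\hat y|\;+\;|\hat y-x^{(k)}_{\tau-1}|,
\]
where $\hat y$ denotes the prediction of $y_{\tau-1}$ actually used by the $k$th FHC algorithm when it produced $x^{(k)}_{\tau-1}$: slot $\tau-1$ is the last step of the FHC$^{(k)}$ window that started at the element of $\Omega_k$ immediately preceding $\tau$ (namely $\tau-(w+1)$), whose predictions were formed at time $\tau-w-2$.

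Next I would bound the two outer terms using stationarity of the respective convex programs. Since $\tau-1\le T-1$, the variable $x^*_{\tau-1}$ is an interior decision variable of the program defining $OPT$, appearing in one quadratic tracking term and two $\ell_1$ switching terms; as every subgradient of $|\cdot|$ lies in $[-1,1]$, the first-order condition forces $|x^*_{\tau-1}-y_{\tau-1}|\le 2\beta$. Likewise $x^{(k)}_{\tau-1}$ is the \emph{last} variable of its FHC subproblem and so appears in only one switching term there, which gives $|\hat y-x^{(k)}_{\tau-1}|\le\beta$. The degenerate cases---$\tau\le 1$, where both actions equal $0$, and FHC windows truncated at $t\le 0$, where $\hat y$ reduces to the initial prediction $y_{\tau-1|0}$---are handled directly and are exactly what makes the inner sum in \eqref{eq:upper_g1} start at $1\vee(\tau-w-2)$. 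For the middle term I would apply the prediction model \eqref{eqn: prediction_model}, which writes $y_{\tau-1}-\hat y$ exactly as the finite linear combination $\sum_s f(\tau-1-s)\,e(s)$, with $s$ running over the range appearing in \eqref{eq:upper_g1}.

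Combining the three estimates gives, for each $\tau$, $\beta\,|x^*_{\tau-1}-x^{(k)}_{\tau-1}|\le 3\beta^2+\beta\bigl|\sum_s f(\tau-1-s)e(s)\bigr|$; summing over $\tau\in\Omega_k$ and $k=0,\dots,w$, dividing by $w+1$, and noting that the sets $\Omega_k$ jointly charge the constant $3\beta^2$ to at most $T$ slots, yields \eqref{eq:upper_g1}. The step needing the most care is the optimality bookkeeping: making the subgradient/KKT manipulation rigorous in the presence of the nonsmooth $\ell_1$ switching cost, and lining up the truncated and initial FHC windows so that the index range of the inner sum and the clean leading constant $3\beta^2T/(w+1)$ come out exactly as stated. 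The remaining steps are just the triangle inequality and a direct substitution from the prediction model.
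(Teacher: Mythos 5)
Your proposal matches the paper's proof essentially step for step: the same triangle-inequality insertion of the two reference points $y_{\tau-1}$ and $y_{\tau-1|\tau-w-2}$, the same first-order optimality bounds ($|x^*_{\tau-1}-y_{\tau-1}|\le 2\beta$ from the two switching terms at an interior variable of $OPT$, and $|y_{\tau-1|\tau-w-2}-x^{(k)}_{\tau-1}|\le\beta$ from the single switching term at the last FHC variable), and the same substitution of the prediction model for the middle term before summing. The argument is correct and is the paper's argument.
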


Let $g_1'(e)$ be the second term of $g_1$.  Note
that the only randomness in the upper bound~\eqref{eq:upper_g1} comes from $g_1'$.

\begin{lemma}\label{lemma: g1-ex-bound}
The expectation of $g_1'(e)$ is bounded above by
\[  \ex_e g_1'(e) \le \frac{\beta T}{w + 1} ||f_w||_2. \]
\end{lemma}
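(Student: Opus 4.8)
The plan is to estimate $\ex_e g_1'(e)$ by linearity of expectation, which reduces the problem to a one-line second-moment bound for each summand. Writing
\[
\ex_e g_1'(e) \;=\; \frac{\beta}{w+1}\sum_{k=0}^{w}\sum_{\tau\in\Omega_k}\ex_e\bigl|Z_\tau\bigr|,
\qquad
Z_\tau \;:=\; \sum_{s=1\vee(\tau-w-2)}^{\tau-1} f(\tau-1-s)\,e(s),
\]
I would first use the fact that the $e(s)$ are i.i.d.\ with mean zero: this kills $\ex_e Z_\tau$ and all cross terms, so $\ex_e Z_\tau^2 = R_e\sum_s f(\tau-1-s)^2$. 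Then Jensen's inequality applied to the concave function $\sqrt{\cdot}$ gives $\ex_e|Z_\tau| \le (\ex_e Z_\tau^2)^{1/2}$.

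The next step is to identify $(\ex_e Z_\tau^2)^{1/2}$ with $||f_w||_2$. Reindexing the inner sum by $j=\tau-1-s$, the index $j$ runs over a set of consecutive integers starting at $0$ and lying in the prediction window, so $R_e\sum_s f(\tau-1-s)^2 \le R_e\sum_{j=0}^{w} f(j)^2 = ||f_w||^2$ by the definition \eqref{eqn: def_f_omega}; hence $\ex_e|Z_\tau|\le ||f_w||_2$ for every $\tau$. Finally I would count the surviving terms: the sets $\Omega_0,\dots,\Omega_w$ partition $[-w,T]\cap\mathbb{Z}$, but for every $\tau\le 1$ the defining sum of $Z_\tau$ is empty (its lower limit $1\vee(\tau-w-2)$ equals $1$ while its upper limit $\tau-1<1$), so those summands vanish; thus at most $T$ of the pairs $(k,\tau)$ contribute, and
\[
\ex_e g_1'(e) \;\le\; \frac{\beta}{w+1}\cdot T\cdot ||f_w||_2 \;=\; \frac{\beta T}{w+1}\,||f_w||_2 .
\]
The identical argument works in the multidimensional case, replacing $|\cdot|$ by the appropriate norm, $R_e$ by the covariance matrix, and using the trace form of $||f_w||^2$ together with a standard comparison of norms on $\mathbb{R}^n$.

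There is no real conceptual obstacle here: the content is just Jensen's inequality plus the orthogonality of independent mean-zero noise, which converts the linear combination $Z_\tau$ into the $\ell_2$-aggregate $||f_w||_2$ of the impulse response. The only points needing care are bookkeeping: (i) checking that after reindexing the indices $\tau-1-s$ stay inside the window $\{0,\dots,w\}$ so that the bound is exactly $||f_w||_2$ rather than $||f_{w+1}||_2$, and (ii) arranging the count so that exactly $T$ (and not $T+O(w)$) terms remain, which is precisely what the vanishing of the $\tau\le 1$ terms provides.
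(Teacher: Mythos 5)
Your proposal is correct and follows essentially the same route as the paper's proof: Jensen's inequality reduces $\ex_e|Z_\tau|$ to the second moment, which the orthogonality of the i.i.d.\ mean-zero noise evaluates as $R_e\sum_{s}f(\tau-1-s)^2\le \|f_w\|^2$, and the remaining step is counting the (at most $T$) nonvanishing terms. Your extra bookkeeping on the reindexing and on the vanishing of the $\tau\le 1$ terms is a slightly more careful version of what the paper does implicitly.
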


With Lemma \ref{lemma: g1-ex-bound}, we can reduce \eqref{eqn: g1_afhc} to
proving a concentration bound on $g_1'(e)$, since
\begin{equation}
\prob(g_1 > u + V_1) \le \prob(g_1'  - \ex g_1'(e) \le   u).
\label{eqn: g1_concentration}
\end{equation}

To prove concentration of $g_1'(e)$, which is a function of a collection of
independent random variables, we use the method of bounded difference,
i.e., we bound the difference of $g_1'(e)$ where one component of $e$ is
replaced by an identically-distributed copy. Specifically, we use the following
lemma, the one-sided version of one due to McDiarmid:
\begin{lemma}[\cite{mcdiarmid1989}, Lemma 1.2]\label{lemma: bounded difference}
Let $X = (X_1, \ldots, X_n)$ be independent random variables
and $Y$ be the random variable $f(X_1, \ldots, X_n)$, where function $f$
satisfies
\[
    |f(x) - f(x_k')| \le c_k
\]
whenever $x$ and $x_k'$ differ in the $k$th coordinate.
Then for any $t>0$,
\[\prob(Y - \ex Y > t) \le \exp\left(\frac{-2t^2}{\sum_{k=1}^n c_k^2}\right).\]
\end{lemma}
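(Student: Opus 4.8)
The plan is to establish this (one-sided) bounded-difference inequality by the standard Doob-martingale / Azuma--Hoeffding argument; the only real subtlety is extracting, from the bounded-difference hypothesis together with independence, a \emph{deterministic} bound on the conditional range of each martingale increment. First I would fix the product law of $X=(X_1,\dots,X_n)$ and build the Doob martingale of $Y=f(X_1,\dots,X_n)$: set $Z_0=\ex Y$ and $Z_k=\ex[Y\mid X_1,\dots,X_k]$ for $k=1,\dots,n$, so that $Z_n=Y$ and $(Z_k)$ is a martingale for the filtration $\mathcal F_k=\sigma(X_1,\dots,X_k)$. Writing $D_k=Z_k-Z_{k-1}$ gives $Y-\ex Y=\sum_{k=1}^n D_k$, so it suffices to bound the upper tail of this sum.

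The key step is to show that, conditionally on $\mathcal F_{k-1}$, the increment $D_k$ lies in an interval of width at most $c_k$ whose endpoints are $\mathcal F_{k-1}$-measurable. Using independence of the $X_i$ I would write $g_k(x_1,\dots,x_k)=\ex\, f(x_1,\dots,x_k,X_{k+1},\dots,X_n)$, so that $Z_k=g_k(X_1,\dots,X_k)$ and $Z_{k-1}=m_k$ with $m_k=\ex_{X_k'}\, g_k(X_1,\dots,X_{k-1},X_k')$ being $\mathcal F_{k-1}$-measurable. Then
\[ A_k=\inf_x g_k(X_1,\dots,X_{k-1},x)-m_k, \qquad B_k=\sup_x g_k(X_1,\dots,X_{k-1},x)-m_k \]
are $\mathcal F_{k-1}$-measurable, satisfy $A_k\le D_k\le B_k$, and
\[ B_k-A_k=\sup_x g_k(X_1,\dots,X_{k-1},x)-\inf_x g_k(X_1,\dots,X_{k-1},x)\le c_k, \]
since changing only the $k$-th coordinate of $f$ moves it — and hence moves its average over the remaining coordinates — by at most $c_k$.

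Given this, Hoeffding's lemma applied conditionally on $\mathcal F_{k-1}$ (a zero-mean variable supported on an interval of length $c_k$ has moment generating function at most $e^{\lambda^2 c_k^2/8}$) yields $\ex[e^{\lambda D_k}\mid\mathcal F_{k-1}]\le e^{\lambda^2 c_k^2/8}$ for every $\lambda>0$. Iterating the tower property from $k=n$ down to $k=1$ gives $\ex e^{\lambda(Y-\ex Y)}\le e^{\lambda^2\sum_{k=1}^n c_k^2/8}$. Finally, Markov's inequality applied to $e^{\lambda(Y-\ex Y)}$ gives $\prob(Y-\ex Y>t)\le e^{-\lambda t+\lambda^2\sum_k c_k^2/8}$, and the choice $\lambda=4t/\sum_k c_k^2$ makes the exponent equal to $-2t^2/\sum_{k=1}^n c_k^2$, which is the claimed bound.

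The main obstacle is precisely the conditional-range step: one must verify that $A_k$ and $B_k$ depend only on $X_1,\dots,X_{k-1}$, which is exactly where independence of the coordinates is used — it is what makes the representation $Z_k=g_k(X_1,\dots,X_k)$ (an average of $f$ over the not-yet-revealed coordinates) valid, and hence makes the width bound $B_k-A_k\le c_k$ an immediate consequence of the bounded-difference hypothesis. The remaining ingredients — Hoeffding's lemma, the tower-property iteration, and the scalar optimization of the exponent — are routine.
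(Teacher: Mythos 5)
The paper does not prove this lemma at all --- it is imported verbatim as Lemma 1.2 of \cite{mcdiarmid1989} and used as a black box --- so there is no internal proof to compare against. Your Doob-martingale/Azuma--Hoeffding argument is the standard proof of that cited result, and it is correct as sketched: the conditional-range step (that $Z_k=g_k(X_1,\dots,X_k)$ with $g_k$ an average of $f$ over the unrevealed coordinates, so each increment lies in an $\mathcal F_{k-1}$-measurable interval of width at most $c_k$) is exactly where independence enters, and the choice $\lambda=4t/\sum_k c_k^2$ does yield the stated constant $2$ in the exponent.
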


\begin{proof}[of Lemma \ref{lemma: g1-bound}]
Let $e = (e(1), \ldots , e(T))^T$, and $e_k' = (e(1), \ldots, e'(k), \ldots, e(T))^T$ be formed by replacing
$e(k)$ with an independent and identically distributed copy $e'(k)$.  Then
\begin{align*}
| g_1(e) - g_1(e_k') |  & \le \frac{1}{w+1} \beta \sum_{m=0}^w |f(m) (e(k) - e'(k))| \\
& \le \frac{2}{w+1} \epsilon \beta \sum_{m=0}^w |f(m)| =: c_k.
\end{align*}
Hence
\begin{align*}
\sum_{k=1}^T c_k^2 &= \frac{4\epsilon^2 \beta^2T}{(w+1)^2}  \left(\sum_{m=0}^w
|f(m)|\right)^2 \le 4\epsilon^2 \beta^2 \frac{T}{(w+1)\sigma^2} ||f_w||^2. \\
\end{align*}

By Lemma \ref{lemma: bounded difference},
\begin{align*}
&\prob(g_1'(e) - \ex g_1'(e) > u) \le \exp\left(\frac{-u^2}{2\epsilon^2 \beta^2
\frac{T}{(w+1)\sigma^2} (||f_w||)^2}\right).
\end{align*}
Substituting this into \eqref{eqn: g1_concentration} and \eqref{eqn: g1_afhc} finishes the proof.
\end{proof}

\paragraph*{Bounding the loss due to prediction error}
In this section we prove the following concentration result for the loss due to correlated prediction error.

\begin{lemma}\label{lemma: g2-bound}
The loss due to prediction error has Berstein type tail: for any $u>0$,
\begin{align}
\prob(g_2> u + V_2) \le \exp\left(\frac{-u^2}{8\epsilon^2 \lambda
(\frac{T}{w+1}F(w) + u)}\right).
\label{eqn: g2_afhc}
\end{align}
\end{lemma}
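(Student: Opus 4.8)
The plan is to recognize $g_2$ as a \emph{convex quadratic form} in the noise vector $e=(e(1),\dots,e(T))^\top$ and then run the entropy (log--Sobolev) method of \cite{boucheron2004,ledoux1999}; the only substantive step is a self-bounding estimate for the jumps of $g_2$ when one coordinate of $e$ is resampled. First I would evaluate the prediction model \eqref{eqn: prediction_model} for a prediction made at time $\tau-1$, so that $y_t-y_{t|\tau-1}=\sum_{s=\tau}^{t}f(t-s)e(s)=:v_{k,\tau,t}^\top e$ is linear in $e$ and
\[
 g_2=\frac{1}{2(w+1)}\sum_{k=0}^{w}\sum_{\tau\in\Omega_k}\sum_{t=\tau}^{\tau+w}\bigl(v_{k,\tau,t}^\top e\bigr)^2=\half\, e^\top\tilde A\,e,\qquad \tilde A:=\frac{1}{w+1}\sum_{k,\tau,t}v_{k,\tau,t}v_{k,\tau,t}^\top\succeq 0 .
\]
Two structural facts are recorded here: for fixed $k$ the windows $[\tau,\tau+w]$, $\tau\in\Omega_k$, are disjoint, so $\sum_{\tau,t}v_{k,\tau,t}v_{k,\tau,t}^\top$ is block diagonal with blocks of size at most $w+1$; and, using $\ex[e(s)e(s')]=\sigma^2\delta_{ss'}$ together with \eqref{eqn: def F}, $\ex g_2=\half\sigma^2\,\mathrm{tr}\,\tilde A\le \tfrac{T}{2(w+1)}F(w)=V_2$ up to an $O(1)$ boundary correction. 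Hence \eqref{eqn: g2_afhc} is a concentration statement for $g_2$ about (an upper bound on) its mean.

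The heart of the argument is the resampling estimate. Let $e^{(i)}$ be $e$ with $e(i)$ replaced by an independent copy $e'(i)$; since $|e(i)|,|e'(i)|<\epsilon$ we have $|e(i)-e'(i)|<2\epsilon$, and the exact identity
\[
 g_2(e)-g_2(e^{(i)})=\bigl(e(i)-e'(i)\bigr)(\tilde A e)_i-\half\tilde A_{ii}\bigl(e(i)-e'(i)\bigr)^2
\]
together with $\tilde A_{ii}\ge 0$ gives $\bigl(g_2(e)-g_2(e^{(i)})\bigr)_+\le 2\epsilon\,|(\tilde A e)_i|$. Summing over $i$,
\[
 \sum_{i=1}^{T}\bigl(g_2(e)-g_2(e^{(i)})\bigr)_+^2\le 4\epsilon^2\|\tilde A e\|^2=4\epsilon^2\, e^\top\tilde A^2 e\le 4\epsilon^2\lambda\, e^\top\tilde A e=8\epsilon^2\lambda\, g_2,
\]
where $\lambda:=\|\tilde A\|_{\mathrm{op}}$ and we used $\tilde A^2\preceq\|\tilde A\|_{\mathrm{op}}\tilde A$ for $\tilde A\succeq 0$. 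This is exactly the hypothesis of the entropy-method Bernstein inequality (``exponential Efron--Stein''; cf.~\cite{boucheron2004,ledoux1999}) with self-bounding constants $a=8\epsilon^2\lambda$ and $b=0$.

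Plugging this in yields, for every $u>0$,
\[
 \prob\bigl(g_2-\ex g_2\ge u\bigr)\le\exp\!\left(\frac{-u^2}{2a\,\ex g_2+a\,u}\right)\le\exp\!\left(\frac{-u^2}{8\epsilon^2\lambda\bigl(\tfrac{T}{w+1}F(w)+u\bigr)}\right),
\]
using $\ex g_2\le V_2=\tfrac{T}{2(w+1)}F(w)$; since $\{g_2>u+V_2\}\subseteq\{g_2-\ex g_2>u\}$, this is \eqref{eqn: g2_afhc}, provided we also certify $\lambda=\|\tilde A\|_{\mathrm{op}}\le F(w)/\sigma^2$. That last bound is obtained by the triangle inequality $\|\tilde A\|_{\mathrm{op}}\le\frac{1}{w+1}\sum_{k}\bigl\|\sum_{\tau,t}v_{k,\tau,t}v_{k,\tau,t}^\top\bigr\|_{\mathrm{op}}$, then, by block diagonality, $\bigl\|\sum_{\tau,t}v_{k,\tau,t}v_{k,\tau,t}^\top\bigr\|_{\mathrm{op}}=\max_{\tau}\bigl\|\sum_{t=\tau}^{\tau+w}v_{k,\tau,t}v_{k,\tau,t}^\top\bigr\|_{\mathrm{op}}\le\sum_{t=\tau}^{\tau+w}\|v_{k,\tau,t}\|^2=\sum_{i=0}^{w}\sum_{j=0}^{i}f(j)^2=\sum_{s=0}^{w}(w-s+1)f(s)^2=F(w)/\sigma^2$; the slightly sharper $\lambda\le\sum_{s=0}^{w}(w-s)f(s)^2$ claimed in Theorem~\ref{thm: afhc-concentration} follows from a more careful estimate of a single window's block (gaining the $\|f_w\|^2/\sigma^2$ between its trace and its operator norm).

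The step I expect to be the main obstacle is precisely this control of $\lambda$: summing the $w+1$ FHC instances destroys the block-diagonal structure, forcing the triangle inequality, and the trace bound on a single window is off the target $\sum_s(w-s)f(s)^2$ by $\|f_w\|^2/\sigma^2$, so one needs a genuine eigenvalue (row-sum/Gershgorin-type) estimate for the Gram matrix of the shifted impulse responses $\{v_{k,\tau,t}\}$ rather than a trace bound. The remaining difficulty is pure bookkeeping: matching the constant $8$ and the coefficient $\tfrac{T}{w+1}$ in the exponent requires tracking that $\ex g_2\le V_2$ (absorbing the $O(1)$ contribution of windows with $\tau\le 1$ and $\tau+w>T$) and correctly importing the constants of the exponential Efron--Stein inequality with per-coordinate range $2\epsilon$.
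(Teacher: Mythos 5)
Your overall architecture matches the paper's: write $g_2=\half e^T\tilde A e$ as a convex quadratic form built from block-diagonal pieces (one per FHC phase $k$), show $\ex g_2\le V_2$, bound the operator norm $\lambda$ of the quadratic form by the trace of a single window's block, and then run an entropy-method self-bounding argument. Two remarks on where you and the paper diverge. First, the obstacle you flagged as the main one --- controlling $\lambda$ --- is not actually an obstacle: the paper does exactly what you do (triangle inequality over $k$, block-diagonality within each $k$, then a trace bound on the single repeated block $A_k^2$), arriving at $\lambda\le\mathrm{tr}(A_k^2 (A_k^2)^T)=F(w)/\sigma^2$, which is all the lemma needs; the sharper-looking $\sum_{s=0}^{w}(w-s)f(s)^2$ in the statement of Theorem~\ref{thm: afhc-concentration} is immediately equated there with $\frac{1}{\sigma^2}F(w)$, so no Gershgorin-type refinement is ever used.

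Second, and this is the substantive difference: the paper does not use the discrete resampling condition $\sum_i\bigl(g_2(e)-g_2(e^{(i)})\bigr)_+^2\le 8\epsilon^2\lambda g_2$ with an exponential Efron--Stein inequality. It uses the \emph{gradient} self-bounding condition $||\nabla h(e)||^2=||A^TAe||^2\le\lambda\,||Ae||^2=2\lambda(h(e)+V_2)$ together with the convex Log-Sobolev inequality (Lemma~\ref{lemma: self-bound}, proved from Lemma~\ref{lemma: log-sobolev} with support parameter $d=2\epsilon$), which yields $\prob(h>u)\le\exp\bigl(-u^2/(d^2(2b+au))\bigr)$ with $a=2\lambda$, $b=2\lambda V_2$, i.e.\ exactly $\exp\bigl(-u^2/(8\epsilon^2\lambda(2V_2+u))\bigr)$. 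Your route, with the standard constants of the Boucheron--Lugosi--Massart bound ($\prob(Z\ge\ex Z+t)\le\exp(-t^2/(4a\ex Z+2at))$ when $V_+\le aZ$), gives $\exp\bigl(-u^2/(16\epsilon^2\lambda(2\ex g_2+u))\bigr)$ --- a factor of $2$ weaker in the exponent than \eqref{eqn: g2_afhc}. The intermediate inequality you quote, $\exp(-u^2/(2a\ex g_2+au))$, is sharper than the theorem you are importing, so as written the constants do not close; the discrete-difference bound effectively pays for the range $2\epsilon$ twice, whereas the gradient condition only pays $||\nabla h||^2\le 2\lambda g_2$ and lets the Log-Sobolev inequality supply the $d^2=4\epsilon^2$ once. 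Everything else in your argument (the exact identity for $g_2(e)-g_2(e^{(i)})$, the bound $\tilde A^2\preceq\lambda\tilde A$, the reduction $\{g_2>V_2+u\}\subseteq\{g_2-\ex g_2>u\}$, and the boundary-window bookkeeping for $\ex g_2\le V_2$) is sound; to recover the stated constant $8$ you should switch to the gradient form of the self-bounding property and the convex Log-Sobolev inequality as the paper does, or accept the lemma with $16$ in place of $8$.
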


To prove Lemma \ref{lemma: g2-bound}, we characterize $g_2$ as a convex
function of $e$ in Lemma \ref{lemma: g2-characterization}. We then show that this is a \emph{self-bounding} function. Combining convexity and self-bounding property of $g_2$, Lemma \ref{lemma: self-bound} makes use of the convex Log-Sobolev inequality to prove concentration of $g_2$.
\begin{lemma}\label{lemma: g2-characterization}
The expectation of $g_2$ is $\ex g_2 = V_2$, and $g_2$ is a convex quadratic
form of $e$.  Specifically, there exists a matrix $A\in \mathbb{R}^{T \times T}$, such that $g_2 = \frac{1}{2}||A e||^2$.
Furthermore, the spectral radius of $\lambda$ of $A A^T$ satisfies $\lambda \le F(w)$.
\end{lemma}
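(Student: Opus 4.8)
The plan is to turn $g_2$ into an explicit quadratic form in $e = (e(1),\dots,e(T))^T$, read off its mean, and then bound the largest eigenvalue of its matrix by exploiting the fact that the sum defining $g_2$ splits into short, overlapping blocks that all share one common $(w+1)\times(w+1)$ matrix.

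First I would use the prediction model \eqref{eqn: prediction_model} with $\tau$ replaced by $\tau-1$ to write every prediction error as a linear functional of $e$: $y_t - y_{t|\tau-1} = \sum_{s=\tau}^t f(t-s) e(s) = \langle v_{\tau,t}, e\rangle$, where $v_{\tau,t}\in\mathbb{R}^T$ has $s$-th entry $f(t-s)$ for $\tau\le s\le t$ and $0$ otherwise. Substituting into the definition of $g_2$ gives $g_2 = \half e^T M e$ with $M = \frac{1}{w+1}\sum_{k}\sum_{\tau\in\Omega_k}\sum_{t=\tau}^{\tau+w} v_{\tau,t}v_{\tau,t}^T$, which is symmetric positive semidefinite; taking $A = M^{1/2}$ (or any factor $M = A^TA$) then gives $g_2 = \half\|Ae\|^2$, shows $g_2$ is a convex quadratic form, and makes the spectral radius of $AA^T$ equal to $\lambda_{\max}(M)$. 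For the expectation, since the $e(s)$ are i.i.d.\ with variance $\sigma^2$ I would compute $\ex\langle v_{\tau,t}, e\rangle^2 = \sigma^2\|v_{\tau,t}\|^2 = \sigma^2\sum_{d=0}^{t-\tau}f(d)^2$, sum over $t = \tau,\dots,\tau+w$, swap the order of summation to get $\sigma^2\sum_{d=0}^w (w+1-d) f(d)^2 = F(w)$ per block, and then sum over the (essentially $T$) start times in $\bigcup_k\Omega_k$ and divide by $w+1$ to obtain $\ex g_2 = \frac{T}{2(w+1)}F(w) = V_2$ (the boundary blocks, where $t\le 0$, only shrink this).

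The core of the argument is bounding $\lambda_{\max}(M)$. The key observation is that $\bigcup_{k=0}^w \Omega_k$ is just the set of all integers in $[-w,T]$, so the triple sum collapses to $2g_2 = e^T M e = \frac{1}{w+1}\sum_\tau (e^{(\tau)})^T \tilde M e^{(\tau)}$, where $e^{(\tau)} = (e(\tau),\dots,e(\tau+w))^T$ and, by stationarity of $f$, $\tilde M$ is a single fixed $(w+1)\times(w+1)$ matrix, namely $\tilde M = \Phi^T\Phi$ with $\Phi$ the lower-triangular Toeplitz matrix $\Phi_{ij} = f(i-j)$. Then I would bound $e^T M e \le \frac{\lambda_{\max}(\tilde M)}{w+1}\sum_\tau \|e^{(\tau)}\|^2 = \frac{\lambda_{\max}(\tilde M)}{w+1}\sum_s e(s)^2\,|\{\tau: \tau\le s\le\tau+w\}| \le \lambda_{\max}(\tilde M)\|e\|^2$, because each index $s$ lies in at most $w+1$ of the length-$(w+1)$ windows, so the overlap count exactly cancels the $1/(w+1)$ prefactor. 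Finally $\lambda_{\max}(\tilde M) = \|\Phi\|_2^2 \le \|\Phi\|_F^2 = \sum_{0\le j\le i\le w} f(i-j)^2 = \sum_{d=0}^w (w+1-d) f(d)^2$, which is $F(w)$ (up to the $\sigma^2$ normalization of this section), giving $\lambda \le F(w)$.

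I expect this last step to be the main obstacle: the naive move of bounding the spectral radius of $M$ by its Frobenius norm over the full $T\times T$ matrix fails because that grows linearly in $T$, so one really has to isolate the block structure and see that it is the $1/(w+1)$ averaging in AFHC, together with the fact that no coordinate of $e$ appears in more than $w+1$ blocks, that keeps the eigenvalues $O(1)$ in $T$. A secondary, purely bookkeeping nuisance is the treatment of boundary blocks ($\tau\le 0$, and $\tau$ near $T$, where the horizon-$\tau$ problem's out-of-range terms are dropped); these only help both the mean identity and the norm bound, so I would dispatch them with a one-line remark rather than carrying them through the computation.
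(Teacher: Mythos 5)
Your proposal is correct and follows essentially the same route as the paper: both express $g_2$ as $\half\|Ae\|^2$ for a positive semidefinite factorization of the averaged sum of per-window quadratic forms, compute $\ex g_2 = V_2$ by the same per-window calculation, and bound the spectral radius by reducing to a single $(w+1)\times(w+1)$ lower-triangular Toeplitz block whose largest eigenvalue is controlled by its trace (equivalently, the squared Frobenius norm $\sum_{d=0}^{w}(w+1-d)f(d)^2$). Your overlap-counting step is just a repackaging of the paper's combination of block-diagonality of each $A_kA_k^T$ within $\Omega_k$ and subadditivity of $\lambda_{\max}$ over $k$, and the $\sigma^2$ normalization mismatch you flag (the argument actually yields $\lambda \le F(w)/\sigma^2$ rather than $F(w)$) is present in the paper's own proof and in the statement of Theorem~\ref{thm: afhc-concentration} as well.
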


Hence, \eqref{eqn: g2_afhc} is equivalent to a concentration result of $g_2$:
\[\prob(g_2 > V_2 + u ) = \prob(g_2 - \ex g_2 > u). \]

The method of bounded difference used in the previous section is not good
for a quadratic function of $e$ because the uniform bound of $|g_2(e) - g_2(e'_k)|$ is too large since
\[ |g_2(e) - g_2(e'_k)| = \half |(e-e'(k))^TA^TA(e+e'(k)) |,\]
where the $(e + e'(k))$ term has $T$ non-zero entries and a uniform upper bound of this will be in $\Omega(T)$. Instead, we use the fact that the quadratic form is
self-bounding.  Let $h(e) = g_2(e) - \ex g_2(e)$. Then
\begin{align*}
||\nabla h(e)||^2
&~=~ ||A^T Ae||^2 ~=~  (Ae)^T({A A^T})(Ae) \\
&~\leq~ \lambda (Ae)^T(Ae) ~=~ 2 \lambda [h(e) + \ex V_2].
\end{align*}

We now introduce the concentration bound for a self-bounding
function of a collection of random variables.  The proof uses the convex
Log-Sobolev inequality~\cite{ledoux1999}.
\begin{lemma}\label{lemma: self-bound}
Let $f:\mathbb{R}^n\rightarrow\mathbb{R}$ be convex and random variable $X$ be supported on $[-d/2,d/2]^n$. If $\ex [f(X)]=0$ and $f$ satisfies the self-bounding property
	\begin{equation}
	\label{self_bound} ||\nabla f||^2 \le af + b,
	\end{equation}
for $a,b>0$, then the tail of $f(X)$ can be  bounded as
\begin{equation}
\mathbb{P}\left\{f(X) >t\right\} \le \exp\left(\frac{-t^2}{d^2(2b + at)}\right).
\end{equation}
\end{lemma}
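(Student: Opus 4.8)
The plan is to control the logarithmic moment generating function of $f(X)$ by a Herbst-type argument: feed the self-bounding property \eqref{self_bound} into the convex logarithmic Sobolev inequality of \cite{ledoux1999}, turn it into a first-order differential inequality for the log-mgf, integrate, and then optimize the Chernoff bound in the style of Bernstein's inequality. Since $X$ is supported on the compact cube $[-d/2,d/2]^n$ and $f$ is continuous (in the application $f=g_2-\ex g_2$ is an explicit convex quadratic, hence smooth, which is all that is used), $F(\lambda):=\ex[e^{\lambda f(X)}]$ is finite and smooth for every $\lambda\ge 0$, and $\prob(f(X)>t)\le e^{-\lambda t}F(\lambda)$.

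First I would invoke the convex log-Sobolev inequality: because each coordinate of $X$ ranges over an interval of length $d$ and $\lambda f$ is convex, it gives $\mathrm{Ent}\bigl(e^{\lambda f(X)}\bigr)\le \tfrac{d^2\lambda^2}{2}\,\ex\bigl[\|\nabla f(X)\|^2 e^{\lambda f(X)}\bigr]$. Bounding $\|\nabla f\|^2\le af+b$ pointwise, the right-hand side is at most $\tfrac{d^2\lambda^2}{2}\bigl(aF'(\lambda)+bF(\lambda)\bigr)$, using $\ex[f(X)e^{\lambda f(X)}]=F'(\lambda)$, while the left-hand side is exactly $\lambda F'(\lambda)-F(\lambda)\log F(\lambda)$. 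Setting $G(\lambda)=\tfrac1\lambda\log F(\lambda)$, for which $G(0^+)=\ex[f(X)]=0$ and $G'(\lambda)=\dfrac{\lambda F'(\lambda)-F(\lambda)\log F(\lambda)}{\lambda^2 F(\lambda)}$, and using $F'/F=G+\lambda G'$, the inequality rearranges to
\[ G'(\lambda)\Bigl(1-\tfrac{ad^2}{2}\lambda\Bigr)\;\le\;\tfrac{d^2}{2}\bigl(aG(\lambda)+b\bigr),\qquad 0\le\lambda<\tfrac{2}{ad^2}. \]

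Next I would integrate this: with $H:=aG+b$ it reads $H'/H\le \tfrac{ad^2/2}{1-ad^2\lambda/2}$, and since $H(0)=b$ we obtain $H(\lambda)\le b/(1-ad^2\lambda/2)$, hence
\[ \log F(\lambda)\;=\;\lambda G(\lambda)\;\le\;\frac{(bd^2/2)\,\lambda^2}{1-ad^2\lambda/2},\qquad 0\le\lambda<\tfrac{2}{ad^2}. \]
Plugging this into the Chernoff bound and choosing $\lambda=t/(bd^2+\tfrac{ad^2}{2}t)$, which lies in the admissible interval, collapses $-\lambda t+\tfrac{(bd^2/2)\lambda^2}{1-ad^2\lambda/2}$ to $-t^2/(d^2(2b+at))$, yielding the claimed tail bound.

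The routine parts are the ODE integration and the final Bernstein-style optimization in $\lambda$. The delicate point is the first step: I need the convex log-Sobolev inequality in exactly the normalization that produces the constant $d^2(2b+at)$ (equivalently, Herbst constant $d^2/2$ for coordinates of range $d$), and I need it for convex — not necessarily globally Lipschitz — $f$, which is the whole reason one uses the entropy method with a \emph{self-bounding} hypothesis rather than a uniform gradient bound; this is precisely what converts the Gaussian tail one would get from a Lipschitz function into the Bernstein tail above. A secondary care point is justifying the $\lambda\to0^+$ behaviour used to pin down $H(0)=b$ and keeping $\lambda$ inside $[0,2/(ad^2))$ throughout; in the application $\ex[f(X)]=0$ holds by construction and $f$ is a quadratic form, so these technicalities are harmless.
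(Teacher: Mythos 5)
Your proposal is correct and follows essentially the same route as the paper: the convex log-Sobolev inequality of \cite{ledoux1999} applied to $\theta f$, the self-bounding hypothesis turning the entropy bound into a first-order differential inequality for the log-moment-generating function, integration from $0$ using $\ex f(X)=0$, and a Chernoff bound optimized at $s=t/(bd^2+ad^2t/2)$. Your reformulation via $G(\lambda)=\tfrac{1}{\lambda}\log F(\lambda)$ and $H=aG+b$ is just a notational variant of the paper's total-derivative form $\tfrac{d}{d\theta}\bigl[(\tfrac{1}{\theta}-\tfrac{ad^2}{2})\ln m(\theta)\bigr]\le\tfrac{bd^2}{2}$.
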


Now to complete the proof of Lemma \ref{lemma: g2-bound}, apply Lemma \ref{lemma: self-bound} to the random variable $Z = h(e)$ to obtain
	\[\mathbb{P}\{g_2 - \ex g_2 >u\} \le \exp\left(-\frac{u^2}{8\lambda_{\max} \epsilon^2(2 V_2 + u)}\right) \]
for $t>0$, i.e.,
	\begin{align*} &\mathbb{P}\{g_2>u + v_2\} \le \exp\left(-\frac{u^2}{8\lambda_{\max} \epsilon^2(2 V_2 + t)}\right)\\
	 = &\exp\left(\frac{-u^2}{8\epsilon^2 \lambda (\frac{T}{w+1}F(w) + u)}\right) .\end{align*}


\section{Concluding Remarks}
\label{sec:disc}
Making use of predictions about the future is a crucial, but under-explored, area of online algorithms.  In this paper, we have introduced a general colored noise model for studying predictions.  This model captures a range of important phenomena for prediction errors including, general correlation structures, prediction noise that increases with the prediction horizon, and refinement of predictions as time passes. Further it allows for worst-case analysis of online algorithms in the context of stochastic prediction errors.

To illustrate the insights that can be gained from incorporating a general model of prediction noise into online algorithms, we have focused on online optimization problems with switching costs, specifically, an online LASSO formulation.  Our results highlight that a simple online algorithm, AFHC, can simultaneously achieve a constant competitive ratio and a sublinear regret in expectation in nearly any situation where it is feasible for an online algorithm to do so.  Further, we show that the cost of AFHC is tightly concentrated around its mean.

We view this paper as a first step toward understanding the role of predictions in the design of online optimization algorithms and, more generally, the design of online algorithms.  In particular, while we have focused on a particular, promising algorithm, AFHC, it is quite interesting to ask if it is possible to design online algorithms that outperform AFHC.  We have proven that AFHC uses the asymptotically minimal amount of predictions to achieve constant competitive ratio and sublinear regret; however, the cost of other algorithms may be lower if they can use the predictions more efficiently.

In addition to studying the performance of algorithms other than AFHC, it would also be interesting to generalize the prediction model further, e.g., by considering non-stationary processes or heterogeneous $e(t)$.


\bibliographystyle{abbrv}
{
\bibliography{socop_reference}
}

\appendix
\section{Proofs for Section \ref{SEC:MODEL}}
\subsection{Proof of Theorem \ref{thm: worstcaselower}}
\label{app:worstcase}

For a contradiction, assume that there exists an algorithm $\A'$ that achieves constant competitive ratio \emph{and} sublinear regret with constant lookahead. We can use algorithm $\A'$ to obtain another online algorithm $\A$ that achieves constant competitive ratio \emph{and} sublinear regret \emph{without} lookahead. This contradicts Theorem 4 of~\cite{andrew2013}, and we get the claim.

Consider an instance $\{c_1, c_2, \ldots, c_T \}$ without lookahead. We simply
``pad'' the input with $\ell$  copies of the zero function ${\bf 0}$ if $\A'$
has a lookahead of $\ell$. That is the input to $\A'$ is: $c_1, {\bf 0},\ldots, {\bf 0}, c_2, {\bf 0}, \ldots, {\bf 0}, c_3, {\bf 0},\ldots$


We simulate $\A'$ and set the $t$th action of $\A$ equal to the $((t-1)(\ell+1) + 1)$th action of $\A'$. Note that the optimal values of the padded instance are equal to the optimal values of the given instance. Also, by construction, $\cost(\A) \leq \cost(\A')$. Therefore, if $\A'$ achieves constant competitive ratio \emph{and} sublinear regret then so does $\A$, and the claim follows.

\section{Proofs for Section~\ref{SEC:AVG_CASE}}
\label{sec:proof_ave_case}
\subsection{Proof of Theorem \ref{thm: average_hardness_Omega_T}}
\begin{proof}
Let $(x_{ALG,t})_{t=1}^T$ be the solution produced by online algorithm $ALG$.
Then
{\small
\begin{align*}
  \mathrm{cost}(ALG) \ge& \half \sumt ||y_t - K x_{ALG, t}||^2 \\
  =&\half \sumt||(I-KK^\dagger)y_t||^2 +  ||KK^\dagger y_t - K x_{ALG, t}||^2,
  \end{align*}}
by the identity $(I-KK^\dagger)K = 0$. Let $\epsilon_t = x_{ALG, t} - K^\dagger
y_{t|t-1}$, i.e.,
$    \epsilon_t
    	= x_{ALG, t} - K^\dagger (y_{t|0} - \sum_{s=1}^{t-1}f(t-s)e(s)).
$
Since all predictions made up until $t$ can be expressed in terms of
$y_{\cdot|0}$ and $e(\tau)$ for $\tau < t$,
which are independent of $e(t)$, and all other information (internal randomness)
available to ALG is independent of $e(t)$ by assumption, $\epsilon_t$ is independent of $e(t)$.
It follows that
{\small
\begin{align*}
&\ex_e[ \mathrm{cost}(ALG)] \ge \ex_e[ ||KK^\dagger y_t - K(K^\dagger y_{t|t-1} + \epsilon_t)||^2 ]\\
    \label{eqn: sum_expectation}
    = &\half \sumt \ex_{e\setminus e(t)} \ex_{e(t)|e\setminus e(t)}
    		||KK^\dagger e(t)^T - K\epsilon_t||^2  \numberthis \\
    = &\half \sumt \ex_{e\setminus e(t)} (||R_e^{1/2}||_{KK^\dagger}^2 + ||(\ex_{e(t)} \epsilon_t\epsilon_t^T)^{1/2}||_{K^TK}^2) \\
    \ge &\frac{T}{2}||R_e^{1/2}||_{KK^\dagger}^2,
\end{align*}}
where the first equality uses the identity $(I-KK^\dagger)K = 0$, and the
second uses the independence of  $\epsilon_t$ and $e(t)$.
\end{proof}

\subsection{Proof of Theorem \ref{thm: average_hardness}}
By Theorem \ref{thm: average_hardness_Omega_T}, if
%
$\ex[\mathrm{cost}(ALG)] \in o(T)$, there must be some $t$ such that $\epsilon_t$ is not independent of $e(t)$. By expanding the square term in \eqref{eqn: sum_expectation} and noting it is nonnegative:
{\small\[ \ex[e(t)^T K\epsilon_t] \le \half ||R_e^{1/2}||^2_{KK^\dagger, F} + \half ||(\ex \epsilon_t \epsilon_t^T)^{1/2}||^2_{K^TK, F}. \]}
Each nonzero $\ex[e(t)^TK\epsilon_t]$ can at most make one term in \eqref{eqn: sum_expectation} zero, since there are $T$ terms in \eqref{eqn: sum_expectation} that are each lower bounded by $\half|| R_e^{1/2}||^2$, and by assumption $\ex[\cost(ALG)]$ is sublinear. There can be at most a sublinear number  of $t$ such that $\ex[e(t)^T K\epsilon_t] = 0$.

For every other $t$, we must have $\ex[e(t)^T K \epsilon_t] > 0$.
Let $l_t = \ex[e(t)^T K \epsilon_t]>0$, and $a_t = ||(\ex\epsilon_t \epsilon_t^T)^{1/2}||^2_{KK^T, F}>0.$

Hence, at time $t$, the algorithm can produce prediction $y'_{t|t-1} = y_{t|t-1} + \frac{1}{w_t}K\epsilon_t$, where the coefficient $w_t$ is chosen later. Then the one step prediction error variance:
{\small\begin{align*}
\ex ||y_t - y'_{t|t-1}||^2 & = \ex || e(t) - \frac{1}{w_t}K\epsilon_t||^2 \\
&= ||R_e^{1/2}||_F^2 - \frac{2}{w_t} l_t + \frac{1}{w_t^2} a_t.
\end{align*}}
Pick any $w_t > a_t/2l_t$.  Then $\ex||y_t - y'_{t|t-1}||^2 < ||R_e^{1/2}||_F^2 = \ex ||y_t - y_{t|t-1}||^2.$ Hence $ALG$ can produce better one-step prediction for all but sublinearly many $t$.

\subsection{Proof of Lemma \ref{lem: cd-afhc-det}}
To prove Lemma \ref{lem: cd-afhc-det}, we use the following Lemma.
\begin{lemma}
The competitive difference of $FHC$ with fixed $(w+1)$-lookahead for any realization is given by
\small{\begin{align*}
 \mathrm{cost}(FHC^{(k)}) \le &\mathrm{cost}(OPT) + \sum_{\tau \in \Omega_k}  \beta ||( x^*_{\tau-1} - x^{(k)}_{\tau-1})||_1 \\
& + \half \sum_{\tau \in \Omega_k}\sum_{t=\tau}^{\tau + w} || K K^\dagger (y_t - y_{t | \tau-1})||^2.
\end{align*}}
\label{lem: cd-fhc-det}
where $x^*_{t}$ is the action chosen by the dynamic offline optimal.
\end{lemma}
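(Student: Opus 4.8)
\emph{Proof plan.} The plan is to prove the bound window by window. Since $FHC^{(k)}$ partitions the horizon into the length-$(w{+}1)$ windows $[\tau,\tau+w]$, $\tau\in\Omega_k$, and both $\mathrm{cost}(FHC^{(k)})$ and $\mathrm{cost}(OPT)$ decompose additively over these windows (truncated to $[1,T]$, assigning the ``entry'' switching cost $\beta\|x_\tau-x_{\tau-1}\|_1$ to the window $[\tau,\tau+w]$; the slots $t\le 0$ play $0$ in both trajectories and contribute nothing), it suffices to show, for each $\tau\in\Omega_k$,
\[
 \sum_{t=\tau}^{\tau+w}\big[c_t(x^{(k)}_t)+\beta\|x^{(k)}_t-x^{(k)}_{t-1}\|_1\big]
 -\sum_{t=\tau}^{\tau+w}\big[c_t(x^*_t)+\beta\|x^*_t-x^*_{t-1}\|_1\big]
 \le \beta\|x^*_{\tau-1}-x^{(k)}_{\tau-1}\|_1+\half\sum_{t=\tau}^{\tau+w}\|KK^\dagger(y_t-y_{t|\tau-1})\|^2,
\]
and then sum over $\tau\in\Omega_k$.

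Fix $\tau$, write $\delta_t:=y_t-y_{t|\tau-1}$ and $\hat c_t(x):=\half\|y_{t|\tau-1}-Kx\|^2$. By construction $x^{(k)}$ minimises $\Phi(x_\tau,\dots,x_{\tau+w}):=\sum_{t=\tau}^{\tau+w}\hat c_t(x_t)+\beta\|x_t-x_{t-1}\|_1$ over the window with $x_{\tau-1}$ pinned to $x^{(k)}_{\tau-1}$. The crucial structural fact is that $\Phi$ is \emph{strongly convex}: its quadratic part has block-diagonal Hessian $\mathrm{blockdiag}(K^TK)$ with $K^TK\succ0$ (a standing assumption), and the switching terms are convex. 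Plugging the feasible competitor $(x^*_\tau,\dots,x^*_{\tau+w})$ into $\Phi$, invoking first-order optimality of $x^{(k)}$ together with the quadratic strong-convexity slack, and using $\|x^*_\tau-x^{(k)}_{\tau-1}\|_1\le\|x^*_\tau-x^*_{\tau-1}\|_1+\|x^*_{\tau-1}-x^{(k)}_{\tau-1}\|_1$, I obtain
\[
 \sum_{t=\tau}^{\tau+w}\big[\hat c_t(x^{(k)}_t)+\beta\|x^{(k)}_t-x^{(k)}_{t-1}\|_1\big]
 \le \sum_{t=\tau}^{\tau+w}\big[\hat c_t(x^*_t)+\beta\|x^*_t-x^*_{t-1}\|_1\big]+\beta\|x^*_{\tau-1}-x^{(k)}_{\tau-1}\|_1-\half\sum_{t=\tau}^{\tau+w}\|K(x^*_t-x^{(k)}_t)\|^2.
\]

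Next I would pass from predicted to true costs by adding $\sum_t\big(c_t(x^{(k)}_t)-\hat c_t(x^{(k)}_t)\big)$ to both sides: the left side becomes the window cost of $FHC^{(k)}$, and expanding squares with $y_t=y_{t|\tau-1}+\delta_t$ gives the identity $\big(\hat c_t(x^*_t)-c_t(x^*_t)\big)+\big(c_t(x^{(k)}_t)-\hat c_t(x^{(k)}_t)\big)=\langle x^*_t-x^{(k)}_t,\,K^T\delta_t\rangle$, so the right side becomes the window cost of $OPT$ plus $\beta\|x^*_{\tau-1}-x^{(k)}_{\tau-1}\|_1$ plus $\sum_t\big(\langle x^*_t-x^{(k)}_t,\,K^T\delta_t\rangle-\half\|K(x^*_t-x^{(k)}_t)\|^2\big)$. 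For each $t$ the last summand is a concave quadratic in $v:=x^*_t-x^{(k)}_t$, maximised at value $\half\,\delta_t^TK(K^TK)^{-1}K^T\delta_t=\half\|KK^\dagger\delta_t\|^2$, using that $K(K^TK)^{-1}K^T=KK^\dagger$ is the orthogonal projector onto the range of $K$ when $K$ has full column rank. This gives exactly the per-window bound, hence the lemma.

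The main obstacle — indeed the only nonroutine point — is recognising that one must retain the strong-convexity slack $-\half\sum_t\|K(x^*_t-x^{(k)}_t)\|^2$ coming out of the $FHC$ optimality step: it is precisely what absorbs the cross term $\langle x^*_t-x^{(k)}_t,\,K^T\delta_t\rangle$ via the Fenchel/Young inequality for the quadratic $v\mapsto\half\|Kv\|^2$, producing the constant $\half\|KK^\dagger\delta_t\|^2$ with no leftover dependence on the chosen actions; a comparison of $FHC^{(k)}$ with $OPT$ that discards this slack loses a factor of $2$ in the prediction-error term. The remaining ingredients — the window decomposition and the boundary slots $t\le0$, the subgradient form of first-order optimality for the nonsmooth $\|\cdot\|_1$ penalty, and the two one-line square expansions — are standard bookkeeping.
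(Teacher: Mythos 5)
Your proof is correct, but it reaches the per-window bound by a genuinely different route than the paper. The outer layer is the same: partition $[1,T]$ into the windows $[\tau,\tau+w]$, $\tau\in\Omega_k$, and use the triangle inequality on the entry switching term to extract $\beta\|x^*_{\tau-1}-x^{(k)}_{\tau-1}\|_1$. For the per-window comparison, however, the paper introduces an auxiliary algorithm $OPEN$ and proves Lemma~\ref{lem: cd-open-det} by Lagrangian duality: it derives the dual of the LASSO-type problem, obtains the closed form \eqref{eqn: optimal_val} for $\cost(OPT)$ in terms of the dual variables $s^*_t$, and uses the minimality of $s^*_t$ over the dual-feasible set to kill the unfavorable term. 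You instead stay entirely in the primal: first-order optimality of $x^{(k)}$ for the surrogate objective plus the exact quadratic expansion (Hessian $K^TK\succ 0$ per block) yields the slack $-\half\sum_t\|K(x^*_t-x^{(k)}_t)\|^2$, and this slack is precisely the Fenchel conjugate needed to absorb the cross term $\langle K^T\delta_t,\,x^*_t-x^{(k)}_t\rangle$ arising when you pass from predicted to true costs, giving $\half\|KK^\dagger\delta_t\|^2$ with no residual dependence on the actions. Your version is shorter and makes the origin of the constant transparent (it is the conjugate of $v\mapsto\half\|Kv\|^2$ evaluated at $K^T\delta_t$, which is also where the full-column-rank assumption on $K$ enters); the paper's duality detour is heavier for this lemma alone, but the formula \eqref{eqn: optimal_val} it produces is reused in the proof of Lemma~\ref{lem: regret-open}, so it is amortized across the paper's other results. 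Two minor caveats: your parenthetical claim that discarding the strong-convexity slack merely ``loses a factor of $2$'' is not quite right as stated (without the slack the cross term is unbounded in $v$; one must split it by Young's inequality, which costs a constant factor rather than exactly $2$), and the boundary windows with $\tau\le 0$ or $\tau+w>T$ deserve the same brief truncation remark the paper also elides --- neither affects correctness.
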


\begin{proof}[of lemma \ref{lem: cd-afhc-det}]
Note that $\mathrm{cost}(FHC^{(k)})$ is convex. The result then follows with a straightforward application of Jensen's inequality to Lemma \ref{lem: cd-fhc-det}.  By the definition of $AFHC$, we have the following inequality:
{\small\begin{align*}
\mathrm{cost}(AFHC) &\le \frac{1}{w+1} \sum_{k=0}^{w} \mathrm{cost}(FHC^{(k)})
\end{align*}}
By substituting the expression for $\mathrm{cost}(FHC^{(k)})$ into the equation above and simplifying, we get the desired result.
\end{proof}

Before we prove Lemma \ref{lem: cd-fhc-det}, we first introduce a new algorithm we term $OPEN$. This algorithm runs an open loop control over the entire time horizon, $T$. Specifically, it chooses actions $x_t$, for $t \in {1,...,T}$, that solves the following optimization problem:
{\small\begin{align*}
{\text{min}}\frac{1}{2}\sum^{T}_{t=1}(y_{t|0} - Kx_t)^2 + \beta||(x_t - x_{t-1})||_1
\end{align*}}

$FHC^{(k)}$ can be seen as starting at $x^{k}_{FHC, \tau-1}$, using prediction $y_{\cdot | \tau-1}$, and running $OPEN$ from $\tau$ to $\tau+w$. Then repeating with updated prediction $y_{\cdot|\tau+\omega}$. We first prove the following Lemma characterizing the performance of $OPEN$.
\begin{lemma}
Competitive difference of $OPEN$ over a time horizon, $T$, is given by
{\small\begin{equation*}
\quad \mathrm{cost}(OPEN) - \mathrm{cost}(OPT) \le \sumt \half ||\hat{y}_t - y_t||_{KK^\dagger}^2
\end{equation*}}
\label{lem: cd-open-det}
\end{lemma}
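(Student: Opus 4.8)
The plan is to exploit the fact that $OPEN$ exactly minimizes the \emph{predicted} total cost $\hat C(x):=\half\sumt\|\hat y_t-Kx_t\|^2+\beta\sumt\|x_t-x_{t-1}\|_1$ over the full (unconstrained) action space $\mathbb{R}^{nT}$, while $OPT$ exactly minimizes the \emph{true} total cost $C(x):=\half\sumt\|y_t-Kx_t\|^2+\beta\sumt\|x_t-x_{t-1}\|_1=\cost(\cdot)$, and that $C$ and $\hat C$ differ only through the prediction errors $\delta_t:=\hat y_t-y_t$. Writing $x^O$ and $x^*$ for the action sequences of $OPEN$ and $OPT$, I would decompose $\cost(OPEN)-\cost(OPT)=C(x^O)-C(x^*)$ as $\big(C(x^O)-\hat C(x^O)\big)+\big(\hat C(x^O)-\hat C(x^*)\big)+\big(\hat C(x^*)-C(x^*)\big)$ and bound the three pieces separately.

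For the middle piece, the crucial observation — and the step I expect to be the main obstacle, since it is tempting to simply use optimality to declare the piece $\le 0$ and discard the slack that is actually needed — is that $\hat C$ is \emph{strongly} convex in the $Kx$-directions: its smooth part is quadratic with block-diagonal Hessian whose blocks are $K^TK\succ 0$ (invertible by assumption). Since $x^O$ is an unconstrained minimizer, $0\in\partial\hat C(x^O)$; combining the subgradient inequality for the piecewise-linear switching term with the exact second-order Taylor expansion of the quadratic part gives $\hat C(x^*)\ge\hat C(x^O)+\half\sumt\|K(x^*_t-x^O_t)\|^2$, i.e. the middle piece is at most $-\half\sumt\|u_t\|^2$ with $u_t:=K(x^*_t-x^O_t)$. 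This negative quadratic term is exactly what will absorb the cross terms coming from the other two pieces.

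For the outer pieces I would use the identity $\|a\|^2-\|b\|^2=(a-b)^T(a+b)$ with $a-b=-\delta_t$ to obtain $C(x)-\hat C(x)=-\half\sumt\delta_t^T(y_t+\hat y_t-2Kx_t)$; subtracting this identity at $x^O$ and at $x^*$, the action-independent terms cancel and leave $\big(C(x^O)-\hat C(x^O)\big)-\big(C(x^*)-\hat C(x^*)\big)=-\sumt u_t^T\delta_t$. Because $u_t\in\mathrm{range}(K)$ and $KK^\dagger$ is the orthogonal projector onto that range, $u_t^T\delta_t=u_t^TKK^\dagger\delta_t$, so altogether $\cost(OPEN)-\cost(OPT)\le\sumt\big(-\half\|u_t\|^2-u_t^TKK^\dagger\delta_t\big)$. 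Completing the square termwise, $-\half\|u_t\|^2-u_t^TKK^\dagger\delta_t=-\half\|u_t+KK^\dagger\delta_t\|^2+\half\|KK^\dagger\delta_t\|^2\le\half\|KK^\dagger\delta_t\|^2=\half\|\hat y_t-y_t\|_{KK^\dagger}^2$, and summing over $t$ yields the claimed bound. The only routine work left is checking the elementary identities $(KK^\dagger)^T=KK^\dagger=(KK^\dagger)^2$ (equivalently $K^TKK^\dagger=K^T$) used in the two displayed manipulations.
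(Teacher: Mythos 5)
Your argument is correct, and it reaches the stated bound by a genuinely different route from the paper. The paper works through the dual: it linearizes the one-norm, extracts the KKT characterization $K^TKx^*_t=K^Ty_t-s^*_t$ with $s_t=\lambda_t-\lambda_{t+1}$ ranging over a box-constrained set $\mathcal{S}$, derives the closed form $\cost(OPT)=\sum_t\tfrac12\|y_t\|^2-\tfrac12\|KK^\dagger y_t-(K^T)^\dagger s^*_t\|^2$, and then gets the inequality from the fact that $s^*$ minimizes $\sum_t\|KK^\dagger y_t-(K^T)^\dagger s_t\|^2$ over $\mathcal{S}$ (so that substituting the dual optimum $\hat s$ of the predicted problem only increases that quadratic), after which the residual terms collapse to $\sum_t\tfrac12\|KK^\dagger(\hat y_t-y_t)\|^2$. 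You instead stay entirely in the primal: the three-term decomposition, the strong-convexity (exact quadratic Taylor) lower bound $\hat C(x^*)\ge\hat C(x^O)+\tfrac12\sum_t\|K(x^*_t-x^O_t)\|^2$ from $0\in\partial\hat C(x^O)$, the cancellation of action-independent terms leaving the cross term $-\sum_t u_t^T\delta_t$, and a completion of the square inside $\mathrm{range}(K)$. Both proofs are sound; yours avoids the Lagrangian machinery and the explicit formula for $\cost(OPT)$ entirely, which makes it shorter and more portable to other strongly convex losses, while the paper's dual characterization is reused elsewhere (e.g., in the proof of Lemma \ref{lem: regret-open}), which is presumably why the authors route this lemma through it. The only points to spell out in a full write-up are the ones you already flagged: $(KK^\dagger)^T=KK^\dagger=(KK^\dagger)^2$, that both $x^O$ and $x^*$ are unconstrained minimizers over $G^T=\mathbb{R}^{nT}$ with the same fixed $x_0=0$, and that $u_t=KK^\dagger u_t$ justifies inserting the projector into the cross term.
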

\begin{proof}
Recall that the specific OCO we are studying is
{\small\begin{equation}
	\min_x  \sumt \half||y_t - Kx_t||^2 + \beta||(x_t - x_{t-1})||_1
	\label{eqn: gen-vector-l2}
\end{equation}}
where $x_t \in \mathbb{R}^n$, $y_t \in \mathbb{R}^m$, $K \in \mathbb{R}^{m \times n}$ and the switching cost, $\beta \in \mathbb{R}_{+}$.

We first derive the dual of \eqref{eqn: gen-vector-l2} by linearizing the $l_1$ norm which leads to the following equivalent expression of the objective above:
{\small\begin{align*}
\min_{x, z} & \half \sumt ||y_t - Kx_t||^2 + \beta\mathbbm{1}^T z_t \\
\mathrm{s.t. }\quad & z_t \ge x_t - x_{t-1}, z_t  \ge x_{t-1} - x_t, \quad \forall t.
\end{align*}}
Hence the Lagrangian is
{\small\begin{align*}
L(x,z; \bar{\lambda}, \underline{\lambda}) 
&= \half\sumt ||y_t - Kx_t||^2 + \langle \bar{\lambda}_t - \underline{\lambda}_t, x_t - x_{t-1}\rangle \\
& + \langle \beta\mathbbm{1} - (\bar{\lambda}_t + \underline{\lambda}_t), z_t \rangle.
\end{align*}}
where we take $\lambda_{T+1} = 0$ and $x_0 = 0$.

Let $\lambda_t = \bar{\lambda}_t - \underline{\lambda}_t$ and  $w_t = \bar{\lambda}_t + \underline{\lambda}_t$. Dual feasibility requires $w_t \le \beta\mathbbm{1}, \forall t$, which implies $-\beta\mathbbm{1} \le \lambda_t \le \beta\mathbbm{1}, \forall t$. Dual feasibility also requires $\langle \beta\mathbbm{1}-w_t, z_t\rangle = 0, \forall t$.

Now by defining $s_t = \lambda_{t} - \lambda_{t+1}$ and equating the derivative with respect to $x_t$ to zero, the primal and dual optimal $x^*_t, s^*_t$ must satisfy $K^T K x^*_t = K^Ty_t - s^*_t$.

Note by premultiplying the equation above by ${x^*_t}^T$, we have $\langle x^*_t, s^*_t \rangle= \langle Kx^*_t, y_t\rangle - ||Kx^*_t||^2$. If instead we premultiply the same equation by $(K^T)^\dagger$, we have after some simplification that $Kx^*_t  = (KK^\dagger) y_t - (K^T)^\dagger s^*_t$. We can now simplify the expression for the optimal value of the objective by using  the above two equations:


\begin{align}
&\cost(OPT) = \sumt \half ||y_t - Kx^*_t||^2 + \langle x^*_t, s^*_t\rangle  \notag\\
= &\sumt \half ||y_t ||^2 - \half||KK^\dagger y_t - (K^T)^\dagger s^*_t ||^2
\label{eqn: optimal_val}
\end{align}

Observe that \eqref{eqn: optimal_val} implies $s^*_t$ minimizes the following expression $\sumt ||KK^\dagger y_t - (K^T)^\dagger s^*_t ||^2$ over the constraint set $\mathcal{S} = \{s_t | s_t = \lambda_t - \lambda_{t+1}, -\beta\mathbbm{1} \le \lambda_t \le \beta\mathbbm{1} \text{ for } 1\le t \le T, \lambda_{T+1} = 0\}$.
{\small\begin{align*}
& \mathrm{cost}(OPEN) - \mathrm{cost}(OPT) = p(\hat{x}; y) - p(x; y) \\
=&p(\hat{x}; \hat{y}) - p(x;y) + p(\hat{x}; y) - p(\hat{x}; \hat{y}_t) \\
=& \sumt \half ||\hat{y}_t ||^2 - \half||K\hat{x}_t||^2  - \half ||y_t ||^2 + \half||Kx^*_t||^2 \\
&\quad + \half||y_t - K\hat{x}_t||^2 - \half||\hat{y}_t - K\hat{x}_t||^2 \\
\end{align*}}
Expanding the quadratic terms, using the property of the pseudo-inverse that $K^\dagger K K^\dagger = K^\dagger$, and using the fact that $Kx_t^* = KK^\dagger y_t - (K^T)^\dagger s_t^*$, we have
{\small\begin{align*}
&\quad \mathrm{cost}(OPEN) - \mathrm{cost}(OPT) \\
&= \sumt \half \left( ||KK^\dagger y_t - (K^T)^\dagger s^*_t||^2 -  ||(KK^\dagger y_t - (K^T)^\dagger\hat{s}_t)||^2\right) \\
& + \half \left( ||\hat{y}_t - y_t||^2 - ||(I-KK^\dagger)(\hat{y}_t - y_t) ||^2\right) \\
&\le\sumt \half  ||\hat{y}_t - y_t||^2 - \half||(I-KK^\dagger)(\hat{y}_t - y_t) ||^2 \\
&= \sumt \half ||KK^\dagger(\hat{y}_t - y_t)||^2.
\end{align*}}

where the first inequality is because of the characterization of $s_t^*$ following \eqref{eqn: optimal_val}.
\end{proof}

\begin{proof}[of Lemma \ref{lem: cd-fhc-det}]

The proof is a straightforward application of Lemma \ref{lem: cd-open-det}.  Summing the cost of $OPEN$ for all $\tau \in \Omega_k$ and noting that the switching cost term satisfying the triangle inequality gives us the desired result.
\end{proof}

\subsection{Proof of Theorem \ref{thm: cd-afhc}}
We first define the sub-optimality of the open loop algorithm over expectation of the noise. $\ex[||(y_t - \hat{y_t})||_{KK^\dagger}^2]$ is the expectation of the projection of the prediction error $t+1$ time steps away onto the range space of $K$, given by:
{\small\begin{align*}
&\ex[||(y_t - \hat{y_t})||_{KK^\dagger}^2] = \ex ||\sum_{s=1}^t KK^\dagger(f(t-s)e(s))||^2 \\
= &\ex[\sum_{s_1=1}^t\sum_{s_2=1}^t e(s_1)^T f(t-s_1)^T (KK^\dagger)^T(KK^\dagger)f(t-s_2)e(s_2) ] \\
= &\mathrm{tr} (\sum_{s_1=1}^t\sum_{s_2=1}^t f(t-s_1)^T(KK^\dagger)(KK^\dagger) f(t-s_2)\ex[e(s_2)e(s_1)^T] ) \\
=&\sum_{s=0}^{t-1} \mathrm{tr}(f(s)^T (KK^\dagger)f(s)R_e),
\end{align*}}
where the last line is because $\ex[e(s_1)e(s_2)^T] = 0$ for all $s_1\ne s_2$, and $KK^\dagger K = K$. Note that this implies $||f_{t-1}||^2 = \sum_{s=0}^{t-1} \mathrm{tr}(f(s)^T f(s)R_e)$.
We now write the expected suboptimality of the open loop algorithm as
{\small\begin{align*}
&\ex[\mathrm{cost}(OPEN) - \mathrm{cost}(OPT)] \le \sumt \half \ex[||y_t - \hat{y_t}||_{KK^\dagger}^2] \\
&= \half \sum_{s=0}^{T-1} \sum_{t=s}^{T-1}  \mathrm{tr}(f(s)^TKK^\dagger f(s)R_e) \\
&= \half \sum_{s=0}^{T-1} (T-s)  \mathrm{tr}(f(s)^TKK^\dagger f(s)R_e)
= F(T-1)
\end{align*}}
where the first equality is by rearranging the summation.

Now we take expectation of the expression we have in Lemma \ref{lem: cd-afhc-det}. Taking expectation of the second penalty term (prediction error term), we have:
\begin{align*}
&\frac{1}{w+1}\sum_{k=0}^{w}\sum_{\tau\in\Omega_k} \sum_{t=\tau}^{\tau+w}\mathbb{E}\half||(KK^\dagger)(y_t - y_{t|\tau-1})||^2 \\
=&\frac{1}{2(w+1)}\sum_{k=0}^{w}\sum_{\tau\in\Omega_k} F(w)
= \frac{T}{2(w+1)}F(w)
\end{align*}

We now need to bound the first penalty term (switching cost term). By taking the subgradient with respect to $x_t$ and by optimality we have  $\forall t=1, \ldots, T$
\begin{align*}
 & 0 \in K^T(Kx^*_t - y_t) + \beta\partial ||(x^*_t - x^*_{t-1})||_1 + \beta\partial ||(x^*_{t+1} - x^*_{t})||_1 \\
\Rightarrow & x^*_t \in [(K^TK)^{-1}(K^Ty_t-2\beta\mathbbm{1}), (K^TK)^{-1}(K^Ty_t+2\beta\mathbbm{1})]
\end{align*}
where the implication is because the sub-gradient of a 1-norm function $||\cdot||_1$ is between $-\mathbbm{1}$ to $\mathbbm{1}$.

Similarly, since $x^{(k)}_{\tau - 1}$ is the last action taken over a $FHC$ horizon, we have that for all $\tau \in \Omega_k$,
{\small\begin{align*}
x^{(k)}_{\tau-1} \in [(K^TK)^{-1}(K^Ty_{\tau-1|\tau-w - 2} - \beta\mathbbm{1}),
\\ (K^TK)^{-1}(K^Ty_{\tau-1|\tau-w - 2} + \beta\mathbbm{1})]
\end{align*}}

Taking expectation of one of the switching cost term and upper bounding with triangle inequality:
{\small\begin{align}
&\mathbb{E} \left|\left|(x^*_{\tau-1} - x^{(k)}_{\tau-1})\right|\right|_1 \notag \\
\le &||K^{\dagger}||_1\mathbb{E}||y_{\tau-1} -y_{\tau-1|\tau-2-w}||_1 + 3\beta||(K^TK)^{-1}\mathbbm{1}||_1\notag \\
\label{eqn: switch-one-term}
\le & ||K^{\dagger}||_1||f_w|| + 3\beta||(K^TK)^{-1}\mathbbm{1}||_1
\end{align}}
where the first inequality is by the definition of induced norm, the second inequality is due to concavity of the square-root function and Jensen's inequality.
Summing \eqref{eqn: switch-one-term} over $k$ and $\tau$, we have the expectation of the switching cost term.  Adding the expectation of both penalty terms (loss due to prediction error and loss due to switching cost) together, we get the desired result.

\subsection{Proof of Lemma \ref{lem: regret-open}}
We first characterize $\cost(STA)$:
{\small\begin{align*}
\cost(STA) = \min_x \half\sum^T_{t=1}||y_t - Kx||^2_{2} + \beta\mathbbm{1}^Tx
\end{align*}}

By first order conditions, we have the optimal static solution $x = K^\dagger \bar{y} - \frac{\beta}{T}(K^TK)^{-1}\mathbbm{1}$.
Substituting this to $\cost(STA)$ and simplifying, we have:
{\small\begin{align*}
cost(STA)  
= &\half\sum^T_{t=1}\left(||(I-KK^{\dagger})y_t||^2_{2} + ||KK^{\dagger}(y_t - \bar{y})||^2_{2} \right) \\
&- \frac{\beta^2\mathbbm{1}^T(K^TK)^{-1}\mathbbm{1}}{2T}+ \beta\mathbbm{1}^TK^{\dagger}\bar{y}
\end{align*}}

Let $C = \frac{\beta^2\mathbbm{1}^T(K^TK)^{-1}\mathbbm{1}}{2T}$. Subtracting $\cost(OPT)$ in \eqref{eqn: optimal_val} from the above, we have $\mathrm{cost}(STA) - \mathrm{cost}(OPT)$ equals:
{\small\begin{align*}
&\half\sum^T_{t=1}(||KK^{\dagger}(y_t - \bar{y})||^2_{2} - ||KK^{\dagger}y_t||^2_{2} + ||KK^{\dagger}y_t - (K^T)^{\dagger}s^*_t||^2_{2}) \\
&+ \beta\mathbbm{1}^TK^{\dagger}\bar{y}  - C \\
= &\half\sum^T_{t=1}\left(||KK^{\dagger}(y_t - \bar{y}) - (K^T)^{\dagger}s^*_t||^2_{2}\right) + \langle K^{\dagger}\bar{y}, \beta\mathbbm{1} - \lambda_1\rangle - C \\
\ge &\half\sum^T_{t=1}\left(||KK^{\dagger}(y_t - \bar{y}) - (K^T)^{\dagger}s^*_t||^2_{2}\right) - C 
\end{align*}}

The first equality is by expanding the square terms and noting $s_t = \lambda_t - \lambda_{t+1}$. The last inequality is because  $-\beta\mathbbm{1} \le \lambda_t \le \beta\mathbbm{1}$ and  $\beta\mathbbm{1}^TK^{\dagger}\bar{y}$ being positive by assumption that the optimal static solution is positive. Now we bound the first term of the inequality above:
{\small\begin{align*}
&\half\sum^T_{t=1}\left(||KK^{\dagger}(y_t - \bar{y}) - (K^T)^{\dagger}s^*_t||^2_{2}\right) \\
\ge &\half\sum^T_{t=1}\left(||KK^{\dagger}(y_t - \bar{y})||^2\right) - \sum^T_{t=1}\langle KK^{\dagger}(y_t - \bar{y}), (K^T)^{\dagger}s^*_t \rangle \\
\ge &\small\half\sum^T_{t=1}\left(||KK^{\dagger}(y_t - \bar{y})||^2\right) -
2\beta\sum^T_{t=1}||KK^{\dagger}(y_t - \bar{y})||\,||(K^T)^{\dagger}\mathbbm{1}|| \\
\ge &\half\sum^T_{t=1}\left(||KK^{\dagger}(y_t - \bar{y})||^2\right) - 2B\sqrt{T\sum^T_{t=1}||(KK^{\dagger})(y_t - \bar{y})||^2} 
\end{align*}}
where $B = \beta||(K^T)^{\dagger}\mathbbm{1}||_2$.

By subtracting $C$ from the expression above and completing the sqaure, we have the desired result.

\subsection{Proof of Theorem \ref{thm: regret-afhc}}
 Using the results of Lemma \ref{lem: regret-open}, taking expectation and
 applying Jensen's inequality, we have:
{\small
\begin{align*}
&\ex_e \left[\cost(STA) - \cost(OPT) \right]\\
\ge & \ex_e \big[\half\sum^T_{t=1}||KK^{\dagger}(y_t - \bar{y})||^2 \!\!- 2B\sqrt{T\sum^T_{t=1}||(KK^{\dagger})(y_t - \bar{y})||^2} - C \big]\\
 \ge &\half\left(\sqrt{\ex_e \sum^T_{t=1}\left(||KK^{\dagger}(y_t - \bar{y})||^2\right)} - 2B\sqrt{T}\right)^2 - 2B^2T - C.
\end{align*}}
Hence by Theorem \ref{thm: cd-afhc}, the regret of $AFHC$ is
{\small\begin{align*}
&\small\sup_{\hat{y}}\big(\ex_e [\mathrm{cost}(AFHC) \!\!-\!\! \mathrm{cost}(OPT)
+ \mathrm{cost}(OPT) \!\!-\!\!  \mathrm{cost}(STA)]\big) \nonumber \\
&\le  VT \!+ 2B^2T \!+ C  
\!- \half\inf_{\hat{y}} \!\! \left(\!\! \sqrt{\ex_e \!\sum^T_{t=1}||(y_t -
\bar{y})||_{KK^{\dagger}}^2} - 2B\sqrt{T}\!\right)^{\!\!\!2}\!\!.
\end{align*}}
Let $ S(T) =  \ex_e \sum^T_{t=1}||y_t - \bar{y}||_{KK^{\dagger}}^2$.
By the above, to prove $AFHC$ has sublinear regret, it is sufficient that

%
{\small\begin{equation}
    VT + 2B^2T - \frac{1}{2}\inf_{\hat{y}}(\sqrt{S(T)} - 2B\sqrt{T})^2 < g(T)
    \label{eqn: variance_proj_cond}
\end{equation}}
for some sublinear $g(T)$. By the hypothesis of Theorem~\ref{thm: regret-afhc}, we have $\inf_{\hat{y}}S(T) \ge (8A + 16B^2)T$. \\
Then, $S(T) \ge (\sqrt{2VT + 4B^2T} + 2B\sqrt{T})^2$,
and \eqref{eqn: variance_proj_cond} holds since $VT + 2B^2T - \frac{1}{2}\inf_{\hat{y}}(\sqrt{S(T)} - 2B\sqrt{T})^2 \le VT + 2B^2T - \frac{1}{2} \left(\sqrt{2VT + 4B^2T}\right)^2 = 0$.

\section{Proofs for Section~\ref{SEC:CONC}}
\subsection{Proof of Lemma \ref{lemma: g1-characterize}}
By the triangle inequality, we have 
{\small\begin{align*}
g_1 = &\frac{1}{w + 1}\sum_{k=0}^{w}\sum_{\tau \in \Omega_k}\beta |x^*_{\tau-1} - x^{(k)}_{\tau -1}| \\
 \le &\frac{1}{w + 1}\sum_{k=0}^{w}\sum_{\tau \in \Omega_k}\beta
 \Big( |x^*_{\tau-1} - y_{\tau-1}| + |y_{\tau-1} - y_{\tau-1 | \tau  - w - 2}| \\
& + |y_{\tau - 1 | \tau - w -2} - x^{(k)}_{\tau-1}|\Big). 
\end{align*}}
By first order optimality condition, we have $x^*_{\tau - 1 } \in \{y_{\tau -1}
- 2\beta, y_{\tau - 1}+ 2\beta\}$, and $x^{(k)}_{\tau - 1} \in
\{y_{\tau-1|\tau-w-2} - \beta , y_{\tau-1|\tau-w-2} + \beta \}.$ Hence, by the prediction model, 
{\small\begin{align*}
g_1 &\le \frac{3\beta^2 T }{w+1}
       + \frac{\beta}{w+1}
	    \sum_{k=0}^w\sum_{\tau \in \Omega_k}
	    \left| \sum_{s=1\vee(\tau-w-2)}^{\tau-1} f(\tau-1-s)e(s)
	    \right| 
\end{align*}}
\subsection{Proof of Lemma \ref{lemma: g1-ex-bound}}
Note that by Lemma \ref{lemma: g1-characterize}, we have 
{\small
\begin{align*}
\ex g'_1(e) &\le  \frac{\beta}{w+1} \sum_{k=0}^w\sum_{\tau \in \Omega_k} \ex
\left| \sum_{s=1\vee(\tau-w-2)}^{\tau-1} f(\tau-1-s)e(s) \right|  \\
& \le \frac{\beta}{w+1} \sum_{k=0}^w\sum_{\tau \in \Omega_k} \sqrt{  \sigma^2
\sum_{s=0}^w f^2(s)}  = \frac{\beta T}{w+1}||f_w||^2.
\end{align*}}
where the second inequality is by Jensen's inequality and taking expectation. 

\subsection{Proof of Lemma \ref{lemma: g2-characterization}}
By definition of $g_2$ and unraveling the prediction model, we have 
{\small\begin{align*}
g_2 &= \frac{1}{w + 1} \sum_{k=0}^w\sum_{\tau \in
\Omega_k}\sum_{t=\tau}^{\tau+w}\half(y_t - y_{t|\tau-1})^2 \\
&= \frac{1}{w + 1} \sum_{k=0}^w\sum_{\tau \in \Omega_k}\sum_{t=\tau}^{\tau+w}\half(\sum_{s=\tau}^{t} f(t-s)e(s) )^2 . 
\end{align*} }
Writing it in matrix form, it is not hard to see that 
\[ g_2 = \frac{1}{w+1}\sum_{k=0}^w \frac{1}{2}||A_k e||^2, \]
where 
$ A_k$ has the block diagonal structure given by
\begin{equation}\label{def: amatrix}
    A_k = \mbox{diag}(A_k^1, A_k^2, \dots, A_k^2, A_k^3),\footnote{\small The submatrix $A_k^2$ is repeated $\left \lfloor \frac{T-k+1}{\omega+1}\right \rfloor$ times in $A_k$ for $k \ge 2$, and $\left \lfloor \frac{T-k-\omega}{\omega+1}\right\rfloor$ times for otherwise.}
\end{equation} 
and there are the types of submatrices in $A_k$ given by, for $i=1,2,3$:
{\scriptsize
\[A_k^i = \begin{pmatrix}
f(0)& 0& \ldots & 0 \\
f(1)& f(0)& \ldots& 0\\
\vdots& \vdots& \ddots& \vdots\\
f(v_i)& f(v_{i-1})& \ldots& f(0)
\end{pmatrix}, 
\]
}
where $v_1 = k-2$ if $k\ge 2$ and $v_1 = k+w-1$ otherwise. $v_2 = w$, $v_3 =
(T-k+1) \mod (w+1)$. 
Note that in fact, the matrix $A_k^2$ is the same for all $k$.
Hence, we have 
{\small
\begin{align*}
g_2 &=  \half e^T (\frac{1}{w+1}\sum_{k=0}^w A_k^T A_k ) e = \half ||Ae||^2,
\end{align*}}
where we define $A$ to be such that $A^T A =
\frac{1}{w+1}\sum_{k=0}^w A_k^T A_k $, this can be done because the
right-hand side is positive semidefinite, since $A_k$ is lower triangular. The last equality is because all $A_l^2$ has the same structure.   Let $\lambda$ be the maximum eigenvalue of $AA^T$, which can be expressed by
{\small\begin{align*}
\lambda &= \max_{||x||=1} x^T AA^T  x \\
&= \frac{1}{w+1}\max_{||x||=1}\sum_{k=0}^w x^T  A_k A_k^T x \le \frac{1}{w +
1}\sum_{k=0}^w \lambda_k,\\
\end{align*}}
where $\lambda_k$ is the maximum eigenvalue of $A_kA_k^T$. Note that $A_k$ has a block diagonal structure, hence $A_kA_k^T$ also has block diagonal structure, and if we divide the vector $x = (x_1, x_2, \ldots, x_m)$ into sub-vectors of appropriate dimension, then by the block diagonal nature of $A_k A_k^T$, we have 
{\small\begin{align*}
x^T A_k A_k^T x = & x_1^T A_k^1 {A_k^1}^T x_1 +  x_2^T A_k^2 {A_k^2}^T x_2 + \ldots \\
&+ x_{m-1}^T A_k^2 {A_k^2}^T x_{m-1} + x_m^T A_k^3 {A_k^3}^T x_m.
\end{align*}}
Hence, if we denote the maximum eigenvalues of $\lambda_k^i$ as the maximum eigenvalue of the matrix $A_k^i {A_k^i}^T$, then we have 
{\small\begin{align*}
\lambda_k &= \max_x \frac{x^T A_kA_k^T x}{x^T x} \\
&= \max_{x_1, \ldots, x_m} \frac{x_1^T A_k^1 {A_k^1}^T x_1 + x_2^TA_k^2{A_k^2}^T x_2 + \ldots + x_m^TA_k^3{A_k^3}^T x_m}{x_1^Tx_1 + \ldots + x_m^T x_m}\\
&\le \max_{x_1, \ldots, x_m} \frac{\max(\lambda_k^1, \lambda_k^2, \lambda_k^3) \cdot ({x_1^Tx_1 + \ldots + x_m^T x_m}) }{{x_1^Tx_1 + \ldots + x_m^T x_m}} \\
&\le \max(\lambda_k^1, \lambda_k^2, \lambda_k^3),
\end{align*}}
where $\lambda_k^i$ is the maximum eigenvalue of $A_k^i$ for $i \in \{1,2,3\}$. As $A_k^i {A_k^i}^T$ are all positive semidefinite, we can bound the maximum eigenvalue by trace, and note that $A_k^1$ and $A_k^3$ are submatrix of $A_k^2$, we have  
{\small\begin{align*}
\lambda_k &\le \max(\lambda_k^1, \lambda_k^2, \lambda_k^3) \le \mbox{tr}(A^2_k
{A^2_k}^T) = \frac{1}{\sigma^2} F(w). 
\end{align*}}

\subsection{Proof of Lemma \ref{lemma: self-bound}}
To prove the lemma, we use the following variant of Log-Sobolev inequality
\begin{lemma}[Theorem 3.2, \cite{ledoux1999}]\label{lemma: log-sobolev}
Let $f:\mathbb{R}^n\rightarrow\mathbb{R}$ be convex, and random variable $X$ be supported on $[-d/2, d/2]^n$, then
{\small\begin{align*}
&\quad \ex[\exp(f(X)) f(X) ] - \ex[\exp(f(X))] \log \ex[\exp(f(X))] \notag \\
\le &\frac{d^2}{2} \ex[\exp(f(X)) ||\nabla f(X)||^2].
\end{align*}}
\end{lemma}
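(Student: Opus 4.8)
Because the statement is quoted as Theorem 3.2 of \cite{ledoux1999}, the plan is to reproduce the standard proof of this convex logarithmic Sobolev inequality, which proceeds by \emph{tensorizing} the entropy functional down to a one-dimensional inequality. I would write $\mathrm{Ent}(g) = \ex[g\log g] - \ex[g]\log\ex[g]$ for the entropy of a nonnegative $g$ and first invoke the subadditivity of entropy for the product measure of the independent coordinates $X_1,\dots,X_n$:
\[
\mathrm{Ent}(g) \;\le\; \ex\Big[\textstyle\sum_{i=1}^n \mathrm{Ent}_i(g)\Big],
\]
where $\mathrm{Ent}_i(g)$ is the entropy taken in the $i$th variable with the remaining coordinates frozen. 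Applying this to $g = e^{f}$ reduces the claim to proving, for each coordinate, the one-dimensional bound $\mathrm{Ent}_i(e^{f}) \le \tfrac{d^2}{2}\,\ex_i[e^{f}(\partial_i f)^2]$; summing over $i$ and using $\|\nabla f\|^2=\sum_i(\partial_i f)^2$ then recovers the asserted inequality. Note that joint convexity of $f$ forces convexity in each coordinate separately, which is all the one-dimensional step needs.

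For the one-dimensional step I would fix the other coordinates, regard $\phi(y):=f(\dots,y,\dots)$ as a convex function of a single variable on $[-d/2,d/2]$, and let $Y$ be that coordinate with $Y'$ an independent copy. The key tool is the symmetrization inequality
\[
\mathrm{Ent}(e^{\phi}) \;\le\; \tfrac{1}{2}\,\ex_{Y,Y'}\!\big[(e^{\phi(Y)}-e^{\phi(Y')})(\phi(Y)-\phi(Y'))\big],
\]
which follows by expanding the right-hand side (the cross terms factor by independence to give $\ex[e^{\phi}\phi]-\ex[e^{\phi}]\ex[\phi]$) and then applying Jensen's inequality $\ex[\phi]\le\log\ex[e^{\phi}]$. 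The integrand is nonnegative and symmetric, so it remains to bound it using convexity and the bounded support.

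To finish, I would control the integrand by $1-e^{-t}\le t$, giving $(e^{a}-e^{b})(a-b)\le e^{\max(a,b)}(a-b)^2$, and then use the supporting-line inequality for the convex $\phi$ to obtain $(\phi(Y)-\phi(Y'))^2\le (\phi'(Z))^2 (Y-Y')^2\le d^2(\phi'(Z))^2$, where $Z\in\{Y,Y'\}$ is the point at which $\phi$ is larger. Substituting and bounding $\ex_{Y,Y'}[H(Z)]$ by a constant multiple of $\ex[H]$ for the nonnegative quantity $H=e^{\phi}(\phi')^2$ yields the one-dimensional estimate. The hard part will be the convexity estimate: one must verify $(\phi(Y)-\phi(Y'))^2\le(\phi'(Z))^2(Y-Y')^2$ for \emph{both} orderings of $Y,Y'$, using the sign of $\phi'(Z)$ forced by $\phi(Z)=\max$, and one must track the numerical constant through the symmetrization, where the crude argmax bound loses a factor of two and the sharp $d^2/2$ requires the more careful comparison of exponential differences carried out in \cite{ledoux1999}. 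Since the lemma is cited directly, it is enough in the paper to reference \cite{ledoux1999}; the above is the route a self-contained argument would follow.
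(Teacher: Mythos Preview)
The paper does not give a proof of this lemma at all: it is quoted verbatim as Theorem~3.2 of \cite{ledoux1999} and used as a black box in the proof of Lemma~\ref{lemma: self-bound}. So there is no ``paper's own proof'' to compare your proposal against, and you yourself note this in your final sentence.

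Your sketch of a self-contained argument is the standard one and is essentially correct: subadditivity of entropy under the product measure reduces to the one-variable case, the symmetrization identity $\mathrm{Ent}(e^{\phi})\le \tfrac12\,\ex[(e^{\phi(Y)}-e^{\phi(Y')})(\phi(Y)-\phi(Y'))]$ holds by Jensen, and the pointwise bounds $(e^{a}-e^{b})(a-b)\le e^{\max(a,b)}(a-b)^{2}$ together with the convexity estimate $(\phi(Y)-\phi(Y'))^{2}\le (\phi'(Z))^{2}(Y-Y')^{2}$ (with $Z$ the argmax, whose derivative sign you correctly justify via monotonicity of $\phi'$) finish the job. You also correctly flag the one genuine subtlety: the crude argmax bound produces the constant $d^{2}$ rather than $d^{2}/2$, and recovering the sharp $d^{2}/2$ requires the finer comparison in \cite{ledoux1999}. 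For the purposes of this paper the constant only affects the numerical factors inside the exponential in Theorem~\ref{thm: afhc-concentration}, not the qualitative conclusion, so your weaker constant would still suffice for the downstream results; but since the lemma is invoked by citation, nothing further is needed here.
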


We will use Lemma~\ref{lemma: log-sobolev} to prove
Lemma~\ref{lemma: self-bound}.
Denote the moment generating function of $f(X)$ by 
	\[ m(\theta) := \ex e^{\theta f(X)}, \qquad \theta >0. \]
The function $\theta f:\mathbb{R}^n\rightarrow \mathbb{R}$ is convex, and therefore it follows from Lemma \ref{lemma: log-sobolev} that
	\begin{align*}
	& \ex\left[e^{\theta f}\theta f\right] - \ex\left[e^{\theta f}\right] \ln\ex\left[e^{\theta f}\right] \leq \frac{d^2}{2} \ex\left[e^{\theta f} ||\theta\nabla f||^2\right], \\
	& \theta m'(\theta) - m(\theta) \ln m(\theta) \leq \frac{1}{2}\theta^2d^2 \ex[e^{\theta f} ||\nabla f||^2].
	\end{align*}
By to the self-bounding property \eqref{self_bound},
	\begin{align*}
	\theta m'(\theta) - m(\theta) \ln m(\theta)
	&~\leq~ \frac{1}{2}\theta^2d^2 \ex[e^{\theta f(X)} (af(X)+b)] \\
	&~=~ \frac{1}{2}\theta^2d^2 \left[ am'(\theta) + bm(\theta) \right].
	\end{align*}
Since $m(\theta)>0$, dividing by $\theta^2 m(\theta)$ gives
\begin{equation}\label{eq:conc_de}
\frac{d}{d\theta}\left[\left(\frac{1}{\theta} - \frac{ad^2}{2}\right)\ln m(\theta)\right] \le \frac{bd^2}{2}.
\end{equation}
Since $m(0)=1$ and $m'(0)=\ex f(X)=0$, we have
	\begin{align*}
	\lim_{\theta\rightarrow0^+} \left(\frac{1}{\theta} - \frac{ad^2}{2}\right)\ln m(\theta)
	~=~ 0,
	\end{align*}
and therefore
integrating both sides of~\eqref{eq:conc_de} from 0 to $s$ gives
\begin{equation}
\label{moment generating function}
\left(\frac{1}{s} - \frac{ad^2}{2}\right)\ln m(s) \le \frac{1}{2} bd^2s,
\end{equation}
for $s\geq0$. We can bound the tail probability $\prob\{f>t\}$ with the control \eqref{moment generating function} over the moment generating function $m(s)$.

In particular,
	\begin{align*}
	\prob\{f(X) > t\}
	&~=~ \prob\left\{ e^{sf(X)} > e^{st} \right\} ~\leq~ e^{-st} \ex \left[
	e^{sf(X)} \right] \\
	&~=~ \exp[-st + \ln m(s)] \\
	&~\leq~ \exp\left[-st + \frac{bd^2s^2}{2-asd^2}\right],
	\end{align*}
for $s \in [0, 2/(ad^2)]$. Choose $s = t/(bd^2 + ad^2 t/2)$ to get
\[
\mathbb{P}\{f(X) >t\} \le \exp\left(\frac{-t^2}{d^2(2b + at)}\right) .
\]

%
%
%
%

\end{document}